\newtheorem{lemma}{Lemma}
\newtheorem{theorem}{Theorem}
\newtheorem{definition}{Definition}
\newtheorem{corollary}{Corollary}
\newcommand{\norm}[1]{\left\lVert#1\right\rVert}
\DeclareMathOperator*{\argmin}{arg\,min}
\title{Beyond Online Balanced Descent: An Optimal Algorithm for Smoothed Online Optimization}
\author[*1]{Gautam Goel}
\author[*2,1]{Yiheng Lin}
\author[*1]{Haoyuan Sun}
\author[1]{Adam Wierman}
\affil[1]{California Institute of Technology} 
\affil[2]{Institute for Interdisciplinary Information Sciences, Tsinghua University}
\begin{document}

\renewcommand\footnotemark{}
\thanks{Gautam Goel, Yiheng Lin, and Haoyuan Sun contributed equally to this work. This work was supported by NSF grants  AitF-1637598 and CNS-1518941, with additional support for Gautam Goel provided by an Amazon AWS AI Fellowship.}

\maketitle

\begin{abstract}
  
We study online convex optimization in a setting where the learner seeks to minimize the sum of a per-round hitting cost and a movement cost which is incurred when changing decisions between rounds.  We prove a new lower bound on the competitive ratio of any online algorithm in the setting where the costs are $m$-strongly convex and the movement costs are the squared $\ell_2$ norm. This lower bound shows that no algorithm can achieve a competitive ratio that is  $o(m^{-1/2})$ as $m$ tends to zero.  No existing algorithms have competitive ratios matching this bound, and we show that the state-of-the-art algorithm, Online Balanced Decent (OBD), has a competitive ratio that is $\Omega(m^{-2/3})$. We additionally propose two new algorithms, Greedy OBD (G-OBD) and Regularized OBD (R-OBD) and prove that both algorithms have an $O(m^{-1/2})$ competitive ratio. The result for G-OBD holds when the hitting costs are quasiconvex and the movement costs are the squared $\ell_2$ norm, while the result for R-OBD holds when the hitting costs are $m$-strongly convex and the movement costs are Bregman Divergences.  Further, we show that R-OBD simultaneously achieves constant, dimension-free competitive ratio and sublinear regret when hitting costs are strongly convex.

%\yiheng{I make a minor fix here. I change "quasi-convex" to "quasiconvex" to match the name in "Convex Optimization" book.}\adam{good catch}

\end{abstract}

\section{Introduction}

%\adam{some of the references cite the arxiv version when the conference version is available.  We should fix that.  One example is our AIStats paper.}

We consider the problem of Smoothed Online Convex Optimization (SOCO), a variant of online convex optimization (OCO) where the online learner pays a movement cost for changing actions between rounds. More precisely, we consider a game where an online learner plays a series of rounds against an adaptive adversary. In each round, the adversary picks a convex cost function $f_t: \mathbb{R}^d \rightarrow \mathbb{R}_{\geq 0}$ and shows it to the learner. After observing the cost function, the learner chooses an action $x_t$ and pays a \textit{hitting cost} $f_t(x_t)$, as well as a \textit{movement cost} $c(x_t, x_{t-1})$, which penalizes the online learner for switching points between rounds. 

%This form of SOCO was originally proposed in the context of dynamic power management in data centers, and has since seen a wealth of applications, from speech animation to geographical load balancing \cite{kim2015decision}, \cite{joseph2012jointly}, \cite{kim2014real}, \cite{lin2012online}, and more recently applications in control \cite{goel2017thinking}, \cite{goel2018smoothed}, smart grid \adam{add cites}, online logistic regression \cite{goel2018smoothed}, and smoothed online maximum likelihood estimation \cite{goel2018smoothed}. 

SOCO was originally proposed in the context of dynamic power management in data centers \cite{lin2012online}.  Since then it has seen a wealth of applications, from speech animation to management of electric vehicle charging \cite{kim2015decision, joseph2012jointly, kim2014real}, and more recently applications in control \cite{goel2017thinking, goel2018smoothed} and power systems \cite{li2018using, badieionline}. SOCO has been widely studied in the machine learning community with the special cases of online logistic regression and smoothed online maximum likelihood estimation receiving recent attention \cite{goel2018smoothed}.

Additionally, SOCO has connections to a number of other important problems in online algorithms and learning.   Convex Body Chasing (CBC), introduced in \cite{friedman1993convex},  is a special case of SOCO \cite{bubeck2018competitively}. The problem of designing competitive algorithms for Convex Body Chasing has attracted much recent attention. e.g.  \cite{bubeck2018competitively, bansa2018nested, argue2019nearly}. SOCO can also be viewed as a continuous version of the Metrical Task System (MTS) problem (see \cite{borodin1992optimal, bartal1997polylog, blum2000line}). A special case of MTS is the celebrated $k-$server problem, first proposed in \cite{manasse1990competitive}, which has received significant attention in recent years (see \cite{bubeck2018k, buchbinder2019k}).

Given these connections, the design and analysis of algorithms for SOCO and related problems has received considerable attention in the last decade. SOCO was first studied in the scalar setting in \cite{lin2013dynamic}, which used SOCO to model dynamic ``right-sizing'' in data centers and gave a 3-competitive algorithm. A 2-competitive algorithm was shown in \cite{bansal20152}, also in the scalar setting, which matches the lower bound for online algorithms in this setting \cite{antoniadis2017tight}. Another rich line of work studies how to design competitive algorithms for SOCO when the online algorithm has access to predictions of future cost functions (see \cite{lin2012online, li2018using, chen2015online, chen2016using}).

Despite a large and growing literature on SOCO and related problems, for nearly a decade the only known constant-competitive algorithms that did not use predictions of future costs were for one-dimensional action spaces.  In fact, the connections between SOCO and Convex Body Chasing highlight that, in general, one cannot expect dimension-free constant competitive algorithms due to a $\Omega(\sqrt{d})$ lower bound (see \cite{friedman1993convex, chen2018smoothed}).  However, recently there has been considerable progress moving beyond the one-dimensional setting for large, important classes of hitting and movement costs. 

A breakthrough came in 2017 when \cite{chen2018smoothed} proposed a new algorithm, Online Balanced Descent (OBD), and showed that it is constant competitive in all dimensions in the setting where the hitting costs are locally polyhedral and movement costs are the $\ell_2$ norm. The following year, \cite{goel2018smoothed} showed that OBD is also constant competitive, specifically $3+O(1/m)$-competitive, in the setting where the hitting costs are $m$-strongly convex and the movement costs are the squared $\ell_2$ norm. Note that this setting is of particular interest because of its importance for online regression and LQR control (see \cite{goel2018smoothed}).  

While OBD has proven to be a promising new algorithm, at this point it is not known whether OBD is \emph{optimal} for the competitive ratio, or if there is more room for improvement. This is because there are no non-trivial lower bounds known for important classes of hitting costs, the most prominent of which is the class of strongly convex functions. 

\textbf{Contributions of this paper.} In this paper we prove the first non-trivial lower bounds on SOCO with strongly convex hitting costs, both for general algorithms and for OBD specifically.  These lower bounds show that OBD is not optimal and there is an order-of-magnitude gap between its performance and the general lower bound.  Motivated by this gap and the construction of the lower bounds we present two new algorithms, both variations of OBD, which have competitive ratios that match the lower bound.  More specifically, we make four main contributions in this paper.

First, we prove a new lower bound on the performance achievable by any online algorithm in the setting where the hitting costs are $m$-strongly convex and the movement costs are the squared $\ell_2$ norm. In particular, in Theorem \ref{GeneralLowerT1}, we show that as $m$ tends to zero, any online algorithm must have competitive ratio at least $\Omega(m^{-1/2})$. 

Second, we show that the state-of-the-art algorithm, OBD, cannot match this lower bound. More precisely, in Theorem \ref{OBDLowerT1} we show that, as $m$ tends to zero, the competitive ratio of OBD is $\Omega(m^{-2/3})$, an order-of-magnitude higher than the lower bound of $\Omega(m^{-1/2})$. This immediately begs the question: can any online algorithm close the gap and match the lower bound?

Our third contribution answers this question in the affirmative.  In Section \ref{section:NewAlgorithm}, we propose two novel algorithms, Greedy Online Balanced Descent (G-OBD) and Regularized Online Balanced Descent (R-OBD), which are able to close the gap left open by OBD and match the $\Omega(m^{-1/2})$ lower bound. 
Both algorithms can be viewed as ``aggressive" variants of OBD, in the sense that they chase the minimizers of the hitting costs more aggressively than OBD. In Theorem \ref{OBD_PLUS_T1} we show that G-OBD matches the lower bound up to constant factors for quasiconvex hitting costs (a more general class than $m$-strongly convex).  In Theorem \ref{MainT1} we show that R-OBD has a competitive ratio that \emph{precisely matches the lower bound, including the constant factors}, and hence can be viewed as an optimal algorithm for SOCO in the setting where the costs are $m$-strongly convex and the movement cost is the squared $\ell_2$ norm. Further, our results for R-OBD hold not only for squared $\ell_2$ movement costs; they also hold for movement costs that are Bregman Divergences, which commonly appear throughout information geometry, probability, and optimization.

Finally, in our last section we move beyond competitive ratio and additionally consider regret.  We prove in Theorem \ref{R-OBD-RegretT1} that R-OBD can simultaneously achieve bounded, dimension-free competitive ratio and sublinear regret in the case of $m$-strongly convex hitting costs and squared $\ell_2$ movement costs. This result helps close a crucial gap in the literature.  Previous work has shown that it not possible for any algorithm to simultaneously achieve both a constant competitive ratio and sublinear regret in general SOCO problems  \cite{daniely2019competitive}.  However, this was shown through the use of linear hitting and movement costs. Thus, the question of whether it is possible to simultaneously achieve a dimension-free, constant competitive ratio and sublinear regret when hitting costs are strongly convex has remained open.  The closest previous result is from \cite{chen2018smoothed}, which showed that OBD can achieve either constant competitive ratio or sublinear regret with locally polyhedral cost functions depending on the ``balance condition'' used; however both cannot be achieved simultaneously.  Our result (Theorem \ref{R-OBD-RegretT1}), shows that R-OBD can simultaneously provide a constant competitive ratio and sublinear regret for strongly convex cost functions when the movement costs are the squared $\ell_2$ norm.

%In this section, we describe the SOCO model and some application examples.

%SOCO has a large number of applications, for example, speech animation \cite{kim2015decision}, video streaming \cite{joseph2012jointly}, management of electric vehicle charging \cite{kim2014real}, geographical load balancing \cite{lin2012online},and multi-timescale control \cite{goel2017thinking}.

%SOCO can be viewed as a continuous version of Metrical Task System (MTS) problem (see \cite{borodin1992optimal}, \cite{bartal1997polylog}, \cite{blum2000line}). A special case of MTS problem is k-server problem (see \cite{manasse1990competitive}, \cite{bubeck2018k}).

\section{Model \& Preliminaries} 
\label{section:Pre}

%In this section, we formally introduce Smoothed Online Convex Optimization (SOCO), define the various performance metrics we use to compare online algorithms, and describe the previous state-of-the-art algorithm, Online Balanced Descent.

%\subsection{Smoothed Online Convex Optimization}

An instance of Smoothed Online Convex Optimization (SOCO) consists of a convex action set $\mathcal{X} \subset \mathbb{R}^d$, an initial point $x_0 \in \mathcal{X}$, a sequence of non-negative convex cost functions $f_1 \ldots f_t: \mathbb{R}^d \rightarrow \mathbb{R}_{\geq 0}$, and a movement cost $c: \mathbb{R}^d \times \mathbb{R}^d \rightarrow \mathbb{R}_{\geq 0}$. In every round, the environment picks a cost function $f_t$ (potentially adversarily) for an online learner. After observing the cost function, the learner chooses an action $x_t \in \mathbb{R}^d$ and pays a cost that is the sum of the \emph{hitting cost}, $f_t(x_t)$, and the \emph{movement cost}, a.k.a., switching cost, $c(x_t, x_{t-1})$.  The goal of the online learner is to minimize its total cost over $T$ rounds:
$cost(ALG) = \sum_{t=1}^T f_t(x_t) + c(x_t, x_{t-1}).$

We emphasize that it is the movement costs that make this problem interesting and challenging; if there were no movement costs, $c(x_t, x_{t-1}) = 0$, the problem would be trivial, since the learner could always pay the optimal cost simply by picking the action that minimizes the hitting cost in each round, i.e., by setting $x_t = \argmin_x f_t(x)$. The movement cost couples the cost the learner pays across rounds, which means that the optimal action of the learner depends on unknown future costs.

There is a long literature on SOCO, both focusing on algorithmic questions, e.g.,  \cite{goel2018smoothed, lin2013dynamic, bansal20152, chen2018smoothed}, and applications, e.g., \cite{kim2015decision, joseph2012jointly, kim2014real, lin2012online}. The variety of applications studied means that a variety of assumptions about the movement costs have been considered.  Motivated by applications to data center capacity management, movement costs have often been taken as the $\ell_1$ norm, i.e., $c(x_1, x_2) = \|x_1 - x_2 \|_1$, e.g. \cite{lin2013dynamic, bansal20152}.  However, recently, more general norms have been considered and the setting of squared $\ell_2$ movement costs has gained attention due to its use in online regression problems and connections to LQR control, among other applications (see \cite{goel2017thinking, goel2018smoothed, astrom2010feedback}).  

In this paper, we focus on the setting of the squared $\ell_2$ norm, i.e. $c(x_2, x_1) = \frac{1}{2} \|x_2 - x_1 \|_2^2$; however, we also consider a generalization of the $\ell_2$ norm in Section \ref{subsection:R-OBD} where $c$ is the Bregman divergence. Specifically, we consider $c(x_t, x_{t-1}) = D_h(x_t || x_{t-1}) = h(x_t) - h(x_{t-1}) - \langle \nabla h(x_{t-1}), x_t - x_{t-1} \rangle$, where both the potential $h$ and its Fenchel Conjugate $h^*$ are differentiable. Further, we assume that $h$ is $\alpha$-strongly convex and $\beta$-strongly smooth with respect to an underlying norm $\norm{\cdot}$. Definitions of each of these properties can be found in the appendix. 

Note that the squared $\ell_2$ norm is itself a Bregman divergence, with $\alpha = \beta = 1$ and $\norm{\cdot} = \norm{\cdot}_2$, $D_h(x_t || x_{t-1}) = \frac{1}{2}\norm{x_t - x_{t-1}}_2^2$.  However, more generally, when $h(y) = \sum_i y_i \ln y_i$ with domain $\Delta_n = \{y \in [0, 1]^n \mid \sum_i y_i = 1\}$, $D_h(x_t || x_{t-1})$ is the Kullback-Liebler divergence (see \cite{bansal2017potential}). Further,  $h$ is $\frac{1}{2\ln 2}$-strongly convex and $\frac{1}{\delta \ln 2}$-strongly smooth in the domain $\mathcal{X} = P_\delta = \{y \in [0, 1]^n \mid \sum_i y_i = 1, y_i \geq \delta\}$ (see \cite{chen2018smoothed}).  This extension is important given the role Bregman divergence plays across optimization and information theory, e.g., see \cite{azizan2018stochastic, murata2004information}. 

Like for movement costs, a variety of assumptions have been made about hitting costs.  In particular, because of the emergence of pessimistic lower bounds when general convex hitting costs are considered, papers typically have considered restricted classes of functions, e.g., locally polyhedral \cite{chen2018smoothed} and strongly convex \cite{goel2018smoothed}.  In this paper, we focus on hitting costs that are $m$-strongly convex; however our results in Section \ref{subsection:G-OBD} generalize to the case of quasiconvex functions.

\textbf{Competitive Ratio and Regret.} The primary goal of the SOCO literature is to design online algorithms that (nearly) match the performance of the offline optimal algorithm.  The performance metric used to evaluate an algorithm is typically the \textit{competitive ratio} because the goal is to learn in an environment that is changing dynamically and is potentially adversarial.  The competitive ratio is the worst-case ratio of total cost incurred by the online learner and the offline optimal costs.  The cost of the offline optimal is defined as the minimal cost of an algorithm if it has full knowledge of the sequence of costs $\{ f_t \}$, i.e.
$cost(OPT) = \min_{x_1 \ldots x_T}\sum_{t=1}^T f_t(x_t) + c(x_t, x_{t-1}).$ Using this, the \emph{competitive ratio} is defined as $ \sup_{f_1 \ldots f_T} cost(ALG)/cost(OPT).$ 

Note that another important performance measure of interest is the \emph{regret}. In this paper, we study a generalization of the classical regret called the $L$-constrained regret, which is defined as follows. The \emph{$L$-(constrained) dynamic regret} of an online algorithm $ALG$ is $\rho_L(T)$ if for all sequences of cost functions $f_t, \cdots, f_T$, we have $cost(ALG) - cost(OPT(L)) \leq \rho_L(T)$ where $OPT(L)$ is the cost of an $L$-constrained offline optimal solution, i.e., one with movement cost upper bounded by $L$: $OPT(L) = \min_{x \in \mathcal{X}^T}\sum_{t=1}^T f_t(x_t) + c(x_t, x_{t-1})\text{ subject to }\sum_{t=1}^T c(x_t, x_{t-1}) \leq L.$

As the definitions above highlight, the regret and competitive ratio both compare with the cost of an offline optimal solution, however regret constrains the movement allowed by the offline optimal.  The classical notion of regret focuses on the static optimal ($L=0$), but relaxing that to allow limited movement bridges regret and the competitive ratio since, as $L$ grows, the $L$-constrained offline optimal approaches the offline (dynamic) optimal.  Intuitively, one can think of regret as being suited for evaluating learning algorithms in (nearly) static settings while the competitive ratio as being suited for evaluating learning algorithms in dynamic settings. 

%In Section \ref{section:vs}, we also consider the performance of online algorithms with respect to \textit{L-constrained regret}. This is defined as the additive difference between the total cost incurred by the online learner and the optimal offline costs, where the offline optimal player is subject to a constraint that its movement costs cannot exceed $L$. More formally, the $L$-constrained regret is defined as $$ cost(ALG) - cost(OPT(L))$$ where 
%\begin{equation*}
%    cost(OPT(L)) = \min_{x\in X^T}\sum_{t=1}^T f_t(x_t) + d(x_t, x_{t-1}) \text{ subject to }\sum_{t=1}^T d(x_t, x_{t-1}) \leq L.
%\end{equation*}

\textbf{Online Balanced Descent.} The state-of-the-art algorithm for SOCO is Online Balanced Descent (OBD). OBD, which is formally defined in Algorithm \ref{alg:OBD}, uses the operator $\Pi_K(x): \mathbb{R}^d \to K$ to denote the $\ell_2$ projection of $x$ onto a convex set $K$; and this operator is defined as $\Pi_K(x) = \argmin_{y\in K}\norm{y - x}_2$.
Intuitively, it works as follows. In every round, OBD projects the previously chosen point $x_{t-1}$ onto a carefully chosen level set of the current cost function $f_t$. The level set is chosen so that the hitting costs and movement costs are ``balanced": in every round, the movement cost is at most a constant $\gamma$ times the hitting cost. The balance helps ensure that the online learner is matching the offline costs.  Since neither cost is too high, OBD ensures that both are comparable to the offline optimal. The parameter $\gamma$ can be tuned to give the optimal competitive ratio and the appropriate level set can be efficiently selected via binary search.

\begin{algorithm}[t]
\caption{Online Balanced Descent (OBD)}\label{alg:OBD}
\begin{algorithmic}[1]
\Procedure{OBD}{$f_t, x_{t-1}, \gamma$}\Comment{Procedure to select $x_t$}
\State $v_t \gets \argmin_x f_t(x)$
\State Let $x(l) = \prod_{K_t^l}(x_{t-1})$. Initialize $l = f_t(v_t)$. Here $K_t^l = \{x | f_t(x) \leq l\}$.
\State Increase $l$. Stop when $c(x(l), x_{t-1}) = \gamma (l - f_t(v_t))$.
\State $x_t \gets x(l)$.
\State \textbf{return} $x_t$
\EndProcedure
\end{algorithmic}
\end{algorithm}

Implicitly, OBD can be viewed as a proximal algorithm with a dynamic step size \cite{Boyd14proximal}, in the sense that, like proximal algorithms, OBD iteratively projects the previously chosen point onto a level set of the cost function. Unlike traditional proximal algorithms, OBD considers several different level sets, and carefully selects the level set in every round so as to balance the hitting and movement costs. We exploit this connection heavily when designing Regularized OBD (R-OBD), which is a proximal algorithm with a special regularization term added to the objective to help steer the online learner towards the hitting cost minimizer in each round. 

OBD was proposed in \cite{chen2018smoothed}, where the authors show that it has a constant, dimension-free competitive ratio in the setting where the movement costs are the $\ell_2$ norm and the hitting costs are locally polyhedral, i.e. grow at least linearly away from the minimizer. This was the first time an algorithm had been shown to be constant competitive beyond one-dimensional action spaces. In the same paper, a variation of OBD that uses a different balance condition was proven to have  $O(\sqrt{TL})$ $L$-constrained regret for locally polyhedral hitting costs.  OBD has since been shown to also have a constant, dimension-free competitive ratio when movement costs are the squared $\ell_2$ norm and hitting costs are strongly convex, which is the setting we consider in this paper. However, up until this paper, lower bounds for the strongly convex setting did not exist and it was not known whether the performance of OBD in this setting is optimal or if OBD can simultaneously achieve sublinear regret and a constant, dimension-free competitive ratio.  %These two results stand in stark contrast to the lower bounds in \cite{friedman1993convex}, which show that the competitive ratio of any online algorithm must scale exponentially with respect to dimension if the hitting costs are allowed to be arbitrary convex functions, i.e. are not restricted to be polyhedral or strongly convex. 

\section{Lower Bounds} \label{section:LowerBounds}

Our first set of results focuses on lower bounding the competitive ratio achievable by online algorithms for SOCO.  While \cite{chen2018smoothed} proves a general lower bound for SOCO showing that the competitive ratio of any online algorithm is $\Omega(\sqrt{d})$, where $d$ is the dimension of the action space, there are large classes of important problems where better performance is possible. In particular, when the hitting costs are $m$-strongly convex, \cite{goel2018smoothed} has shown that OBD provides a dimension-free competitive ratio of $3+O(1/m)$.  However, no non-trivial lower bounds are known for the strongly convex setting. 

Our first result in this section shows a general lower bound on the competitive ratio of SOCO algorithms when the hitting costs are strongly convex and the movement costs are quadratic.  Importantly, there is a gap between this bound and the competitive ratio for OBD proven in \cite{goel2018smoothed}.  Our second result further explores this gap.  We show a lower bound on the competitive ratio of OBD which highlights that OBD cannot achieve a competitive ratio that matches the general lower bound. This gap, and the construction used to show it, motivate us to propose new variations of OBD in the next section.  We then prove that these new algorithms have competitive ratios that match the lower bound. 

%\subsection{A general lower bound}

We begin by stating the first lower bound for strongly convex hitting costs in SOCO.  

\begin{theorem}\label{GeneralLowerT1}
Consider hitting cost functions that are $m$-strongly convex with respect to $\ell_2$ norm and movement costs given by $\frac{1}{2}\norm{x_t - x_{t-1}}_2^2$.  Any online algorithm must have a competitive ratio at least $\frac{1}{2}\left(1 +  \sqrt{1 + \frac{4}{m}} \right)$.
\end{theorem}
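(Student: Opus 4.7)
The plan is to exhibit an adversarial SOCO instance whose competitive ratio attains $r = \frac{1}{2}\bigl(1 + \sqrt{1+4/m}\bigr)$. Since embedding a one-dimensional instance into $\mathbb{R}^d$ preserves lower bounds, I work in $d=1$ with $x_0 = 0$ and take all hitting costs of the canonical form $f_t(x) = \frac{m}{2}(x - v_t)^2$, which are the extremal $m$-strongly convex functions on the line. The centers $v_t$ are revealed to the learner one at a time but are chosen by the adversary after observing $x_{t-1}$; an oblivious schedule is too weak, because once the learner sees a fixed sequence it can essentially simulate OPT.

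The adversary's strategy will be designed so that (i) OPT can follow a cheap trajectory (for instance, moving preemptively to one specific target and absorbing a bounded total cost), while (ii) every response by ALG incurs per-round cost at least $r$ times the per-round contribution of OPT. The algebraic hint $r(r-1) = 1/m$ suggests that each round's cost splits into a movement term plus a hitting term in fixed proportion, with OPT bearing roughly a $1/r$ fraction of the total. I plan to formalize this with a one-step lemma: at any state $x_{t-1}$, the adversary can pick $v_t$ such that for every possible $x_t$ played by ALG,
\[
\tfrac{1}{2}(x_t - x_{t-1})^2 + \tfrac{m}{2}(x_t - v_t)^2 \;\ge\; r\,\bigl(\text{OPT's incremental cost attributable to }f_t\bigr).
\]
Summing over $t$ and bounding OPT from above by a concrete feasible trajectory then yields $\mathrm{CR} \ge r$.

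The technical heart of the argument is therefore to design the adversary's update rule for $v_t$ together with a matching OPT benchmark. A natural candidate is to place $v_t$ on the opposite side of the origin from $x_{t-1}$ at a distance calibrated to $x_{t-1}$, so that ALG either absorbs a large hitting cost (if it stays put) or a large movement cost (if it chases), while OPT, foreseeing the entire drift, moves once in a single direction and then remains stationary. The per-round adversary problem becomes a scalar maximization: choose $v_t$ to maximize the ratio of ALG's minimum achievable instantaneous cost to the OPT share. This scalar problem yields a fixed-point equation whose positive root is exactly $\frac{1}{2}(1+\sqrt{1+4/m})$.

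The main obstacle is this last step: showing that some single adversary schedule forces the per-round ratio to be uniformly at least $r$ against all learner responses, not merely in a convenient steady state. Boundary effects at the start of the game and highly non-greedy learner strategies (e.g., refusing to move at all) must be ruled out. I expect to handle this via a potential-function argument: define $\Phi_t$ to track the running deficit between $r\cdot \mathrm{OPT}_t$ and $\mathrm{ALG}_t$ plus a term penalizing the squared distance from $x_t$ to OPT's current position, show that an appropriate choice of $v_t$ guarantees $\Phi_t - \Phi_{t-1} \le 0$ for any learner response, and then take $T$ sufficiently large to conclude.
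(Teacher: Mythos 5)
Your setup (one dimension, canonical quadratic hitting costs, the identity $r(r-1)=1/m$) points in the right direction, but the proof has a genuine gap: the ``technical heart'' you identify --- a one-step lemma guaranteeing that some adversary choice of $v_t$ forces \emph{every} learner response to pay at least $r$ times OPT's incremental cost --- is never established, and I do not believe it can be in the form you state. The instance that actually attains this bound does not have a uniform per-round ratio. In the paper's construction the adversary plays $f_t(x)=\frac{m}{2}x^2$ for $t=1,\dots,n$ (so any sensible learner pays \emph{zero} in those rounds while OPT pays a positive amount --- per-round ratio $0$, not $r$) and then springs $f_{n+1}(x)=\frac{m'}{2}(x-1)^2$ with $m'\to\infty$, forcing the learner to absorb one large quadratic movement cost $\approx\frac12$ in a single round. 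The factor $\frac12\bigl(1+\sqrt{1+4/m}\bigr)$ emerges only through amortization of that one big cost against OPT's accumulated small costs, not through a round-by-round domination.

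Relatedly, your OPT benchmark (``moves once in a single direction and then remains stationary'' / ``moving preemptively to one specific target'') misses the mechanism that generates the constant: because movement cost is quadratic, OPT saves by \emph{splitting} the unit journey into $n$ small steps whose sizes are determined by minimizing $\sum_i \frac{m}{2}(x_i^*)^2+\sum_i\frac12(x_i^*-x_{i-1}^*)^2$; this yields the recursion $a_{n+1}=\frac{a_n+m}{a_n+m+1}$ whose fixed point $\frac{-m+\sqrt{m^2+4m}}{2}$ is exactly $1/r$. An OPT that moves once pays $\Theta(1)$ movement cost and cannot exhibit the gap. Finally, your handling of a learner that refuses to move is deferred to an unproven potential argument, whereas the standard fix (used in the paper) is simply to terminate the game the moment the learner moves early and compare against a stationary adversary with zero cost. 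To repair the proof you would need to (i) adopt the quiet-phase-then-steep-cost structure (an oblivious schedule suffices here, contrary to your claim that adaptivity is necessary), and (ii) actually solve the offline splitting problem, e.g.\ via the recursion above, to pin down OPT's cost as $n\to\infty$.
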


Theorem \ref{GeneralLowerT1} is proven in the appendix using an argument that leverages the fact that, when the movement cost is quadratic, reaching a target point via one large step is more costly than reaching it by taking many small steps. More concretely, to prove the lower bound we consider a scenario on the real line where the online algorithm encounters a sequence of cost functions whose minimizers are at zero followed by a  very steep cost function whose minimizer is at $x = 1$. Without knowledge of the future, the algorithm has no incentive to move away from zero until the last step, when it is forced to incur a large cost; however, the offline adversary, with full knowledge of the cost sequence, can divide the journey into multiple small steps.  

Importantly, the lower bound in Theorem \ref{GeneralLowerT1} highlights the dependence of the competitive ratio on $m$, the convexity parameter.  It shows that the case where online algorithms do the worst is when $m$ is small, and that algorithms that match the lower bound up to a constant are those for which the competitive ratio is $O(m^{-1/2})$ as $m\to0^+$.  Note that our results in Section \ref{section:NewAlgorithm} show that there exists online algorithms that precisely achieve the competitive ratio in Theorem \ref{GeneralLowerT1}. However, in contrast, the following shows that OBD cannot match the lower bound in Theorem \ref{GeneralLowerT1}. 

\begin{theorem}\label{OBDLowerT1}
Consider hitting cost functions that are $m$-strongly convex with respect to $\ell_2$ norm and a movement costs given by $\frac{1}{2}\norm{x_t - x_{t-1}}_2^2$. The competitive ratio of OBD is $\Omega(m^{-\frac{2}{3}})$ as $m\to 0^+$, for any fixed balance parameter $\gamma$.
\end{theorem}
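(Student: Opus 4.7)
The plan is to construct an explicit adversarial instance that exhibits a competitive ratio of $\Omega(m^{-2/3})$ for OBD. The starting point is to write OBD's dynamics in closed form: for hitting costs $f_t(x)=\frac{m}{2}(x-v_t)^2$ in one dimension, the balance condition reduces to the contraction $x_t = v_t + (x_{t-1}-v_t)/(1+\sqrt{\gamma m})$, so OBD retracts toward the current minimizer by the fixed factor $\sqrt{\gamma m}/(1+\sqrt{\gamma m})$ each round. Unrolling this recursion shows that OBD's trajectory is an exponentially weighted moving average of the past minimizers with a single fixed time constant of order $1/\sqrt{\gamma m}$.

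Building on this, I would design an adversarial cost sequence whose natural time scale is mismatched with $1/\sqrt{\gamma m}$ in a way that specifically penalizes OBD. In contrast to the construction of Theorem~\ref{GeneralLowerT1}, where OBD stays at the starting point for the first $T$ rounds and only incurs cost at a final steep surprise step (yielding the general $\Omega(m^{-1/2})$ bound), the new construction must force OBD to pay cost across many rounds. Concretely, I would parameterize the instance by the sequence length $T$ and the positions of the minimizers $v_t$, tune them as functions of $m$ (I anticipate $T$ of order $m^{-1/3}$, matching the regime where OBD has moved only partially along the sequence), and compute OBD's total cost directly using the explicit recursion. OPT's cost would then be upper bounded by exhibiting a concrete offline strategy, for instance anchoring at an appropriate intermediate point chosen with full lookahead to simultaneously manage the intermediate rounds and the final test function. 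Optimizing over the free parameters in the construction should produce the claimed $\Omega(m^{-2/3})$ ratio.

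The main obstacle is identifying the exact adversarial structure that pushes the ratio from $\Omega(m^{-1/2})$ up to $\Omega(m^{-2/3})$. The difficulty is that a generic lazy-algorithm argument, such as the one underlying Theorem~\ref{GeneralLowerT1}, already gives $\Omega(m^{-1/2})$ for OBD on that instance since OBD behaves exactly like any lazy algorithm there. To recover the extra factor of $m^{-1/6}$, the construction must specifically leverage OBD's fixed balance parameter $\gamma$: the adversary should set up an instance where the optimal trade-off between movement cost and hitting cost has a ratio different from $\gamma$, so that OBD, committed to the ratio $\gamma$ in every round, cannot match what OPT does using its knowledge of the entire sequence. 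Once the instance is pinned down, the remainder of the proof is a direct optimization using the closed-form OBD recursion and a transparent upper bound on OPT.
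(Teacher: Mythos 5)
Your closed-form description of OBD's one-dimensional dynamics is correct and matches the computation the paper actually uses, but your plan of a single drift-style construction cannot deliver the theorem, because the bound must hold \emph{for every fixed} $\gamma$ and different regimes of $\gamma$ require genuinely different adversarial instances. Work out your own construction for a small constant $\gamma$: with $\lambda = \sqrt{\gamma m}/(1+\sqrt{\gamma m})$, drifting minimizers $v_t = t$ for $n \sim 1/\lambda$ rounds followed by a steep test function forces OBD to pay $\Theta(1/\lambda^2) = \Theta(1/(\gamma m))$, while the adversary staying at the origin pays $\Theta(mn^3) = \Theta(\gamma^{-3/2}m^{-1/2})$; the ratio is $\Theta(\sqrt{\gamma/m})$, which for constant $\gamma$ is only $\Omega(m^{-1/2})$ --- no better than the generic lazy-algorithm bound of Theorem \ref{GeneralLowerT1}. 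No tuning of $T$ or of the minimizer positions within this one-dimensional template rescues the small-$\gamma$ case, and your anticipated scale $T \sim m^{-1/3}$ does not correspond to anything in the correct argument (the quantity of order $m^{-1/3}$ that actually matters is the critical value of $\gamma$, not the horizon).

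The missing idea is a \emph{second, complementary} construction that punishes OBD precisely when $\gamma$ is small, paired with a case split over $\gamma$. The paper's choice is inherently two-dimensional: a cost function that is $m\|x-C\|^2/2$ plus a steep penalty for leaving a line forces OBD's level-set projection to land at the far corner of a right triangle, so OBD travels the long leg (movement cost $\approx h^2/2$ with $h = \sqrt{\gamma m}\,\ell$) while the offline optimal slides a tiny distance $z = \sqrt{h^2+\ell^2}-\ell = O(\gamma m \ell)$ along the line; repeating this traces concentric circles and yields a per-round ratio $\Omega(1/(\gamma m))$. Combined with the drift bound $\Omega(\sqrt{\gamma/m})$ (and a separate easy argument giving $\Omega(1/m)$ when $\gamma = \Omega(1/m)$), the adversary always achieves $\max\left(\frac{C_1}{\gamma m},\, C_2\sqrt{\gamma/m}\right)$, whose minimum over $\gamma$ occurs at $\gamma \sim m^{-1/3}$ and equals $\Theta(m^{-2/3})$. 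Your proposal correctly senses that the extra $m^{-1/6}$ must come from OBD's commitment to a fixed balance ratio, but without the second construction (and without leaving dimension one) the argument stalls at $\Omega(m^{-1/2})$ for constant $\gamma$.
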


As we have discussed, OBD is the state-of-the-art algorithm for SOCO, and has been shown to provide a competitive ratio of $3 + O\left(1/m\right)$ \cite{goel2018smoothed}. However, Theorem \ref{OBDLowerT1} highlights a gap between OBD and the general lower bound.  If the lower bound is achievable (which we prove it is in the next section), this implies that OBD is a sub-optimal algorithm.

The proof of Theorem \ref{OBDLowerT1} gives important intuition about what goes wrong with OBD and how the algorithm can be improved.  Specifically, our proof of Theorem \ref{OBDLowerT1} considers a scenario where the cost functions have minimizers very near each other, but OBD takes a series of steps without approaching the minimizing points. The optimal is able to pay little cost and stay near the minimizers, but OBD never moves enough to be close to the minimizers.  Figure \ref{figure:OBDLowerMainBody} illustrates the construction, showing OBD moving along the circumference of a circle, while the offline optimal stays near the origin.  %The key takeaway from this scenario is the observation that, in order to improve the competitive ratio of OBD, a variation should move more aggressively towards the minimizer in each time-step. This observation drives the design of the novel algorithms we propose in Section \ref{section:NewAlgorithm}.

\begin{figure}[t]
    \begin{center}
        \includegraphics{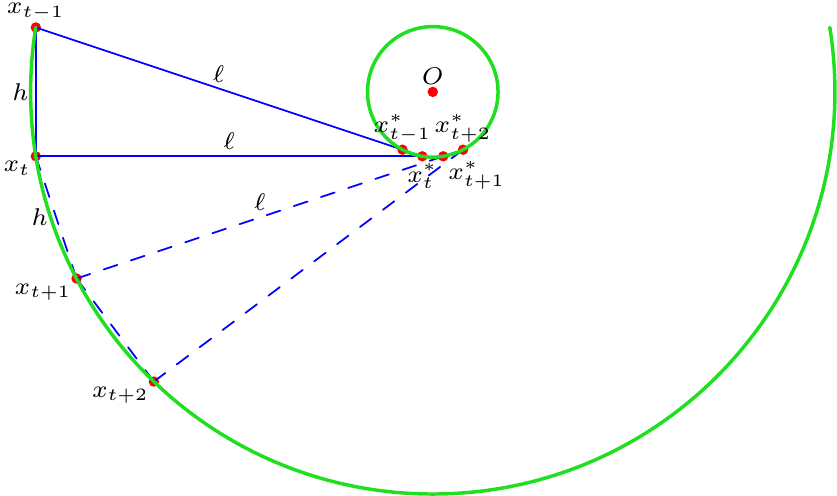}
    \end{center}
\caption{\emph{Counterexample used to prove Theorem \ref{OBDLowerT1}. In the figure, $\{x_t\}$ are the choices of OBD and $\{x_t^*\}$ are the choices of the offline optimal.}} %The hitting cost functions are constructed so that, at the end of every round, the relative positions of the OBD algorithm, the offline adversary, and the minimizer are fixed. Since OBD is memoryless, the function can be repeated and the positions of OBD and the offline adversary trace out a pair of concentric circles. \adam{Can we make this take up a little less space?}}}
\label{figure:OBDLowerMainBody}
\end{figure}
\section{Algorithms} \label{section:NewAlgorithm}
The lower bounds in Theorem \ref{GeneralLowerT1} and Theorem \ref{OBDLowerT1} suggest a gap between the competitive ratio of OBD and what is achievable via an online algorithm.  Further, the construction used in the proof of Theorem \ref{OBDLowerT1} highlights the core issue that leads to inefficiency in OBD.  In the construction, OBD takes a large step from $x_{t-1}$ to $x_t$, but the offline optimal, $x_t^*$, only decreases by a very small amount. This means that OBD is continually chasing the offline optimal but never closing the gap. In this section, we take inspiration from this example and develop two new algorithms that build on OBD but ensure that the gap to the offline optimal $x_t^*$ shrinks.  

How to ensure that the gap to the offline optimal shrinks is not obvious since, without the knowledge about the future, it is impossible to determine how  $x_t^*$ will evolve.  A natural idea is to determine an online estimate of $x_t^*$ and then move towards that estimate.  Motivated by the construction in the proof of Theorem \ref{OBDLowerT1}, we use the minimizer of the hitting cost at round $t$, $v_t$, as a rough estimate of the offline optimal and ensure that we close the gap to $v_t$ in each round. 

There are a number of ways of implementing the goal of ensuring that OBD more aggressively moves toward the minimizer of the hitting cost each round.  In this section, we consider two concrete approaches, each of which (nearly) matches the lower bound in Theorem \ref{GeneralLowerT1}.

The first approach, which we term Greedy OBD (Algorithm \ref{alg:OBD_PLUS}) is a two-stage algorithm, where the first stage applies OBD and then a second stage explicitly takes a step directly towards the minimizer (of carefully chosen size).  We introduce the algorithm and analyze its performance in Section \ref{subsection:G-OBD}. Greedy OBD is order-optimal, i.e. matches the lower bound up to constant factors, in the setting of squared $\ell_2$ norm movement costs and quasiconvex hitting costs. 

The second approach for ensuring that OBD moves aggressively toward the minimizer uses a different view of OBD.  In particular, Greedy OBD uses a \emph{geometric view} of OBD, which is the way OBD has been presented previously in the literature.  Our second view uses a ``local view'' of OBD that parallels the \emph{local view} of gradient descent and mirror descent, e.g., see \cite{bansal2017potential, hazan2016introduction}.  In particular, the choice of an action in OBD can be viewed as the solution to a per-round \emph{local} optimization.  Given this view, we ensure that OBD more aggressively tracks the minimizer by adding a regularization term to this local optimization which penalizes points which are far from the minimizer.  We term this approach Regularized OBD (Algorithm \ref{alg:Breg}), and study it in Section \ref{subsection:R-OBD}.  Note that Regularized OBD has a competitive ratio that precisely matches the lower bound, including the constant factors, when movement costs are Bregman divergences and hitting costs are $m$-strongly convex.  Thus, it applies for more general movement costs than Greedy OBD but less general hitting costs. 

\subsection{Greedy OBD} \label{subsection:G-OBD}
The formal description of Greedy Online Balanced Descent (G-OBD) is given in Algorithm \ref{alg:OBD_PLUS}. G-OBD has two steps each round. First, the algorithm takes a standard OBD step from the previous point $x_{t-1}$ to a new point $x_t'$, which is the projection of $x_{t-1}$ onto a level set of the current hitting cost $f_t$, where the level set is chosen to balance hitting and movement costs. G-OBD then takes an additional step directly towards the minimizer of the hitting cost, $v_t$, with the size of the step chosen based on the convexity parameter $m$.  G-OBD can be implemented efficiently using the same approach as described for OBD \cite{chen2018smoothed}.  G-OBD has two parameters $\gamma$ and $\mu$.  The first, $\gamma$, is the balance parameter in OBD and the second, $\mu$, is a parameter controlling the size of the step towards the minimizer $v_t$.  Note that the two-step approach of G-OBD is reminiscent of the two-stage algorithm used in \cite{bienkowski2018better}; however the resulting algorithms are quite distinct.

\begin{algorithm}[t]
\caption{Greedy Online Balanced Descent (G-OBD)}\label{alg:OBD_PLUS}
\begin{algorithmic}[1]
\Procedure{G-OBD}{$f_t, x_{t-1}$}\Comment{Procedure to select $x_t$}
\State $v_t \gets \argmin_x f_t(x)$
\State $x_t' \gets OBD(f_t, x_{t-1}, \gamma)$
% \State (For convenience, define $H_t' := f_t(x_t')$ and $M_t' := c(x_t', x_{t-1})$.)
\If{$\mu \sqrt{m} \geq 1$}
\State $x_t \gets v_t$
\Else
\State $x_t \gets \mu \sqrt{m} v_t + (1 - \mu \sqrt{m})x_t'$
\EndIf
\State \textbf{return} $x_t$
\EndProcedure
\end{algorithmic}
\end{algorithm}

While the addition of a second step in G-OBD may seem like a small change, it improves performance by an order-of-magnitude.  We prove that G-OBD asymptotically matches the lower bound proven in Theorem \ref{OBDLowerT1} not just for $m$-strongly convex hitting costs, but more broadly to quasiconvex costs.

\begin{theorem}\label{OBD_PLUS_T1}
Consider quasiconvex hitting costs such that $f_t(x) \geq f_t(v_t) + \frac{m}{2}\norm{x - v_t}_2^2$ and movement costs $c(x_t, x_{t-1}) = \frac{1}{2}\norm{x_t - x_{t-1}}_2^2$. G-OBD with $\gamma = 1, \mu = 1$ is an $O\left(m^{-1/2}\right)$-competitive algorithm as $m\to 0^+$. \end{theorem}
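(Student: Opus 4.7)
The plan is to combine a per-round cost bound with a Lyapunov-style potential argument. First, I would analyze the local structure of the G-OBD step. The OBD balance condition with $\gamma=1$ gives $\tfrac{1}{2}\|x_t' - x_{t-1}\|_2^2 = f_t(x_t') - f_t(v_t) =: M_t'$, and the quasi-strong-convexity bound $f_t(x) \geq f_t(v_t) + \tfrac{m}{2}\|x - v_t\|_2^2$ applied at $x_t'$ yields $\|x_t' - v_t\|_2^2 \leq \tfrac{1}{m}\|x_t' - x_{t-1}\|_2^2$. Since $x_t - x_t' = \sqrt{m}(v_t - x_t')$, the pull-toward-$v_t$ step has length at most $\|x_t' - x_{t-1}\|_2$, so $\|x_t - x_{t-1}\|_2 \leq 2\|x_t' - x_{t-1}\|_2$. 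Moreover, $x_t$ lies on the segment from $v_t$ to $x_t'$, both of which are in the level set $\{f_t \leq f_t(x_t')\}$, so quasiconvexity yields $f_t(x_t) \leq f_t(x_t')$. Combining these,
\begin{equation*}
A_t \ := \ f_t(x_t) + \tfrac{1}{2}\|x_t - x_{t-1}\|_2^2 \ \leq \ f_t(v_t) + 5M_t'.
\end{equation*}

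Next, I would introduce the potential $\Phi_t := \eta \|x_t - x_t^*\|_2^2$, where $x_t^*$ denotes the offline optimum at round $t$ and $\eta$ is a parameter to be tuned. The key contraction property of G-OBD is captured by the convex-combination identity $x_t = \sqrt{m}\,v_t + (1-\sqrt{m})x_t'$, which via Jensen's inequality gives $\|x_t - x_t^*\|_2^2 \leq (1-\sqrt{m})\|x_t' - x_t^*\|_2^2 + \sqrt{m}\|v_t - x_t^*\|_2^2$. Strong convexity at $x_t^*$ bounds $\|v_t - x_t^*\|_2^2 \leq \tfrac{2}{m}f_t(x_t^*)$, so the second term contributes at most $\tfrac{2}{\sqrt{m}}f_t(x_t^*)$. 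For the first, I would expand $\|x_t' - x_t^*\|_2$ via the triangle inequality through $x_{t-1}$ and $x_{t-1}^*$ and apply Young's inequality with parameter $\tau = \sqrt{m}/(1-\sqrt{m})$, chosen so that the contraction $(1-\sqrt{m})(1+\tau) = 1$ produces a clean telescoping recursion
\begin{equation*}
\|x_t - x_t^*\|_2^2 \ \leq \ \|x_{t-1} - x_{t-1}^*\|_2^2 + \tfrac{4}{\sqrt{m}}(M_t' + M_t^*) + \tfrac{2}{\sqrt{m}}f_t(x_t^*).
\end{equation*}

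The main obstacle is then to close out the $M_t'$ term in the resulting amortized inequality $A_t + \Phi_t - \Phi_{t-1} \leq (1+\tfrac{2\eta}{\sqrt m})f_t(x_t^*) + (5 + \tfrac{4\eta}{\sqrt m})M_t' + \tfrac{4\eta}{\sqrt m}M_t^*$. Since $v_t$ lies in the level set $K_t^l$ onto which $x_{t-1}$ is projected, non-expansiveness gives $M_t' \leq \tfrac{1}{2}\|x_{t-1} - v_t\|_2^2$; expanding this through $x_t^*$ and $x_{t-1}^*$ and using strong convexity on $\|v_t - x_t^*\|_2$ produces a bound of the form $M_t' \leq c_1 f_t(x_t^*)/m + c_2 M_t^* + c_3\|x_{t-1} - x_{t-1}^*\|_2^2$ for absolute constants $c_i$. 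To prevent the final $\|x_{t-1}-x_{t-1}^*\|_2^2$ term from breaking the telescoping, I would slightly tighten the Young parameter (taking $\tau$ marginally below $\sqrt{m}/(1-\sqrt{m})$) to produce an $-\Theta(\eta\sqrt{m})\|x_{t-1}-x_{t-1}^*\|_2^2$ surplus in the potential recursion, and then balance $\eta = \Theta(m^{-1/2})$ so that this surplus absorbs the $c_3$-term while keeping the coefficients on $f_t(x_t^*)$ and $M_t^*$ at $O(m^{-1/2})$. This is the technical heart of the argument; my expectation is that it requires a careful double-tuning of $\tau$ and $\eta$, and is the step most likely to need a refinement if the above bound on $M_t'$ turns out to be too loose. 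Finally, summing the resulting per-round inequality $A_t + \Phi_t - \Phi_{t-1} \leq O(m^{-1/2})\bigl(f_t(x_t^*) + M_t^*\bigr)$ over $t = 1, \ldots, T$ telescopes the potential; since $\Phi_0 = 0$ (taking $x_0 = x_0^*$) and $\Phi_T \geq 0$, we conclude $\mathrm{cost}(\text{G-OBD}) \leq O(m^{-1/2}) \cdot \mathrm{cost}(\text{OPT})$ as $m \to 0^+$.
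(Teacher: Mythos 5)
Your setup is correct up through the amortized inequality: the per-round bound $A_t \le f_t(v_t) + 5M_t'$, the Jensen decomposition of $\|x_t - x_t^*\|_2^2$ along the convex combination $x_t = \sqrt{m}\,v_t + (1-\sqrt{m})x_t'$, and the Young-tuned recursion are all valid, and they match the spirit of the paper's argument (which also uses the potential $\eta\|x_t - x_t^*\|_2^2$, with $\eta = 4$). The gap is exactly at the step you flagged, and it cannot be repaired by any double-tuning of $\tau$ and $\eta$. The coefficient of $M_t'$ in your amortized inequality is $5 + \Theta(\eta/\sqrt{m})$, whereas the largest negative surplus you can extract by tightening the Young parameter is $\Theta(\eta\sqrt{m})\|x_{t-1}-x_{t-1}^*\|_2^2$: the Jensen step only contracts by a factor $(1-\sqrt{m})$, so the coefficient of $\|x_{t-1}-x_{t-1}^*\|_2^2$ in the recursion cannot be pushed below roughly $1 - \sqrt{m}$. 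Substituting $M_t' \le c_1 f_t(x_t^*)/m + c_2 M_t^* + c_3\|x_{t-1}-x_{t-1}^*\|_2^2$ therefore creates a $\Theta(\eta/\sqrt{m})\cdot\|x_{t-1}-x_{t-1}^*\|_2^2$ term that must be absorbed by a surplus of only $\Theta(\eta\sqrt{m})$; this forces $m = \Omega(1)$ and fails precisely in the regime $m\to 0^+$ that the theorem addresses. Moreover, the same substitution places a coefficient of order $(\eta/\sqrt{m})\cdot(1/m) = \Theta(m^{-2})$ on $f_t(x_t^*)$ at your choice $\eta = \Theta(m^{-1/2})$, far above the target $O(m^{-1/2})$; no other choice of $\eta$ does better. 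The root cause is that the non-expansiveness bound $M_t' \le \tfrac12\|x_{t-1}-v_t\|_2^2$ discards all directional information about the OBD step relative to $x_t^*$, so your recursion pays for the worst case in which the algorithm moves orthogonally to, or away from, the optimal point.

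The paper closes the argument with a case split that your proposal lacks. When $H_t' \le H_t^*$, the balance condition itself gives $M_t' = \gamma H_t' \le H_t^*$, so the problematic $M_t'$ term is charged directly to the adversary's hitting cost and the $\Theta(\eta/\sqrt{m})$ coefficient lands harmlessly on $H_t^*$ (this is also why $\eta$ can remain an absolute constant). When $H_t' > H_t^*$, the adversary's point lies strictly below the level set onto which OBD projects, and the paper runs a coordinate-geometry argument in the plane spanned by $x_{t-1}$, $x_t'$, and $v_t$, using the position of $x_t^*$ relative to the projection hyperplane to show that the potential genuinely decreases by enough. To repair your proof you need an analogue of one of these two mechanisms --- either the balance-condition charge in the easy case, or a potential-decrease estimate exploiting the obtuse-angle property of the projection --- rather than the triangle-inequality bound on $M_t'$.
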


%The proof of Theorem \ref{OBD_PLUS_T1} is provided in Appendix \adam{add ref}. 

\begin{comment}
Motivated by the proof of Theorem 1 in \cite{goel2018smoothed}, we use potential argument to prove case $H_t' \leq H_t^*$ and case $H_t' > H_t^*$ separately. In other words, let the potential be $\phi(x_t, x_t^*) = \eta \norm{x_t - x_t^*}_2^2$, we prove that
$$H_t + M_t + \Delta \phi \leq C(H_t^* + M_t^*),$$
for some positive constant $C$. From this inequality, we can sum over all timesteps $t$ to yield that the competitive ratio is upper bounded by $C$:
\[ \sum_{t=0}^T H_t + M_t \le \sum_{t=0}^T H_t + M_t + \Delta\phi \le C \sum_{t=0}^T\left( H^*_t + M^*_t\right).\]
This proof technique is also used in the proof of Theorem \ref{MainT1}.

Although Theorem \ref{OBD_PLUS_T1} only gives an upper bound for the competitive ratio of G-OBD when $m\leq \frac{9}{64}$, it is sufficient to show that G-OBD is optimal in the order of m when $m\to 0^+$.

\end{comment}
\subsection{Regularized OBD} \label{subsection:R-OBD}
The G-OBD framework is based on the geometric view of OBD used previously in literature.  There are, however, two limitations to this approach. First, the competitive ratio obtained, while having optimal asymptotic dependence on $m$, does not not match the constants in the lower bound of Theorem \ref{GeneralLowerT1}. Second, G-OBD requires repeated projections, which makes efficient implementation challenging when the functions $f_t$ have complex geometry.   

Here, we present a variation of OBD based on a \emph{local view} that overcomes these limitations.  Regularized OBD (R-OBD) is computationally simpler and provides a competitive ratio that matches the constant factors in the lower bound in Theorem \ref{GeneralLowerT1}.  However, unlike G-OBD, our analysis of R-OBD does not apply to quasiconvex hitting costs. R-OBD is described formally in Algorithm \ref{alg:Breg}.  In each round, R-OBD picks a point that minimizes a weighted sum of the hitting and movement costs, as well as a regularization term which encourages the algorithm to pick points close to the minimizer of the current hitting cost function, $v_t=\argmin_{x} f_t(x)$. Thus, R-OBD can be implemented efficiently using two invocations of a convex solver. Note that R-OBD has two parameters $\lambda_1$ and $\lambda_2$ which adjust the weights of the movement cost and regularizer respectively. 

While it may not be immediately clear how R-OBD connects to OBD, it is straightforward to illustrate the connection in the squared $\ell_2$ setting.  In this case, computing $x_t = \argmin_x f_t(x) + \frac{\lambda_1}{2}\norm{x - x_{t-1}}_2^2$ is equivalent to doing a projection onto a level set of $f_t$, since the selection of the minimizer can be restated as the solution to $\nabla f_t(x_t) + \lambda_1(x_t - x_{t-1}) = 0$. Thus, without the regularizer, the optimization in R-OBD gives a local view of OBD and then the regularizer provides more aggressive movement toward the minimizer of the hitting cost.  

%\begin{algorithm}
%\caption{Regularized OBD \adam{describe using $c(\cdot)$ not $D_h$.}}\label{alg:Meta}
%\begin{algorithmic}[1]
%\Procedure{R-OBD}{$f_t, x_{t-1}$}\Comment{Procedure to select $x_t$}
%\State $v_t \gets \argmin_x f_t(x)$
%\State $x_t \gets \argmin_x (\text{Hitting Cost}) + \lambda_1 (\text{Moving Cost}) + \lambda_2 (\text{Distance to }v_t)$
%\State \textbf{return} $x_t$
%\EndProcedure
%\end{algorithmic}
%\end{algorithm}

\begin{algorithm}[t]
\caption{Regularized OBD (R-OBD)}\label{alg:Breg}
\begin{algorithmic}[1]
\Procedure{R-OBD}{$f_t, x_{t-1}$}\Comment{Procedure to select $x_t$}
\State $v_t \gets \argmin_x f_t(x)$
\State $x_t \gets \argmin_x f_t(x) + \lambda_1 c(x, x_{t-1}) + \lambda_2 c(x, v_t)$
\State \textbf{return} $x_t$
\EndProcedure
\end{algorithmic}
\end{algorithm}

Not only does the local view lead to a computationally simpler algorithm, but we prove that R-OBD  matches the constant factors in Theorem \ref{GeneralLowerT1}  precisely, not just asymptotically.  Further, it does this not just in the setting where movement costs are the squared $\ell_2$ norm, but also in the case where movement costs are Bregman divergences.

\begin{theorem}\label{MainT1}
Consider hitting costs that are $m-$strongly convex with respect to a norm $\norm{\cdot}$ and movement costs defined as $c(x_t, x_{t-1}) = D_h(x_t || x_{t-1})$, where $h$ is $\alpha$-strongly convex and $\beta$-strongly smooth with respect to the same norm. Additionally, assume $\{f_t\}, h$ and its Fenchel Conjugate $ h^*$ are differentiable. Then, R-OBD with parameters $1 \geq \lambda_1 > 0$ and $\lambda_2 \geq 0$ has a competitive ratio of
$\max \left(\frac{m + \lambda_2 \beta}{\lambda_1}\cdot \frac{1}{m}, 1 + \frac{\beta^2}{\alpha}\cdot \frac{\lambda_1}{\lambda_2 \beta + m}\right).$
If $\lambda_1$ and $\lambda_2$ satisfy $m + \lambda_2 \beta =  \frac{\lambda_1 m}{2}\left(1 + \sqrt{1 + \frac{4\beta^2}{\alpha m}}\right)$ then the competitive ratio is 
$\frac{1}{2}\left(1 + \sqrt{1 + \frac{4\beta^2}{m\alpha}}\right).$

\end{theorem}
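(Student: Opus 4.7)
The plan is to use a potential-function argument in the Bregman geometry, specifically $\phi_t = K\, D_h(x_t^* \| x_t)$ for a constant $K > 0$ to be chosen. The goal is to establish a per-round inequality of the form
\[
  H_t + M_t + \phi_t - \phi_{t-1} \;\le\; C_1 H_t^* + C_2 M_t^*,
\]
where $H_t, M_t$ are the algorithm's hitting and movement costs in round $t$ and $H_t^*, M_t^*$ are those of the offline optimal. Summing over $t$ and using $\phi_0 = 0$, $\phi_T \ge 0$ then yields that the competitive ratio is $\max(C_1, C_2)$, and the first display in the theorem will follow by identifying $C_1$ and $C_2$ with the two terms of the max.

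First, I would write down the first-order optimality condition for the R-OBD step,
\[
  \nabla f_t(x_t) + \lambda_1\bigl(\nabla h(x_t) - \nabla h(x_{t-1})\bigr) + \lambda_2\bigl(\nabla h(x_t) - \nabla h(v_t)\bigr) = 0,
\]
take its inner product with $x_t^* - x_t$, and substitute two things: (i) the strong-convexity inequality $\langle \nabla f_t(x_t), x_t^* - x_t \rangle \le f_t(x_t^*) - f_t(x_t) - \tfrac{m}{2}\|x_t^* - x_t\|^2$, and (ii) the three-point identity for Bregman divergences, $\langle \nabla h(x_t) - \nabla h(y), x_t^* - x_t \rangle = D_h(x_t^* \| y) - D_h(x_t^* \| x_t) - D_h(x_t \| y)$, applied once with $y = x_{t-1}$ and once with $y = v_t$. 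After dropping the nonnegative term $\lambda_2 D_h(x_t \| v_t)$, this produces the clean per-round inequality
\[
  H_t + \lambda_1 M_t + (\lambda_1+\lambda_2)\,D_h(x_t^* \| x_t) + \tfrac{m}{2}\|x_t^* - x_t\|^2 \;\le\; H_t^* + \lambda_1 D_h(x_t^* \| x_{t-1}) + \lambda_2 D_h(x_t^* \| v_t).
\]

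Second, I would control the two ``auxiliary'' Bregman terms on the right-hand side. For $D_h(x_t^* \| v_t)$, strong smoothness of $h$ gives $D_h(x_t^* \| v_t) \le \tfrac{\beta}{2}\|x_t^* - v_t\|^2$, and strong convexity of $f_t$ at its minimizer $v_t$ gives $\tfrac{m}{2}\|x_t^* - v_t\|^2 \le f_t(x_t^*) = H_t^*$, so $\lambda_2 D_h(x_t^* \| v_t) \le \tfrac{\lambda_2 \beta}{m} H_t^*$. For $D_h(x_t^* \| x_{t-1})$, I would apply strong smoothness of $h$ to upper bound it by $\tfrac{\beta}{2}\|x_t^* - x_{t-1}\|^2$, split via Young's inequality with a tunable parameter $\epsilon > 0$ as $\|x_t^* - x_{t-1}\|^2 \le (1+\epsilon)\|x_t^* - x_{t-1}^*\|^2 + (1+1/\epsilon)\|x_{t-1}^* - x_{t-1}\|^2$, and then use strong convexity of $h$ to express each squared norm in Bregman terms ($M_t^*$ and $D_h(x_{t-1}^* \| x_{t-1})$ respectively). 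Dividing the resulting inequality by $\lambda_1$, which is permissible since $\lambda_1 \le 1$ and therefore $H_t \le H_t/\lambda_1$, I obtain a per-round bound of the desired shape $H_t + M_t + K\,D_h(x_t^* \| x_t) \le C_1 H_t^* + C_2 M_t^* + K\,D_h(x_{t-1}^* \| x_{t-1})$.

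Third, I would tune the two free parameters $K$ and $\epsilon$ so that $K$ simultaneously matches the coefficient of $D_h(x_t^*\|x_t)$ on the left and of $D_h(x_{t-1}^*\|x_{t-1})$ on the right, and so that $C_1 = (m+\lambda_2\beta)/(\lambda_1 m)$ and $C_2 = 1 + \beta^2 \lambda_1 / (\alpha(\lambda_2\beta+m))$. Telescoping then gives the stated max competitive ratio. For the closed-form second statement, I would solve $C_1 = C_2$ in $A := m+\lambda_2\beta$; this reduces to the quadratic $A^2 - \lambda_1 m\, A - \beta^2 \lambda_1^2 m/\alpha = 0$, whose positive root is $A = \tfrac{\lambda_1 m}{2}\bigl(1+\sqrt{1+4\beta^2/(\alpha m)}\bigr)$, yielding the balanced competitive ratio $\tfrac{1}{2}\bigl(1+\sqrt{1+4\beta^2/(\alpha m)}\bigr)$. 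The main obstacle is the third step: tracking constants through the Bregman manipulations and Young's inequality tightly enough to recover exactly the $\beta^2/\alpha$ factor in the second term of the max rather than a looser $\beta^3/\alpha^2$. I expect this will require careful joint use of the $(\lambda_1+\lambda_2)\,D_h(x_t^* \| x_t)$ term and the ``extra'' $\tfrac{m}{2}\|x_t^* - x_t\|^2$ term retained from $f_t$'s strong convexity, interconverting them via the two-sided sandwich $\tfrac{\alpha}{2}\|\cdot\|^2 \le D_h \le \tfrac{\beta}{2}\|\cdot\|^2$ before matching coefficients.
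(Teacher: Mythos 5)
Your first two steps reproduce the paper's argument exactly: the first-order condition, the strong-convexity inequality, and the three-point identity give precisely the per-round inequality you display, and the bound $\lambda_2 D_h(x_t^*\|v_t) \le \tfrac{\lambda_2\beta}{m}H_t^*$ is also the paper's. The algebra for the second claim (solving $C_1=C_2$ as a quadratic in $A=m+\lambda_2\beta$) is likewise correct. The gap is in your third step, and it is exactly where you flagged the difficulty. Bounding $\lambda_1 D_h(x_t^*\|x_{t-1})$ by $\tfrac{\lambda_1\beta}{2}\|x_t^*-x_{t-1}\|^2$, splitting with Young's inequality, and converting back with $D_h \ge \tfrac{\alpha}{2}\|\cdot\|^2$ yields a coefficient of $M_t^*$ of the form $\tfrac{\beta}{\alpha}(1+\epsilon)$, where $\epsilon$ is constrained by the requirement that $\tfrac{\lambda_1\beta}{2}(1+1/\epsilon)\|x_{t-1}^*-x_{t-1}\|^2$ be absorbed into $(\lambda_1+\lambda_2)D_h(x_{t-1}^*\|x_{t-1})+\tfrac{m}{2}\|x_{t-1}^*-x_{t-1}\|^2 \ge \tfrac{(\lambda_1+\lambda_2)\alpha+m}{2}\|x_{t-1}^*-x_{t-1}\|^2$. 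Optimizing $\epsilon$ gives $C_2 = \tfrac{\beta}{\alpha}\cdot\tfrac{(\lambda_1+\lambda_2)\alpha+m}{(\lambda_1+\lambda_2)\alpha+m-\lambda_1\beta}$, which coincides with $1+\tfrac{\beta^2\lambda_1}{\alpha(\lambda_2\beta+m)}$ only when $\alpha=\beta$ (the squared $\ell_2$ case, where this is essentially the paper's separate Theorem \ref{MainT0} proof), is strictly looser otherwise (e.g.\ $\alpha=1,\beta=2,\lambda_2=0,\lambda_1=0.1,m=1$ gives $2.44$ vs.\ the claimed $1.4$), and fails entirely when $\lambda_1\beta \ge (\lambda_1+\lambda_2)\alpha+m$. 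The two-sided sandwich $\tfrac{\alpha}{2}\|\cdot\|^2\le D_h\le\tfrac{\beta}{2}\|\cdot\|^2$ you propose to lean on cannot close this.

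The missing idea is to stay in Bregman coordinates one step longer: apply the three-point identity a second time to write $D_h(x_t^*\|x_{t-1}) - D_h(x_{t-1}^*\|x_{t-1}) = D_h(x_t^*\|x_{t-1}^*) + \langle \nabla h(x_{t-1})-\nabla h(x_{t-1}^*),\, x_{t-1}^*-x_t^*\rangle$, which peels off the coefficient $\lambda_1$ of $M_t^*$ exactly (giving the ``$1$'' after dividing by $\lambda_1$), and then bound the cross term by Cauchy--Schwarz in the dual norm followed by AM--GM with weight $\tfrac{\lambda_2\beta+m}{\beta^2}$ on $\|\nabla h(x_{t-1})-\nabla h(x_{t-1}^*)\|_*^2$. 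The resulting gradient-difference term is absorbed using two inequalities that do not follow from the sandwich: $\tfrac{1}{2\beta}\|\nabla h(x)-\nabla h(y)\|_*^2 \le D_h(y\|x)$ (a consequence of strong smoothness of $h$ via Fenchel duality --- this is why the theorem assumes $h^*$ differentiable) and $\|\nabla h(x)-\nabla h(y)\|_* \le \beta\|x-y\|$; the first absorbs the $\lambda_2$-portion into $\lambda_2 D_h(x_{t-1}^*\|x_{t-1})$ and the second absorbs the $m$-portion into $\tfrac{m}{2}\|x_{t-1}^*-x_{t-1}\|^2$. This is how the factor $\tfrac{\beta^2}{\alpha}\cdot\tfrac{\lambda_1}{\lambda_2\beta+m}$ arises with exactly those constants.
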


%\begin{theorem}\label{thm-Breg-cr}
%There exists a choice $\lambda_1, \lambda_2$ such that the competitive ratio of Regularized OBD (Algorithm \ref{alg:Breg}) in the Bregman Divergence setting (Problem Setting \ref{prob:Breg}) is at most $\frac{1}{2}\left(1 + \sqrt{1 + \frac{4\beta^2}{m\alpha}}\right)$.
%\end{theorem}

Theorem \ref{MainT1} focuses on movement costs that are Bregman divergences, which generalizes the case of squared $\ell_2$ movement costs.  To recover the squared $\ell_2$ case, we use $\norm{\cdot} = \norm{\cdot}_2$ and $\alpha = \beta = 1$, which results in a competitive ratio of $\frac{1}{2}(1 + \sqrt{1 + 4/m})$. This competitive ratio matches exactly with the lower bound claimed in Theorem \ref{GeneralLowerT1}.  Further, in this case the assumption in Theorem \ref{MainT1} that the hitting cost functions are differentiable is not required (see Theorem \ref{thm-optimal-cr} in the appendix).

It is also interesting to investigate the settings of $\lambda_1$ and $\lambda_2$ that yield the optimal competitive ratio. Setting $\lambda_2=0$ achieves the optimal competitive ratio as long as  $\lambda_1 = 2\left( 1 + \sqrt{1 + \frac{4\beta^2}{\alpha m}}\right)^{-1}$. By restating the update rule in R-OBD as $\nabla f_t(x_t) = \lambda_1 (\nabla h(x_{t-1}) - \nabla h(x_t))$, we see that R-OBD with $\lambda_2 = 0$ can be interpreted as ``one step lookahead mirror descent''. Further R-OBD with $\lambda_2=0$ can be implemented even when we do not know the location of the minimizer $v_t$. For example, when $h(x) = \frac{1}{2}\norm{x}_2^2$, we can run gradient descent starting at $x_{t-1}$ to minimize the strongly convex function $f_t(x) + \frac{\lambda_1}{2}\norm{x - x_{t-1}}_2^2$. Only local gradients will be queried in this process.  However, the following lower bound highlights that this simple form comes at some cost in terms of generality when compared with our results for G-OBD.

\begin{theorem}\label{WG_tradeT1}
Consider quasiconvex hitting costs such that $f_t(x) - f_t(v_t) \geq \frac{m}{2}\norm{x - v_t}_2^2$ and movement costs given by $c(x_t, x_{t-1}) = \frac{1}{2}\norm{x_t - x_{t-1}}_2^2$.  Regularized OBD has a competitive ratio of $\Omega(1/m)$ when $\lambda_2=0$.
\end{theorem}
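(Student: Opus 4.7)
The plan is to exhibit adversarial quasiconvex instances that force R-OBD with $\lambda_2 = 0$ to pay an $\Omega(1/m)$ factor over the offline optimum, exploiting the fact that without the regularizer $\lambda_2 c(x, v_t)$ the algorithm has no explicit mechanism to close the gap to the hitting-cost minimizer.

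First I would handle the regime where $\lambda_1$ is bounded below by a positive constant independent of $m$. Take the one-dimensional instance with $x_0 = 0$, linearly drifting minimizers $v_t = t$, and hitting costs $f_t(x) = \tfrac{m}{2}(x - v_t)^2$, which is $m$-strongly convex and hence satisfies the hypothesized quasiconvex lower bound. The stationarity condition $m(x_t - v_t) + \lambda_1(x_t - x_{t-1}) = 0$ yields the recursion $x_t = \rho x_{t-1} + (1-\rho) v_t$ with $\rho = \lambda_1/(m + \lambda_1)$, so the gap $g_t := v_t - x_t$ obeys $g_t = \rho(1 + g_{t-1})$ and converges to $g^\star = \lambda_1/m$. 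In the steady state R-OBD pays per-round hitting cost $\tfrac{m}{2}(g^\star)^2 = \lambda_1^2/(2m)$ and per-round movement cost $\tfrac{1}{2}$, while the offline optimum tracks $v_t$ exactly at a per-round cost of $\tfrac{1}{2}$. The resulting competitive ratio $1 + \lambda_1^2/m$ is $\Omega(1/m)$ whenever $\lambda_1$ is bounded away from zero.

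For the complementary regime in which $\lambda_1$ vanishes with $m$, R-OBD degenerates toward the greedy tracker of $v_t$, and an alternating instance $v_t \in \{0, 1\}$ with the same quadratic $f_t$ delivers the lower bound: R-OBD moves a distance $\Theta(1)$ each round and incurs $\Theta(1)$ movement cost, whereas the offline optimum stays at $\tfrac{1}{2}$ and pays only $\Theta(m)$ in hitting cost per round, again producing ratio $\Omega(1/m)$. Because the adversary may choose between the drift and the alternation construction after observing $\lambda_1$, these two families together witness $\Omega(1/m)$ for every constant or vanishing setting of the parameter.

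The main obstacle is the narrow intermediate window $\lambda_1 = \Theta(\sqrt{m})$, in which both of the strongly convex constructions above give only a constant competitive ratio; closing this gap requires invoking an $f_t$ that is genuinely quasiconvex but not strongly convex, so that the proximal identity $\nabla f_t(x_t) = \lambda_1(x_{t-1} - x_t)$ can be satisfied with $x_t$ significantly misaligned with $v_t$, in the spirit of the circular-trajectory construction used for Theorem \ref{OBDLowerT1}. The conceptual point is that the analysis of Theorem \ref{MainT1} relies on the strong-convexity inequality $\langle \nabla f_t(x_t), x_t - v_t\rangle \geq m\|x_t - v_t\|^2$, which need not hold under mere quasiconvexity; the regularizer $\lambda_2 c(x, v_t)$ is precisely what supplies the missing quantitative pull toward $v_t$, and its absence is what permits the $\Omega(1/m)$ lower bound in the quasiconvex setting.
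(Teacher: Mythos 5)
Your two strongly convex constructions are individually correct (the drifting-minimizer instance does give ratio $\Theta(1+\lambda_1^2/m)$ and the alternating/one-shot instance gives $\Theta\bigl(m/(m+\lambda_1)^2\bigr)^{-1}\cdot\Theta(1)$-type bounds), but together they only cover $\lambda_1=\Omega(1)$ and $\lambda_1=O(m)$. The uncovered regime is not just the "narrow window" $\lambda_1=\Theta(\sqrt{m})$ you mention: for every $\lambda_1$ with $m\ll\lambda_1\ll 1$ the maximum of your two ratios is $o(1/m)$. Worse, this gap cannot be closed by any $m$-strongly convex instance, because Theorem \ref{MainT0} shows that R-OBD with $\lambda_2=0$ and $\lambda_1\approx\sqrt{m}$ is $O(m^{-1/2})$-competitive on the entire class of $m$-strongly convex costs. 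You correctly diagnose that a genuinely quasiconvex, non-convex $f_t$ is needed in this regime, but you never produce one, so the proof as written does not establish the theorem; the essential construction is precisely the part that is missing.

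For comparison, the paper's proof splits on $\lambda_1>m$ versus $\lambda_1\le m$. For $\lambda_1\le m$ it uses a single quadratic step (your alternating idea, simplified to one round), giving ratio $\ge 1/(4m)$. For $\lambda_1>m$ --- which subsumes your entire problematic intermediate regime --- it uses the one-dimensional quasiconvex function
\[
f(x)=\begin{cases}\tfrac{m}{2}\bigl(1-(x+1)^2\bigr), & -1\le x\le 0,\\[2pt] \tfrac{m}{2}x^2, & \text{otherwise,}\end{cases}
\]
which satisfies the growth condition but is concave on $[-1,0]$. The proximal objective $f(x)+\tfrac{\lambda_1}{2}(x+1)^2$ equals $\tfrac{m}{2}+\tfrac{\lambda_1-m}{2}(x+1)^2$ on $[-1,0]$, so whenever $\lambda_1>m$ the point $x=-1$ is a fixed point of the update and R-OBD never moves, paying $\tfrac{m}{2}$ per round forever while the offline optimum pays a one-time constant; the ratio is then unbounded, not merely $\Omega(1/m)$. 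If you want to complete your argument, you need an explicit construction of this kind for all $\lambda_1\gg m$; your appeal to "the spirit of the circular-trajectory construction" of Theorem \ref{OBDLowerT1} is not a proof.
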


\section{Balancing Regret and Competitive Ratio} \label{section:vs}

In the previous sections we have focused on the competitive ratio; however another important performance measure is \emph{regret}.  In this section, we consider the $L$-constrained dynamic regret.  The motivation for our study is \cite{daniely2019competitive}, which provides an impossibility result showing that no algorithm can simultaneously maintain a constant competitive ratio and a sub-linear regret in the general setting of SOCO.  However, \cite{daniely2019competitive} utilizes linear hitting costs in its construction and thus it is an open question as to whether this impossibility result holds for strongly convex hitting costs. In this section, we show that the impossibility result does not hold for strongly convex hitting costs. % In particular, R-OBD can simultaneously provide both a constant competitive ratio and sublinear regret for the set of $m$-strongly convex hitting costs and Bergman Divergence movement costs.
To show this, we first characterize the parameters for which R-OBD gives sublinear  regret.
%\yiheng{Adam, are you sure this optimal result also holds for quadratic movement cost?} \adam{good catch!}

\begin{theorem}\label{R-OBD-RegretT1}
Consider hitting costs that are $m-$strongly convex with respect to a norm $\norm{\cdot}$ and movement costs defined as $c(x_t, x_{t-1}) = D_h(x_t || x_{t-1})$, where $h$ is $\alpha$-strongly convex and $\beta$-strongly smooth with respect to the same norm. Additionally, assume $\{f_t\}, h$ and its Fenchel Conjugate $ h^*$ are differentiable. Further, suppose that $\norm{\nabla h(x)}_*$ is bounded above by $G < \infty$, the diameter of the feasible set $\mathcal{X}$ is bounded above by $D$, and $\nabla h(0) = 0$. Then, for $\lambda_1, \lambda_2$ such that $\lambda_1 \geq 1 - \frac{m}{4\beta}$ and $\lambda_2 = \eta(T, L, D, G)$, where $\eta(T, L, D, G)$ is such that $\lim_{T\to \infty}\eta(T, L, D, G)\cdot \frac{D^2}{G}\sqrt{\frac{T}{L}} < \infty$, the $L$-constrained regret of R-OBD is $O(G\sqrt{TL})$. 
\end{theorem}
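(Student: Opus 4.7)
The plan is to perform a Bregman potential argument against the $L$-constrained offline optimum $\{y_t^*\}$, using $D_h(y_t^* || x_t)$ as the potential, in the spirit of online mirror descent. First, I would write down the first-order optimality condition of R-OBD, $\nabla f_t(x_t) = \lambda_1(\nabla h(x_{t-1}) - \nabla h(x_t)) + \lambda_2(\nabla h(v_t) - \nabla h(x_t))$, which is valid by the differentiability of $f_t$ and $h$. Pairing this with $m$-strong convexity of $f_t$ in the form $f_t(x_t) - f_t(y_t^*) \leq \langle \nabla f_t(x_t), x_t - y_t^*\rangle - \tfrac{m}{2}\norm{y_t^* - x_t}^2$, and applying the Bregman three-point identity $\langle \nabla h(x_{t-1}) - \nabla h(x_t), x_t - y_t^*\rangle = D_h(y_t^* || x_{t-1}) - D_h(y_t^* || x_t) - D_h(x_t || x_{t-1})$, gives the per-round inequality $f_t(x_t) + \lambda_1 D_h(x_t || x_{t-1}) + \tfrac{m}{2}\norm{y_t^* - x_t}^2 \leq f_t(y_t^*) + \lambda_1[D_h(y_t^* || x_{t-1}) - D_h(y_t^* || x_t)] + \lambda_2 E_t$, where $E_t := \langle \nabla h(v_t) - \nabla h(x_t), x_t - y_t^*\rangle$.

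Next, I would sum over $t$ and handle the fact that the Bregman ``telescope'' does not close cleanly because $y_t^*$ drifts. I would split $\sum_t [D_h(y_t^* || x_{t-1}) - D_h(y_t^* || x_t)] = D_h(y_1^* || x_0) - D_h(y_T^* || x_T) + \sum_{t=2}^T[D_h(y_t^* || x_{t-1}) - D_h(y_{t-1}^* || x_{t-1})]$, and bound each drift term by $2G\norm{y_t^* - y_{t-1}^*}$ using the gradient bound $\norm{\nabla h}_* \leq G$ (and the resulting $G$-Lipschitzness of $h$). Applying Cauchy--Schwarz together with $\alpha$-strong convexity of $h$, namely $\norm{y_t^* - y_{t-1}^*}^2 \leq (2/\alpha) D_h(y_t^* || y_{t-1}^*)$, yields $\sum_t \norm{y_t^* - y_{t-1}^*} \leq \sqrt{2TL/\alpha}$, so the drift contribution is $O(G\sqrt{TL/\alpha})$. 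For the $\lambda_2$ piece, each $E_t$ is at most $2GD$ by Cauchy--Schwarz, the gradient bound, and the diameter bound, so the total $\lambda_2$ error is at most $2GD\lambda_2 T$; the assumption that $\eta(T,L,D,G)\cdot(D^2/G)\sqrt{T/L}$ stays bounded then makes this exactly $O(G\sqrt{TL})$, and the initial term $D_h(y_1^* || x_0)$ is $O(GD)$, a lower-order contribution.

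To conclude, I must convert the $\lambda_1 \sum_t c(x_t, x_{t-1})$ appearing on the LHS of the summed inequality into the full algorithm movement cost $\sum_t c(x_t, x_{t-1})$ appearing in the regret. I would add $(1-\lambda_1)\sum_t c(x_t, x_{t-1})$ to both sides, upper bound each summand via $\beta$-smoothness of $h$ as $c(x_t, x_{t-1}) \leq (\beta/2)\norm{x_t - x_{t-1}}^2$, and then apply the triangle inequality twice: $\norm{x_t - x_{t-1}}^2 \leq 2\norm{x_t - y_t^*}^2 + 2\norm{y_t^* - x_{t-1}}^2$, followed by $\norm{y_t^* - x_{t-1}}^2 \leq 2\norm{y_t^* - y_{t-1}^*}^2 + 2\norm{y_{t-1}^* - x_{t-1}}^2$. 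The hypothesis $\lambda_1 \geq 1 - m/(4\beta)$ makes $(1-\lambda_1)\beta \leq m/4$, so the resulting $\norm{x_t - y_t^*}^2$ and shifted $\norm{y_{t-1}^* - x_{t-1}}^2$ pieces can be absorbed (after an index shift) into the $-\tfrac{m}{2}\norm{y_t^* - x_t}^2$ terms on the LHS, while the $\norm{y_t^* - y_{t-1}^*}^2$ pieces sum to at most $(2/\alpha) L$ by strong convexity of $h$ and the $L$-constraint on OPT.

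The main obstacle is precisely this last bookkeeping step. Because the triangle inequality introduces an index-shifted $\norm{y_{t-1}^* - x_{t-1}}^2$ term, the cancellation against the LHS $-\tfrac{m}{2}\norm{y_t^* - x_t}^2$ terms only works globally after summing and tracking the boundary contributions at $t=1$ and $t=T$; the condition $\lambda_1 \geq 1 - m/(4\beta)$ is exactly the threshold that makes the combined coefficient in front of $\sum_t \norm{y_t^* - x_t}^2$ non-positive, and verifying this requires careful constant chasing that is easy to get wrong. Once this absorption step is complete, combining the $O(G\sqrt{TL/\alpha})$ drift contribution, the $O(G\sqrt{TL})$ contribution from the $\lambda_2$ choice, the $O(L)$ piece absorbed into the smoothness/strong-convexity terms, and the boundary $O(GD)$ term from $D_h(y_1^* || x_0)$ yields the claimed $O(G\sqrt{TL})$ regret bound.
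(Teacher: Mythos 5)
Your first two stages track the paper's proof closely: the first-order optimality condition combined with the three-point identity of Lemma \ref{GWGL1} gives exactly the per-round inequality you state, and your drift-based handling of the non-telescoping sum $\sum_t [D_h(y_t^*\,||\,x_{t-1}) - D_h(y_t^*\,||\,x_t)]$ is an acceptable substitute for the paper's summation-by-parts via Lemma \ref{GWGL2}; both yield $O(G\sqrt{TL/\alpha})$. (Two small remarks there: bounding $E_t$ by $2GD$ rather than by $D_h(y_t^*\,||\,v_t) \le \tfrac{\beta}{2}D^2$ leaves you with a $\lambda_2$-contribution of order $G^2\sqrt{TL}/D$ rather than $G\sqrt{TL}$, since the hypothesis on $\eta$ is calibrated to the $D^2$ bound; and your additive $O(L)$ leftover needs a word of justification, whereas the paper's route produces $-L$.)

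The genuine gap is in the step you yourself flag as the main obstacle, and the constants do not in fact close. After multiplying by $(1-\lambda_1)\beta \le \tfrac{m}{4}$, your two factor-of-two triangle inequalities put a coefficient of $\tfrac{m}{4}$ on $\norm{x_t - y_t^*}^2$ and $\tfrac{m}{2}$ on the index-shifted $\norm{x_{t-1} - y_{t-1}^*}^2$; after summing, each interior $\norm{x_t - y_t^*}^2$ therefore carries total weight $\tfrac{3m}{4}$, against only $\tfrac{m}{2}$ available from the strong-convexity slack on the left-hand side. Replacing the factor-2 splits by weighted ones, $(1+a)$ and $(1+1/a)(1+1/b)$, the combined coefficient $(1+a)+(1+1/a)(1+1/b)$ has infimum exactly $4$, attained only in the degenerate limit $b\to\infty$ (which destroys your bound on the $\norm{y_t^*-y_{t-1}^*}^2$ terms). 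So the absorption works only for $\lambda_1$ strictly above $1 - \tfrac{m}{4\beta}$ with some margin, and fails at the threshold the theorem permits. The paper closes this step by a different mechanism: it re-applies the three-point identity with an auxiliary weight $q$, showing
\[
q\,D_h(x_t^L\,||\,x_{t-1}) - q\,D_h(x_t^L\,||\,x_t) + \tfrac{m}{2}\norm{x_t - x_t^L}^2 \;\ge\; \Bigl(q - \tfrac{\beta q^2}{m}\Bigr) D_h(x_t\,||\,x_{t-1}),
\]
via Cauchy--Schwarz, AM--GM, and the gradient-Lipschitz bound of Lemma \ref{DualNormT1}. Choosing $q = \tfrac{m}{2\beta}$ converts the strong-convexity slack into an extra $\tfrac{m}{4\beta} D_h(x_t\,||\,x_{t-1})$ of movement cost on the left (at the price of inflating the telescoping coefficient to $\lambda_1 + \tfrac{m}{2\beta}$), so that the total movement coefficient $\lambda_1 + \tfrac{m}{4\beta} \ge 1$ holds exactly at the stated threshold. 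You would need to adopt this device (or weaken the parameter range) for your argument to go through.
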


Theorem \ref{R-OBD-RegretT1} highlights that $O(G\sqrt{TL})$ regret can be achieved when $\lambda_1 \geq 1 - \frac{m}{4\beta}$ and $\lambda_2 \leq \frac{K G}{D^2}\cdot \sqrt{\frac{L}{T}}$ for some constant $K$. This suggests that the tendency to aggressively move towards the minimizer should shrink over time in order to achieve a small regret.  
%Setting $\lambda_2 = 0$ then obtains an algorithm that simultaneously achieves sublinear regret and a constant, dimension-free competitive ratio without requiring advance knowledge of the time horizon $T$.  In particular, combining Theorem \ref{R-OBD-RegretT1} with Theorem \ref{MainT1}, we see that to simultaneously achieve the optimal competitive ratio and $O(\sqrt{TL})$ regret we need to have
%$\lambda_1 = 2\big( 1 + \sqrt{1 + \frac{4\beta^2}{\alpha m}}\Big)^{-1}$ when $\lambda_2=0$. Therefore we know that when $m \geq \frac{4}{3}\frac{\beta^2}{\alpha}$, the inequality
%$\lambda_1 \geq 1 - \frac{m}{4\beta}$ holds. 
It is not possible to use Theorem \ref{R-OBD-RegretT1} to simultaneously achieve the optimal competitive ratio and $O(G\sqrt{TL})$ regret for all strongly convex hitting costs ($m>0$).  However, the corollary below shows that it is possible to simultaneously achieve a dimension-free, constant competitive ratio and an $O(G\sqrt{TL})$ regret for all $m>0$. An interesting open question that remains is whether it is possible to develop an algorithm that has sublinear regret and matches the optimal order for competitive ratio.

%\begin{corollary}
%Under the same conditions as Theorem \ref{R-OBD-RegretT1}, for all $m > 0$, R-OBD with parameters $\lambda_1 = 1, \lambda_2 = 0$ is $\left(1 + \frac{\beta^2}{\alpha m}\right)$-competitive and has a $O(G\sqrt{TL})$ regret simultaneously.
%\end{corollary}

%\yiheng{A more precise version is as follows:}
\begin{corollary} \label{cor:regretandCR}
Consider the same conditions as in Theorem \ref{R-OBD-RegretT1} and fix $m > 0$. R-OBD with parameters $\lambda_1 = \max \left(2\left( 1 + \sqrt{1 + \frac{4\beta^2}{\alpha m}}\right)^{-1}, 1 - \frac{m}{4\beta} \right), \lambda_2 = 0$ has an $O(G\sqrt{TL})$ regret and is $\max\left( \frac{1}{2}\left(1 + \sqrt{1 + \frac{4\beta^2}{\alpha m}}\right), 1 - \frac{\beta}{4\alpha} + \frac{\beta^2}{\alpha m}\right)$-competitive.
\end{corollary}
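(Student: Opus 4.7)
The plan is to derive the corollary directly from Theorem~\ref{MainT1} and Theorem~\ref{R-OBD-RegretT1} by verifying that the stated parameter choice is simultaneously compatible with both results. Write $\lambda_1^\star := 2\bigl(1+\sqrt{1+4\beta^2/(\alpha m)}\bigr)^{-1}$ for the balancing choice in Theorem~\ref{MainT1}. By construction, $\lambda_1 = \max(\lambda_1^\star,\, 1 - m/(4\beta))$ satisfies $\lambda_1 \geq 1 - m/(4\beta)$, and $\lambda_2 = 0$ trivially satisfies the decay condition $\lim_{T\to\infty} \eta(T,L,D,G)\cdot (D^2/G)\sqrt{T/L} < \infty$. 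Theorem~\ref{R-OBD-RegretT1} therefore applies and yields the $O(G\sqrt{TL})$ regret bound.

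For the competitive ratio, I would first check admissibility of $\lambda_1$: $\lambda_1^\star \in (0,1]$ because $\sqrt{1+4\beta^2/(\alpha m)} \geq 1$, while $1 - m/(4\beta)$, when positive, also lies in $(0,1]$; when $1-m/(4\beta) \leq 0$, the max simply returns $\lambda_1^\star$. Thus $\lambda_1 \in (0,1]$ and Theorem~\ref{MainT1} specialized to $\lambda_2=0$ gives competitive ratio at most $\max\bigl(1/\lambda_1,\, 1+\beta^2\lambda_1/(\alpha m)\bigr)$. I would then split into two cases. In Case~A ($\lambda_1^\star \geq 1 - m/(4\beta)$), we have $\lambda_1 = \lambda_1^\star$, and the defining property of $\lambda_1^\star$ equates the two arguments of Theorem~\ref{MainT1}'s max to $1/\lambda_1^\star = \tfrac{1}{2}(1 + \sqrt{1+4\beta^2/(\alpha m)})$, matching the first entry of the corollary's max. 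In Case~B ($\lambda_1^\star < 1 - m/(4\beta)$), we set $\lambda_1 = 1 - m/(4\beta) > \lambda_1^\star$; since $1/\lambda_1$ is decreasing and $1+\beta^2\lambda_1/(\alpha m)$ is increasing in $\lambda_1$, and these two curves cross precisely at $\lambda_1^\star$, one obtains $1/\lambda_1 \leq 1/\lambda_1^\star$ while a direct substitution gives $1 + \beta^2\lambda_1/(\alpha m) = 1 - \beta/(4\alpha) + \beta^2/(\alpha m)$. Hence in Case~B the competitive ratio is dominated by the maximum of the two entries of the corollary's stated max.

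The argument is largely bookkeeping on top of two previously established results, so no deep obstacle arises. The only subtle point is the monotonicity-and-crossing observation used in Case~B: as $\lambda_1$ grows past $\lambda_1^\star$, the first argument of Theorem~\ref{MainT1}'s max shrinks while the second grows, so one must recognize that both remain upper bounded by the corresponding entries of the corollary's closed-form expression. This identification is what allows a single max to uniformly cover both the small-$m$ regime, where the regret constraint $\lambda_1 \geq 1 - m/(4\beta)$ forces $\lambda_1$ above its CR-optimal value, and the complementary regime, where $\lambda_1^\star$ already satisfies the regret constraint.
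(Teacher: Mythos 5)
Your proposal is correct and follows exactly the intended derivation: the corollary is a direct combination of Theorem \ref{MainT1} (with $\lambda_2=0$, giving $\max(1/\lambda_1,\,1+\beta^2\lambda_1/(\alpha m))$) and Theorem \ref{R-OBD-RegretT1} (whose hypotheses $\lambda_1\ge 1-m/(4\beta)$ and the decay condition on $\lambda_2$ are satisfied trivially by the stated choice). Your case split on which term attains the max, and the algebra showing the two arguments cross at $\lambda_1^\star$ with value $\tfrac12(1+\sqrt{1+4\beta^2/(\alpha m)})$ and that substituting $\lambda_1=1-m/(4\beta)$ yields $1-\beta/(4\alpha)+\beta^2/(\alpha m)$, are both verified and match the paper's stated bound.
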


%%As $m\to 0^+$, Corollary \ref{cor:regretandCR} gives a competitive ratio of $\left(1 + \frac{\beta^2}{\alpha m}\right)$ for R-OBD, which is not on the same order as the lower bound in Theorem \ref{GeneralLowerT1}.  However, it is still a constant independent of dimension and thus Corollary \ref{cor:regretandCR} represents the first result showing that an algorithm can have sublinear regret and a constant competitive ratio for all $m$-strongly convex cost functions.  An interesting open question is to develop an algorithm that has sublinear regret and matches the optimal order for competitive ratio, or to show that no such algorithm exists.

%Let $\delta$ be the solution of $2\left( 1 + \sqrt{1 + \frac{4\beta^2}{\alpha m}}\right)^{-1} = 1 - \frac{m}{4\beta}$. When $m \geq \delta$, we set $\lambda_1 = 2\left( 1 + \sqrt{1 + \frac{4\beta^2}{\alpha m}}\right)^{-1}$ and the competitive ratio is $\frac{1}{2}\left(1 + \sqrt{1 + \frac{4\beta^2}{\alpha m}}\right)$; when $m < \delta$, we need to set $\lambda_1 = 1 - \frac{m}{4\beta}$ in order to pertain the $O(\sqrt{T})$ regret, the competitive ratio is $1 - \frac{\beta}{4\alpha} + \frac{\beta^2}{\alpha m}$. And we know $\delta < \frac{4\beta^2}{3\alpha}$.

\newpage

\bibliographystyle{abbrv}
\bibliography{main.bib}

%Commented out for NeurIPS Final submission

\newpage
\begin{appendices}

%\section{Preliminaries}
% To prove Theorem \ref{MainT1}, we need two properties of $\beta-$strongly smooth with respect to some general norm $\norm{\cdot}$ in Theorem \ref{DualNormT1}.

The appendices that follow provide the proofs of the results in the body of the paper.  Throughout the proofs in the appendix we use the following notation to denote the hitting and movement costs of the online learner: $H_t := f_t(x_t)$ and $M_t := c(x_t, x_{t-1})$, where $x_t$ is the point chosen by the online algorithm at time $t$.
Similarly, we denote the hitting and movement costs of the offline optimal (adversary) as $H_t^* := f_t(x_t^*)$ and $M_t^* := c(x_t^*, x_{t-1}^*)$, where $x_t^*$ is the point chosen by the offline optimal at time $t$.

Before moving to the proofs, we summarize a few standard definitions that are used throughout the paper. 

\begin{definition}
A function $f: \mathcal{X} \to \mathbb{R}$ is $\alpha$-strongly convex with respect to a norm $\norm{\cdot}$ if for all $x, y$ in the relative interior of the domain of $f$ and $\lambda \in (0, 1)$, we have
\[f(\lambda x + (1 - \lambda)y) \leq \lambda f(x) + (1 - \lambda)f(y) - \frac{\alpha}{2}\lambda (1 - \lambda)\norm{x - y}^2.\]
\end{definition}

\begin{definition}
A function $f: \mathcal{X} \to \mathbb{R}$ is $\beta$-strongly smooth with respect to a norm $\norm{\cdot}$ if $f$ is everywhere differentiable and if for all $x, y$ we have
\[f(y) \leq f(x) + \langle \nabla f(x), y - x \rangle + \frac{\beta}{2}\norm{y - x}^2.\]
\end{definition}

\begin{definition}
A function $f: \mathbb{R}^d \to \mathbb{R}$ is quasiconvex if its domain $\mathcal{X}$ and all its sublevel sets
\[ S_\alpha = \{x \in \mathcal{X} \mid f(x) \leq \alpha\},\]
for $\alpha \in \mathbb{R}$, is convex.
\end{definition}

\begin{definition}
For a norm $\norm{\cdot}$ in $\mathcal{X}$, its dual norm (on $\mathcal{X}$) $\norm{\cdot}_*$ is defined to be
\[ \norm{y}_* = \sup\{\langle x, y \rangle \mid \norm{x} \le 1 \}. \]
\end{definition}

\begin{definition}
For a convex function $f : \mathcal{X} \to \mathbb{R}$, its Fenchel Conjugate $f^*$ is defined to be
\[ f^*(y) = \sup\{\langle x, y \rangle - f(x) \mid x \in \mathcal{X} \}. \]
\end{definition}

Next, we introduce a few technical lemmas that are important throughout our analysis.

The first technical lemma is a characterization of strongly convex functions.

\begin{lemma}\label{DualNormT1}
 Suppose f is $\alpha-$strongly convex for some $\alpha > 0$ with respect to some norm $\norm{\cdot}$ and both $f$ and $f^*$ are differentiable, then $\forall \beta > 0$, the first condition implies the second condition and the third condition:
\begin{enumerate}
    \item $\forall x, y, f(y) \leq f(x) + \langle \nabla f(x), y - x\rangle + \frac{\beta}{2}\norm{x - y}^2$;
    \item $\forall x, y, f(y) \geq f(x) + \langle \nabla f(x), y - x\rangle + \frac{1}{2\beta}\norm{\nabla f(x) - \nabla f(y)}_*^2$;
    \item $\forall x, y, \norm{\nabla f(x) - \nabla f(y)}_* \leq \beta \norm{x - y}$.
\end{enumerate}
\end{lemma}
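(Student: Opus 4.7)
The plan is to route everything through Fenchel duality. Since $f$ is $\alpha$-strongly convex with $\alpha > 0$ and both $f$ and $f^*$ are differentiable, the gradient map $\nabla f : \mathcal{X} \to \mathcal{X}^*$ is a bijection with inverse $\nabla f^*$, and the Fenchel--Young equality $f(x) + f^*(\nabla f(x)) = \langle \nabla f(x), x \rangle$ holds at every $x$. The engine is the known conjugate duality between smoothness and strong convexity (the paper essentially states this as ``Theorem \ref{ConjugateT1}'' in the commented-out block): condition 1 says $f$ is $\beta$-strongly smooth w.r.t.\ $\|\cdot\|$, which is equivalent to $f^*$ being $\tfrac{1}{\beta}$-strongly convex w.r.t.\ the dual norm $\|\cdot\|_*$. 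I would state this equivalence up front (citing Kakade--Shalev-Shwartz--Tewari) and then use it as a black box.

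For the implication $1 \Rightarrow 2$, I would apply the strong convexity inequality for $f^*$ at the points $u = \nabla f(x)$ and $v = \nabla f(y)$:
\[
f^*(v) \;\geq\; f^*(u) + \langle \nabla f^*(u),\, v - u\rangle + \tfrac{1}{2\beta}\,\|v - u\|_*^2.
\]
Because $\nabla f^*(u) = x$ and $\nabla f^*(v) = y$, and because Fenchel--Young gives $f^*(u) = \langle u, x\rangle - f(x)$ and $f^*(v) = \langle v, y\rangle - f(y)$, substituting and simplifying collapses the linear terms into $\langle \nabla f(y), x - y\rangle$, leaving
\[
f(x) \;\geq\; f(y) + \langle \nabla f(y),\, x - y\rangle + \tfrac{1}{2\beta}\,\|\nabla f(x) - \nabla f(y)\|_*^2,
\]
which is condition 2 after swapping the roles of $x$ and $y$. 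The bookkeeping on this substitution is the only delicate step, but it is routine once the conjugate identities are invoked.

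For the implication $1 \Rightarrow 3$, I would apply condition 2 twice, once as written and once with $x, y$ swapped, and add the two inequalities. The function values cancel and the gradient terms combine to give the co-coercivity bound
\[
\langle \nabla f(x) - \nabla f(y),\, x - y\rangle \;\geq\; \tfrac{1}{\beta}\,\|\nabla f(x) - \nabla f(y)\|_*^2.
\]
Finally I would apply the generalized Cauchy--Schwarz (H\"older) inequality $\langle a, b\rangle \leq \|a\|_* \|b\|$ to the left-hand side with $a = \nabla f(x) - \nabla f(y)$ and $b = x - y$, then divide by $\|\nabla f(x) - \nabla f(y)\|_*$ (the case where this vanishes is trivial) to obtain condition 3.

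The main obstacle is really just the careful use of Fenchel--Young at the right points; once one commits to the substitution $u = \nabla f(x)$, $v = \nabla f(y)$, each step is algebraic. No topological or compactness subtleties arise because differentiability of $f^*$ together with $\alpha$-strong convexity of $f$ gives the bijection needed to push $x$ and $y$ through $\nabla f$ and back.
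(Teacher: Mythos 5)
Your proposal is correct and follows essentially the same route as the paper: both pass condition 1 through the Kakade--Shalev-Shwartz--Tewari duality to get $\tfrac{1}{\beta}$-strong convexity of $f^*$ in the dual norm, then apply that inequality at $\nabla f(x)$, $\nabla f(y)$ and simplify via Fenchel--Young and $\nabla f^*(\nabla f(x)) = x$ to obtain condition 2. The only difference is in deriving condition 3: the paper simply subtracts condition 2 from condition 1, whereas you symmetrize condition 2 to get co-coercivity and then apply H\"older's inequality; both are valid, and your route has the minor virtue of showing that condition 3 follows from condition 2 alone.
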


To prove Lemma \ref{DualNormT1}, we use Lemma \ref{ConjugateT1}, Lemma \ref{ConjugateL1}, and Lemma \ref{ConjugateL2} below.

The following lemma is Theorem 6 in \cite{kakade2009duality}.
\begin{lemma}\label{ConjugateT1}

If $f$ is convex and closed, the following two conditions are equivalent:
\begin{enumerate}
    \item $\forall x, y, f(y) \geq f(x) + \langle \nabla f(x), y - x\rangle + \frac{\beta}{2}\norm{x - y}^2$;
    \item $\forall x, y, f^*(y) \leq f^*(x) + \langle \nabla f^*(x), y - x\rangle + \frac{1}{2\beta}\norm{x - y}_*^2$
\end{enumerate}
i.e. f is $\beta-$strongly convex w.r.t some norm $\norm{\cdot}$ if and only if $f^*$ is $\frac{1}{\beta}$-strongly smooth w.r.t the dual norm $\norm{\cdot}_*$.
\end{lemma}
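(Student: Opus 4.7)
The plan is to prove this classical Legendre--Fenchel duality between strong convexity and strong smoothness using two standard tools from convex analysis: the Fenchel--Young (in)equality $f(u) + f^*(x) \geq \langle x, u \rangle$ with equality iff $x \in \partial f(u)$, and biconjugacy $f^{**} = f$ for closed convex $f$. I would prove the forward direction (1)$\Rightarrow$(2) directly and then derive the converse by applying the forward direction to $f^*$.

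For (1)$\Rightarrow$(2), I would fix $x, y$ and set $u = \nabla f^*(x)$, which under the stated regularity is well defined and satisfies $x \in \partial f(u)$ with $f^*(x) = \langle x, u\rangle - f(u)$. For any $v$, the $\beta$-strong convexity of $f$ combined with $x \in \partial f(u)$ gives
\[f(v) \geq f(u) + \langle x, v-u\rangle + \tfrac{\beta}{2}\norm{v-u}^2.\]
Substituting this into $f^*(y) = \sup_v\{\langle y, v\rangle - f(v)\}$ and rearranging,
\[f^*(y) \leq f^*(x) + \langle y-x, u\rangle + \sup_v\left[\langle y-x, v-u\rangle - \tfrac{\beta}{2}\norm{v-u}^2\right].\]
The remaining supremum reduces via the substitution $w = v - u$ to a one-dimensional problem: bounding $\langle y-x, w\rangle \leq \norm{y-x}_* \norm{w}$ and optimizing the concave quadratic $\norm{y-x}_* \norm{w} - \tfrac{\beta}{2}\norm{w}^2$ over $\norm{w} \geq 0$ gives value $\tfrac{1}{2\beta}\norm{y-x}_*^2$, with the bound tight because in finite dimensions the dual norm is attained. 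Recalling $u = \nabla f^*(x)$ then yields (2).

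For the converse (2)$\Rightarrow$(1), I would invoke biconjugacy. Since $f$ is closed and convex, $f = f^{**}$. The conjugate $f^*$ is itself closed and convex as a supremum of affine functions, so applying the already-established forward direction to $f^*$ (with smoothness parameter $1/\beta$ in the dual norm) would yield that $f^{**} = f$ is $\beta$-strongly convex with respect to the original norm, which is (1). Here I would need to verify that the roles of norm and dual norm swap correctly under the second application, using the standard fact that the bidual of a norm on a finite-dimensional space is the norm itself.

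The main obstacle I anticipate is not the algebra but the careful handling of differentiability and attainment. Strong convexity of $f$ makes $f^*$ finite and differentiable on the interior of its effective domain, so $\nabla f^*(x)$ exists and the Fenchel--Young identification $x \in \partial f(\nabla f^*(x))$ is valid; I would need to state these regularity facts cleanly before deploying them. Likewise, the dual-norm optimization relies on attainment of the supremum in $\norm{y-x}_* = \sup\{\langle y-x, w\rangle : \norm{w} \leq 1\}$, which holds by compactness in finite dimensions but should be noted. For the converse direction, the delicate point is that applying the forward direction to $f^*$ requires $f^{**}$ to be differentiable in the appropriate sense; since $f^{**} = f$ and $f$ is given to be closed convex, this follows, but it is worth stating explicitly to keep the symmetric argument rigorous.
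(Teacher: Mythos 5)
The paper does not actually prove this lemma; it is imported verbatim as Theorem~6 of Kakade, Shalev-Shwartz and Tewari \cite{kakade2009duality}, so your attempt has to stand on its own. Your forward direction (1)$\Rightarrow$(2) is the standard argument and is correct: with $u=\nabla f^*(x)$ you have the Fenchel--Young equality $f^*(x)=\langle x,u\rangle-f(u)$ and $x\in\partial f(u)$, the strong-convexity lower bound on $f(v)$ around $u$ with slope $x$ plugs into $f^*(y)=\sup_v\{\langle y,v\rangle-f(v)\}$, and the residual supremum is bounded by $\sup_{r\ge 0}\left(\norm{y-x}_*\,r-\tfrac{\beta}{2}r^2\right)=\tfrac{1}{2\beta}\norm{y-x}_*^2$. (Note you only need this as an upper bound, so attainment of the dual norm is irrelevant here; also, since condition (1) is stated with gradients, you are implicitly using differentiability of $f$ at $u$ to identify $x=\nabla f(u)$, which is consistent with how the paper uses the lemma.)

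The genuine gap is in the converse. What you established is the implication ``$g$ is $\beta$-strongly convex $\Rightarrow$ $g^*$ is $\tfrac{1}{\beta}$-strongly smooth.'' Applying this to $g=f^*$ requires as \emph{hypothesis} that $f^*$ is strongly convex, and its \emph{conclusion} would be that $f^{**}=f$ is smooth --- neither of which matches your situation: condition (2) hands you smoothness of $f^*$, and you want strong convexity of $f$. What the converse actually needs is the dual implication ``$g$ is $\tfrac{1}{\beta}$-strongly smooth $\Rightarrow$ $g^*$ is $\beta$-strongly convex,'' applied to $g=f^*$ together with biconjugacy; that is precisely the other half of the equivalence and does not follow formally from the half you proved, so invoking ``the already-established forward direction'' is a logical misstep rather than a regularity issue. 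The fix is a parallel direct argument: fix $x,y$, set $s=\nabla f(x)$ so that $x=\nabla f^*(s)$ and $f(x)+f^*(s)=\langle s,x\rangle$; then for every $z$, $f(y)=f^{**}(y)\ge \langle z,y\rangle-f^*(z)$, upper-bound $f^*(z)$ via the smoothness inequality (2) around $s$, and take the supremum over $z=s+w$, which yields $f(y)\ge f(x)+\langle \nabla f(x),y-x\rangle+\sup_w\left(\langle w,y-x\rangle-\tfrac{1}{2\beta}\norm{w}_*^2\right)=f(x)+\langle \nabla f(x),y-x\rangle+\tfrac{\beta}{2}\norm{y-x}^2$, using $\norm{\cdot}_{**}=\norm{\cdot}$ in finite dimensions. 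With that substitution your proof is complete; as written, the second half does not go through.
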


The next lemma is a special case of Lemma 17 in \cite{shalev2010equivalence}.
\begin{lemma}\label{ConjugateL1}
Let f be a closed, convex, and differentiable function. Then we have
\[ f^*(\nabla f(x)) + f(x) = \langle \nabla f(x), x\rangle. \]
\end{lemma}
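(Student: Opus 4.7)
The plan is to prove the Fenchel--Young equality by showing directly from the variational definition of the conjugate that the supremum defining $f^*(\nabla f(x))$ is attained at $x$ itself. Recall that by definition,
\[ f^*(\nabla f(x)) \;=\; \sup_{z \in \mathcal{X}}\bigl\{ \langle z, \nabla f(x)\rangle - f(z) \bigr\}. \]
My strategy is to analyze the concave objective $g(z) := \langle z, \nabla f(x)\rangle - f(z)$ and argue that $z = x$ is a global maximizer; substituting then yields the claimed identity.

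First, I would note that since $f$ is convex, $-f$ is concave, so $g$ is concave in $z$. Since $f$ is differentiable, so is $g$, with $\nabla g(z) = \nabla f(x) - \nabla f(z)$. The first-order optimality condition for a concave maximization is $\nabla g(z) = 0$, i.e.\ $\nabla f(z) = \nabla f(x)$. Because $g$ is concave, any point satisfying this is a global maximizer. Clearly $z = x$ satisfies $\nabla f(z) = \nabla f(x)$, so $x$ achieves the supremum. Evaluating gives
\[ f^*(\nabla f(x)) \;=\; g(x) \;=\; \langle x, \nabla f(x)\rangle - f(x), \]
and rearranging yields $f^*(\nabla f(x)) + f(x) = \langle \nabla f(x), x\rangle$, as desired.

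An equivalent route, which I would mention as a sanity check, is to invoke the Fenchel--Young inequality $f(x) + f^*(y) \geq \langle x, y\rangle$ together with its well-known equality condition $y \in \partial f(x)$. Differentiability of $f$ forces $\partial f(x) = \{\nabla f(x)\}$, so the choice $y = \nabla f(x)$ saturates the inequality and produces the same identity. This alternative requires no additional work beyond citing a standard fact, so the direct variational argument above is cleaner and self-contained.

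The only subtle point is making sure $x$ is in the effective domain and that the supremum is not merely a supremum but is actually attained; closedness and convexity of $f$ together with differentiability at $x$ guarantee this, since the first-order condition is necessary and sufficient for maximization of a concave differentiable function. There are no substantive obstacles: this is essentially a one-line consequence of the definition of the conjugate combined with the optimality condition for a concave function.
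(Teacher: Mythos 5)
Your proof is correct, but it takes a different route from the paper: the paper does not prove this lemma at all, instead citing it as a special case of Lemma 17 in \cite{shalev2010equivalence}, which is the subgradient form of the Fenchel--Young equality ($f(w) + f^*(\theta) = \langle \theta, w\rangle$ for $\theta \in \partial f(w)$) --- essentially the ``sanity check'' route you mention in passing. Your main argument is instead a self-contained variational one: the objective $z \mapsto \langle z, \nabla f(x)\rangle - f(z)$ is concave and differentiable with gradient vanishing at $z = x$, so by the gradient inequality for concave functions $x$ is a global maximizer, and evaluating gives the identity. This is sound, and it has the advantage of not relying on an external reference; one small remark is that closedness of $f$ is not actually what guarantees attainment here --- the concavity-plus-stationarity argument already does, since $g(w) \leq g(x) + \langle \nabla g(x), w - x\rangle = g(x)$ for all $w$ in the (convex) domain --- so your closing sentence slightly misattributes the role of closedness (which matters elsewhere in the paper, e.g.\ for $f^{**} = f$, but not for this step). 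Also note that stationarity of $g$ at an interior point is what you use; if one wanted the result on a general convex domain, the same concavity inequality with $\nabla g(x) = 0$ still closes the argument, so no genuine gap arises.
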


\begin{comment}
%Please find the new proof below.
\begin{proof}[Proof of Lemma \ref{ConjugateL2}]
Let $\omega = (\nabla f)^{-1}(y)$. By lemma \ref{ConjugateL1}, we know
$$f^*(y) = \langle \omega, \nabla f(\omega)\rangle - f(\omega) = \langle (\nabla f)^{-1}(y), y\rangle - f((\nabla f)^{-1}(y))$$
For simplicity, let $h := (\nabla f)^{-1}$. Then
$$f^*(y) = \langle h(y), y\rangle - f(h(y))$$
Thus, let $c = h(y)$, we have
\begin{equation}
    \begin{aligned}
    \nabla f^*(y) &= (\frac{\partial f^*(y)}{\partial y})^T\\
    &= \Big(y^T \cdot \frac{\partial h}{\partial y}(y) + h(y)^T - \frac{\partial f}{\partial c}(c)\cdot \frac{\partial h}{\partial y}(y)\Big)^T\\
    &= \Big(y^T \cdot \frac{\partial h}{\partial y}(y) + h(y)^T - \nabla f(h(y))^T\cdot \frac{\partial h}{\partial y}(y)\Big)^T\\
    &= \Big(y^T \cdot \frac{\partial h}{\partial y}(y) + h(y)^T - y^T\cdot \frac{\partial h}{\partial y}(y)\Big)^T\\
    &= h(y)\\
    &= (\nabla f)^{-1} (y)
    \end{aligned}
\end{equation}
\end{proof}

\end{comment}

The following technical result describes a well-known property of the gradient of the Fenchel Conjugate.
\begin{lemma}\label{ConjugateL2}
Suppose f is $\alpha-$strongly convex for some $\alpha > 0$ with respect to some norm $\norm{\cdot}$ and both $f$ and $f^*$ are differentiable. Then we have
\[ x = \nabla f^*\left(\nabla f(x)\right), \forall x. \]
\end{lemma}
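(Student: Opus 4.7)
The plan is to deduce $x = \nabla f^*(\nabla f(x))$ by combining the Fenchel--Young equality (Lemma \ref{ConjugateL1}) with the variational definition of $f^*$, turning ``$x$ is the conjugate-gradient of $\nabla f(x)$'' into a first-order optimality condition for a nonnegative differentiable function that vanishes at the correct point.

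Concretely, fix an arbitrary $x$ and set $y_0 = \nabla f(x)$. By the definition of the Fenchel conjugate, for every $y$ in the domain of $f^*$ and every $z$ in the domain of $f$,
\[ f^*(y) \;\geq\; \langle y, z\rangle - f(z), \]
so in particular $f^*(y) \geq \langle y, x\rangle - f(x)$. Lemma \ref{ConjugateL1} applied at $x$ gives the matching equality $f^*(y_0) = \langle y_0, x\rangle - f(x)$. Therefore the auxiliary function
\[ g(y) \;:=\; f^*(y) - \langle y, x\rangle + f(x) \]
is nonnegative on the domain of $f^*$ and satisfies $g(y_0) = 0$, so $y_0$ is a global minimizer of $g$.

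Since $f^*$ is assumed differentiable, $g$ is differentiable at $y_0$, and the first-order optimality condition yields
\[ 0 \;=\; \nabla g(y_0) \;=\; \nabla f^*(y_0) - x, \]
that is, $\nabla f^*(\nabla f(x)) = x$, as claimed. Strong convexity of $f$ enters only implicitly: it guarantees that $f$ is closed and convex so that the Fenchel biconjugate and Lemma \ref{ConjugateL1} apply, and ensures that the supremum defining $f^*(y_0)$ is in fact attained at $z = x$ (the uniqueness piece, though not required for the identity above, justifies why the equality case in the Fenchel--Young inequality picks out exactly $x$).

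I anticipate no serious obstacle: the only subtle point is that one needs $y_0$ to lie in the interior of $\operatorname{dom} f^*$ so that the first-order condition $\nabla g(y_0) = 0$ is meaningful, and this is standard under the stated differentiability hypotheses on $f$ and $f^*$. No computation with specific norms or with the strong convexity constant $\alpha$ is required, which explains why the statement of the lemma holds for any strongly convex $f$ regardless of the underlying norm.
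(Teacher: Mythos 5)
Your proof is correct, but it takes a genuinely different route from the paper's. The paper applies the Fenchel--Young equality (Lemma \ref{ConjugateL1}) \emph{twice} --- once to $f$ at $x$ and once to $f^*$ at $y=\nabla f(x)$, via the identity $f^{**}=f$ --- subtracts the two, and then invokes $\alpha$-strong convexity to force $\norm{x-x'}=0$. You instead apply Lemma \ref{ConjugateL1} only once, combine it with the Fenchel--Young \emph{inequality} $f^*(y)\geq \langle y,x\rangle - f(x)$ to exhibit $y_0=\nabla f(x)$ as a global minimizer of the nonnegative differentiable function $g(y)=f^*(y)-\langle y,x\rangle+f(x)$, and read off $\nabla f^*(y_0)=x$ from the first-order condition. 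Your version is arguably cleaner: it avoids the biconjugation step $f^{**}=f$ and does not need the quantitative strong-convexity inequality at the end, using strong convexity only to guarantee that the setting is well-posed (in particular that $\operatorname{dom} f^*=\mathbb{R}^d$, so the stationarity condition at $y_0$ is legitimate --- a point you correctly flag). What the paper's argument buys in exchange is a self-contained chain of identities that stays entirely within the toolkit it has already set up (Lemmas \ref{ConjugateT1} and \ref{ConjugateL1}) and makes the role of the strong-convexity constant $\alpha$ explicit. Both proofs are valid.
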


\begin{proof}%[Proof of Lemma \ref{ConjugateL2}]
For convenience, we define $y = \nabla f(x)$ and $x' = \nabla f^*(y)$. It suffices to prove that $x' = x$.

By Lemma \ref{ConjugateL1}, we obtain
\begin{equation}\label{ConjugateL2E1}
    f^*(y) + f(x) = \langle y, x\rangle = \langle x, y \rangle.
\end{equation}
Again by Lemma \ref{ConjugateL1}, we have

\begin{equation}\label{ConjugateL2E2}
    f(x') + f^*(y) = f^{**}(x') + f^*(y) = \langle x', y\rangle,
\end{equation}
where we use the fact that $f^{**} = f$.

Combining equations \eqref{ConjugateL2E1} and \eqref{ConjugateL2E2}, we obtain
\begin{equation*}
    0 = f(x) - f(x') - \langle x - x', y \rangle = f(x) + \langle x' - x, \nabla f(x)\rangle - f(x') \leq -\frac{\alpha}{2}\norm{x - x'}^2,
\end{equation*}
where in the last inequality we use the definition of $\alpha-$strongly convex. Therefore we have proved that $x = x'$.
\end{proof}

Using the three lemmas above, we now prove Lemma \ref{DualNormT1}.

\begin{proof}[Proof of Lemma \ref{DualNormT1}]
By the first condition and Lemma \ref{ConjugateT1}, we know $f^*$ is $\frac{1}{\beta}-$strongly convex with respect to $\norm{\cdot}_*$. Therefore we see
\[ f^*(\nabla f(y)) \geq f^*(\nabla f(x)) + \langle \nabla f^*(\nabla f(x)), \nabla f(y) - \nabla f(x)\rangle + \frac{1}{2\beta}\norm{\nabla f(x) - \nabla f(y)}_*^2. \]
Using Lemma \ref{ConjugateL1} and Lemma \ref{ConjugateL2}, we obtain
\[ \langle y, \nabla f(y)\rangle - f(y) \geq \left(\langle x, \nabla f(x)\rangle - f(x)\right) + \langle x, \nabla f(y) - \nabla f(x)\rangle + \frac{1}{2\beta}\norm{\nabla f(x) - \nabla f(y)}_*^2. \]
Rearranging the terms, we get
\[ f(x) \geq f(y) + \langle x - y, \nabla f(y)\rangle + \frac{1}{2\beta}\norm{\nabla f(x) - \nabla f(y)}_*^2, \]
which is the second condition.

The third condition follows from subtracting the second condition from the first condition.
\end{proof} 

Finally, before moving the the proofs of our main results, we prove two properties of the Bregman Divergence that play an important role in the analysis.

%\textcolor{blue}{Fixed. Ready for reviewing.}
\begin{lemma}\label{GWGL1}
$\forall a, b, c \in \mathbb{R}^d$ and potential h, we have
\[ \langle \nabla h(b) - \nabla h(c), c - a\rangle = D_h(a || b) - D_h(a || c) - D_h(c || b). \]
\end{lemma}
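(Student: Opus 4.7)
The plan is to verify the identity by a direct algebraic expansion of each Bregman divergence on the right-hand side, collect the $h(\cdot)$ terms so they cancel, and then match the remaining inner products against the left-hand side. There is no real technical obstacle here; the whole content is bookkeeping, so the main thing to be careful about is to keep track of signs when expanding the three divergences.

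Concretely, I would first recall the definition
\[ D_h(x \,\|\, y) = h(x) - h(y) - \langle \nabla h(y), x - y \rangle, \]
and write out
\begin{align*}
D_h(a \,\|\, b) &= h(a) - h(b) - \langle \nabla h(b), a - b \rangle, \\
D_h(a \,\|\, c) &= h(a) - h(c) - \langle \nabla h(c), a - c \rangle, \\
D_h(c \,\|\, b) &= h(c) - h(b) - \langle \nabla h(b), c - b \rangle.
\end{align*}
Then I would form $D_h(a\,\|\,b) - D_h(a\,\|\,c) - D_h(c\,\|\,b)$ and observe that the $h(a)$, $h(b)$, and $h(c)$ terms all cancel, leaving only inner products.

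What remains is
\[ -\langle \nabla h(b), a - b\rangle + \langle \nabla h(c), a - c\rangle + \langle \nabla h(b), c - b\rangle. \]
I would group the two terms involving $\nabla h(b)$ to get $\langle \nabla h(b), (c-b) - (a-b)\rangle = \langle \nabla h(b), c - a\rangle$, and rewrite $\langle \nabla h(c), a - c\rangle = -\langle \nabla h(c), c - a\rangle$. Combining these two gives $\langle \nabla h(b) - \nabla h(c), c - a\rangle$, which is exactly the left-hand side. Since every step is a straightforward manipulation that requires only differentiability of $h$ (already assumed in the surrounding setup), the identity follows.
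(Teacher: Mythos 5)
Your proposal is correct and follows essentially the same route as the paper: expand all three Bregman divergences by definition, cancel the $h(a)$, $h(b)$, $h(c)$ terms, and regroup the remaining inner products into $\langle \nabla h(b) - \nabla h(c), c - a\rangle$. No gaps.
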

\begin{proof}%[Proof of Lemma \ref{GWGL1}]
By the definition of Bregman Divergence, we obtain
\begin{equation*}
    \begin{aligned}
    D_h(a || b) - & D_h(a || c) - D_h(c || b)\\
    =\phantom{\,}& \left(h(a) - h(b) - \langle \nabla h(b), a - b \rangle\right) - \left(h(a) - h(c) - \langle \nabla h(c), a - c\rangle \right)\\
    &- \left(h(c) - h(b) - \langle \nabla h(b), c - b\rangle \right)\\
    =\phantom{\,}& -\langle \nabla h(b), a - b \rangle + \langle \nabla h(c), a - c\rangle + \langle \nabla h(b), c - b\rangle \\
    =\phantom{\,}& \left(-\langle \nabla h(b), a - b \rangle + \langle \nabla h(b), c - b\rangle \right) + \langle \nabla h(c), a - c\rangle \\
    =\phantom{\,}& \langle \nabla h(b), c - a\rangle + \langle \nabla h(c), a - c\rangle \\
    =\phantom{\,}& \langle \nabla h(b) - \nabla h(c), c - a \rangle.
    \end{aligned}
\end{equation*}
%\gautam{Pull Bregman lemmas into separate section?}
\end{proof}

%\textcolor{blue}{Fixed. Ready for reviewing.}

\begin{lemma}\label{GWGL2}
For all $a, b, c\in \mathbb{R}^d$, we have
$$D_h(c || a) - D_h(c || b) = D_h(0 || a) - D_h(0 || b) + \langle \nabla h(b) - \nabla h(a), c\rangle.$$
\end{lemma}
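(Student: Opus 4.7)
The plan is to prove the identity by direct expansion using the definition of the Bregman divergence, namely $D_h(x\|y) = h(x) - h(y) - \langle \nabla h(y), x - y\rangle$. The key observation is that when we compute the difference $D_h(c\|a) - D_h(c\|b)$, the $h(c)$ terms cancel, leaving an expression that depends on $c$ only through the linear term $\langle \nabla h(b) - \nabla h(a), c\rangle$. Everything else is a ``$c$-free'' quantity depending on $a$ and $b$ alone.

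Concretely, I would first write
\[
D_h(c\|a) - D_h(c\|b) = \bigl(h(b) - h(a)\bigr) + \bigl(\langle \nabla h(a), a\rangle - \langle \nabla h(b), b\rangle\bigr) + \langle \nabla h(b) - \nabla h(a), c\rangle,
\]
by expanding each term and grouping the $c$-dependent inner products. Then I would specialize this identity to $c = 0$, in which case the final inner product vanishes, to obtain
\[
D_h(0\|a) - D_h(0\|b) = \bigl(h(b) - h(a)\bigr) + \bigl(\langle \nabla h(a), a\rangle - \langle \nabla h(b), b\rangle\bigr).
\]
Subtracting these two displays gives exactly the claimed identity.

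There is no real obstacle here — the proof is a short, purely algebraic manipulation. The only thing to be careful about is bookkeeping the signs and the inner-product expansions when splitting $\langle \nabla h(a), c - a\rangle$ into $\langle \nabla h(a), c\rangle - \langle \nabla h(a), a\rangle$, and likewise for the $b$ term. Alternatively, one could try to derive this as a corollary of Lemma~\ref{GWGL1} by a clever substitution, but the direct expansion is cleaner and more transparent, so I would present it that way.
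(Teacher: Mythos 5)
Your proposal is correct and is essentially the paper's own proof: both arguments expand $D_h(c\|a) - D_h(c\|b)$ directly from the definition, cancel the $h(c)$ terms, and identify the $c$-free remainder $\bigl(h(b) - \langle \nabla h(b), b\rangle\bigr) - \bigl(h(a) - \langle \nabla h(a), a\rangle\bigr)$ with $D_h(0\|a) - D_h(0\|b)$. The only cosmetic difference is that you obtain this identification by specializing your first display at $c=0$, whereas the paper recognizes it by inspection.
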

\begin{proof}%[Proof of Lemma \ref{GWGL2}]
Using the definition of Bregman divergence, we obtain
\begin{equation*}
    \begin{aligned}
    D_h(c || a) -& D_h(c || b)\\
    =\phantom{\,}& h(c) - h(a) - \langle \nabla h(a), c - a\rangle - h(c) + h(b) + \langle \nabla h(b), c - b\rangle\\
    =\phantom{\,}& \big( h(b) - \langle \nabla h(b), b\rangle \big) - \big( h(a) - \langle \nabla h(a), a\rangle \big) + \langle \nabla h(b) - \nabla h(a), c\rangle\\
    =\phantom{\,}& D_h(0 || a) - D_h(0 || b) + \langle \nabla h(b) - \nabla h(a), c\rangle.
    \end{aligned}
\end{equation*}
\end{proof}
\section{Proof of Theorem \ref{GeneralLowerT1}}

We consider a sequence of hitting cost functions on the real line such that the algorithm stays at the starting point through time steps $t = 1, 2, \cdots, n$ and is forced to incur a huge movement cost at time step $t = n + 1$, whereas the offline adversary can pay relatively little cost by dividing the long trek between $x_0$ and $v_{n+1}$ into multiple small steps through time steps $t = 1, 2, \cdots, n+1$.

Specifically, suppose the starting point of the algorithm and the offline adversary is $x_0 = x_0^* = 0$, and the hitting cost functions are
\[ f_t(x) = \begin{cases}
\frac{m}{2}x^2 & t\in \{1, 2, \cdots, n\}\\
\frac{m'}{2}(x - 1)^2 & t = n+1
\end{cases} \]
for some large parameter $m'$ that we choose later.

Suppose the algorithm first moves at time step $t_0$. If $t_0 < n+1$, we stop the game at time step $t_0$ and compare the algorithm with an offline adversary which always stays at $x = 0$. The total cost of offline adversary is 0, but the total cost of the algorithm is non-zero. So, the competitive ratio is unbounded.

Next we consider the case where $t_0 \geq n+1$. This implies that $x_1, \ldots x_n = 0$ and $x_{n+1}$ is some non-zero point, say $x$. We see that the cost incurred by the online algorithm is
\[ cost(ALG) \geq \min_{x_{n+1}}(M_{n+1} + H_{n+1}) = \min_x \left(\frac{1}{2}x^2 + \frac{m'}{2}(x - 1)^2\right).\]
Notice that the right hand side tends to $\frac{1}{2}$ as $m'$ tends to infinity; specifically, we have 
\begin{equation}\label{ALG}
    cost(ALG) \geq \min_x \left(\frac{1}{2}x^2 + \frac{m'}{2}(x - 1)^2\right) = \frac{1}{2\left(1 + \frac{1}{m'}\right)}.
\end{equation}

Now let us consider the offline optimal.  Notice that, in the limit as $m'$ tends to infinity, the offline optimal must satisfy $x_0^*=0$ and $x_{n+1}^* = 1$; otherwise it would incur unbounded cost. Our lower bound is derived by considering the case when $m'\to\infty$ and so we constrain the adversary to satisfy the above, knowing that the adversary is not optimal for finite $m'$, i.e., $cost(ADV)\geq cost(OPT)$ with $cost(ADV)\to cost(OPT)$ as $m'\to\infty$.

Let the sequence of points the adversary chooses as $x^* = (x_0^*, x_1^*, \cdots, x_{n+1}^*) \in \mathbb{R}^{n+2}$. We compute the cost incurred by the adversary as follows where, to simplify presentation, we define $\mathcal{K}(n, y)$ to be the set $\{x \in \mathbb{R}^{n+2} \mid x_i \leq x_{i+1}, x_0 = 0, x_{n+1} = y\}$. 
\begin{equation}
    \begin{aligned}
    a_n &= 2\min_{x^* \in \mathcal{K}(n, 1)}\sum_{i=1}^{n+1}(H_i^* + M_i^*)\\
    &= 2\min_{x^* \in \mathcal{K}(n, 1)}\left(\sum_{i=1}^n \frac{m}{2}(x_i^*)^2 + \sum_{i=1}^{n+1}\frac{1}{2}(x_i^* - x_{i-1}^*)^2\right).
    \end{aligned}
    \nonumber
\end{equation}
In words, $a_n$ is twice the minimal offline cost subject to the constraints $x_0^* = 0, x_{n+1}^* = 1$. We derive the limiting behavior of the offline costs as $n \to \infty$ in the following lemma. 

\begin{lemma}\label{GeneralLowerL1}
For $m > 0$, define
\[ a_n = 2\min_{x^* \in \mathcal{K}(n, 1) }\left(\sum_{i=1}^n \frac{m}{2}(x_i^*)^2 + \sum_{i=1}^{n+1}\frac{1}{2}(x_i^* - x_{i-1}^*)^2\right).
\]
Then we have $\lim_{n \to \infty} a_n = \frac{-m + \sqrt{m^2 + 4m}}{2}$.
\end{lemma}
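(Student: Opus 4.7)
The plan is to compute the optimum explicitly by solving the first-order stationarity conditions, which turn into a linear recurrence, and then simplify the resulting quadratic form with a summation-by-parts identity.

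First, I would drop the monotonicity constraint temporarily and minimize
\[ Q(x) = \frac{m}{2}\sum_{i=1}^n x_i^2 + \frac{1}{2}\sum_{i=1}^{n+1}(x_i - x_{i-1})^2 \]
subject only to $x_0 = 0$ and $x_{n+1} = 1$. Setting $\partial Q/\partial x_i = 0$ for $i = 1, \ldots, n$ yields the second-order linear recurrence
\[ x_{i+1} - (2+m)\, x_i + x_{i-1} = 0, \]
whose characteristic equation $\lambda^2 - (2+m)\lambda + 1 = 0$ has roots $\lambda_\pm = \tfrac{1}{2}\bigl(2+m \pm \sqrt{m^2+4m}\bigr)$ satisfying $\lambda_+ \lambda_- = 1$ and $0 < \lambda_- < 1 < \lambda_+$. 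The unique solution matching the boundary data is
\[ x_i^\star = \frac{\lambda_+^i - \lambda_-^i}{\lambda_+^{n+1} - \lambda_-^{n+1}}, \]
which is strictly increasing in $i$, so the monotonicity constraint is active nowhere and $x^\star$ is the genuine minimizer over $\mathcal{K}(n,1)$.

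Next, I would compute $a_n = 2Q(x^\star)$ in closed form. Writing $y_i = x_i^\star - x_{i-1}^\star$, the stationarity condition rewrites as $m x_i^\star = y_{i+1} - y_i$, so Abel summation gives
\[ m \sum_{i=1}^n (x_i^\star)^2 = \sum_{i=1}^n x_i^\star (y_{i+1}-y_i) = x_n^\star y_{n+1} - \sum_{i=1}^n y_i^2, \]
where the boundary term at $i=0$ vanishes because $x_0 = 0$ and hence $x_1^\star y_1 = y_1^2$. Adding $\sum_{i=1}^{n+1} y_i^2$ telescopes the $y_i^2$ terms, leaving
\[ 2 Q(x^\star) = x_n^\star y_{n+1} + y_{n+1}^2 = y_{n+1}\, x_{n+1} = 1 - x_n^\star. \]
Thus $a_n = 1 - x_n^\star$.

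Finally, passing to the limit is routine: since $\lambda_+ > 1 > \lambda_- > 0$ with $\lambda_+ \lambda_- = 1$, the dominant-root term gives
\[ x_n^\star \;=\; \frac{\lambda_+^n - \lambda_-^n}{\lambda_+^{n+1} - \lambda_-^{n+1}} \;\longrightarrow\; \frac{1}{\lambda_+} \;=\; \lambda_-, \]
so $\lim_{n\to\infty} a_n = 1 - \lambda_- = \tfrac{1}{2}\bigl(-m + \sqrt{m^2+4m}\bigr)$, as claimed. The main technical step is the Abel-summation identity that collapses $2Q(x^\star)$ to the clean expression $1 - x_n^\star$; once this is in hand, solving the linear recurrence and taking the dominant-root limit are standard.
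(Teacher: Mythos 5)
Your proposal is correct, and it takes a genuinely different route from the paper. The paper never solves for the optimal trajectory: it exploits the degree-2 homogeneity of the objective to derive the dynamic-programming recursion $a_{n+1} = \frac{a_n + m}{a_n + m + 1}$ with $a_0 = 1$, identifies the two fixed points of this M\"obius map, observes that $\frac{a_n - x_1}{a_n - x_2}$ evolves geometrically, and concludes convergence to the attracting fixed point. You instead solve the Euler--Lagrange (stationarity) system directly, obtaining the explicit minimizer $x_i^\star = \frac{\lambda_+^i - \lambda_-^i}{\lambda_+^{n+1} - \lambda_-^{n+1}}$, and then collapse the quadratic form via summation by parts to the clean closed form $a_n = 1 - x_n^\star$, after which the limit is a dominant-root computation. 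All the steps check out: the stationarity recurrence $x_{i+1} - (2+m)x_i + x_{i-1} = 0$ is right, the candidate is strictly increasing so the monotonicity constraint in $\mathcal{K}(n,1)$ is inactive and convexity makes it the true constrained minimizer, the Abel-summation identity $m\sum_{i=1}^n (x_i^\star)^2 = x_n^\star y_{n+1} - \sum_{i=1}^n y_i^2$ is correct (the $i=1$ term folds in because $x_1^\star y_1 = y_1^2$), and $1 - \lambda_- = \frac{-m+\sqrt{m^2+4m}}{2}$. One can sanity-check that your closed form agrees with the paper's: $a_0 = 1$ and $a_1 = 1 - \frac{1}{2+m} = \frac{1+m}{2+m}$ match the recursion. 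The trade-off: the paper's argument is shorter and avoids solving any recurrence in closed form, while yours produces the explicit optimal offline trajectory (a geometric-like path of shrinking steps), which makes the ``many small steps'' intuition in the theorem's discussion concrete and gives an exact expression for $a_n$ at every finite $n$, not just its limit.
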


Given the lemma, the total cost of the offline adversary will be $\frac{a_n}{2}$. Finally, applying \eqref{ALG}, we know $\forall n$ and $\forall m' > 0$, 
\[ \frac{cost(ALG)}{cost(ADV)} \geq \frac{\frac{1}{2(1 + \frac{1}{m'})}}{\frac{a_n}{2}} = \frac{1}{(1 + \frac{1}{m'})a_n}.\]
By taking the limit $n \to \infty$ and $m' \to \infty$ and using Lemma \ref{GeneralLowerL1}, we obtain
\[ \frac{cost(ALG)}{cost(OPT)} = \lim_{n,m'\to\infty}\frac{cost(ALG)}{cost(ADV)} \geq \left(\frac{-m + \sqrt{m^2 + 4m}}{2}\right)^{-1} = \frac{1 + \sqrt{1 + \frac{4}{m}}}{2}.\]

\begin{comment}
\subsubsection*{An Alternative Method to compute the limit of $\{a_n\}$}
(Since Haoyuan suggested that the first proof for Lemma \ref{GeneralLowerL1} is too complicated, an alternative is shown below. Do you think it is simpler?)
\begin{proof}
Now we use induction to prove that for all n, we have
\begin{equation}\label{GeneralLower_induction}
    a_n \geq x_1.
\end{equation}
Notice that $a_0 = 1 \geq x_1$.

Suppose inequality \ref{GeneralLower_induction} holds for $n = k$. Then we have
\[ a_{k+1} = \frac{a_k + m}{a_k + m + 1} \geq \frac{x_1 + m}{x_1 + m + 1} = x_1. \]
So inequality \ref{GeneralLower_induction} also holds for $n = k+1$.

By induction, we know inequality \ref{GeneralLower_induction} holds for all n.

Using inequality \ref{GeneralLower_induction}, we obtain
\[ - a_n^2 - m a_n + m \leq 0.\]
Thus $\forall n$, we have
\[a_{n+1} - a_n = \frac{a_n + m}{a_n + m + 1} - a_n = \frac{- a_n^2 - m a_n + m}{a_n + m + 1} \leq 0.\]
Therefore we obtain
\begin{equation}\label{GeneralLower_Decrease}
    a_{n+1} \leq a_n, \forall n.
\end{equation}

Combining inequalities \ref{GeneralLower_induction} and \ref{GeneralLower_Decrease}, we know $\lim_{n\to \infty}a_n$ exists. Suppose $L = \lim_{n\to \infty}a_n$.

Solving the equation
\[L = \frac{L + m}{L + m + 1}\]
and use the knowledge that $L \geq x_1$, we get $L = x_1$, which means
\[\lim_{n\to \infty}a_n = x_1 = \frac{-m + \sqrt{m^2 + 4m}}{2}.\]
\end{proof}

\end{comment}

All that remains is to prove Lemma \ref{GeneralLowerL1}, which describes the cost of the offline adversary in the limit as $n$ tends to infinity.

\begin{proof}[Proof of Lemma \ref{GeneralLowerL1}]

Using the fact that the costs are all homogeneous of degree 2, we see that for all $y \in [0, 1]$, we have
\begin{equation}\label{GeneralLowerE1}
\begin{aligned}
&\min_{x^* \in \mathcal{K}(n, y)}\left(\sum_{i=1}^n \frac{m}{2}(x_i^*)^2 + \sum_{i=1}^{n+1}\frac{1}{2}(x_i^* - x_{i-1}^*)^2\right)\\
={}& y^2\min_{x^* \in \mathcal{K}(n, 1)}\left(\sum_{i=1}^n \frac{m}{2}(x_i^*)^2 + \sum_{i=1}^{n+1}\frac{1}{2}(x_i^* - x_{i-1}^*)^2\right).
\end{aligned}
\end{equation}

The sequence $\{a_n\}, n \geq 0$ has a recursive relationship as follows:
\begin{equation}
    \begin{aligned}
    a_{n+1} &= 2\min_{x^* \in \mathcal{K}(n+1, 1)}\left(\sum_{i=1}^{n+1} \frac{m}{2}(x_i^*)^2 + \sum_{i=1}^{n+2}\frac{1}{2}(x_i^* - x_{i-1}^*)^2\right)\\
    &= 2\min_{0 \leq x\leq 1}\Bigg( \min_{x^* \in \mathcal{K}(n, x)}\Big(\sum_{i=1}^n \frac{m}{2}(x_i^*)^2 + \sum_{i=1}^{n+1}\frac{1}{2}(x_i^* - x_{i-1}^*)^2\Big)\\
    &\quad + \frac{m}{2}x^2 + \frac{1}{2}(1 - x)^2\Bigg)\\
    &= 2\min_{0 \leq x\leq 1}\Bigg( x^2\min_{x^* \in \mathcal{K}(n, 1)}\Big(\sum_{i=1}^n \frac{m}{2}(x_i^*)^2 + \sum_{i=1}^{n+1}\frac{1}{2}(x_i^* - x_{i-1}^*)^2\Big)\\
    &\quad + \frac{m}{2}x^2 + \frac{1}{2}(1 - x)^2\Bigg)\\
    &= 2\min_{0\leq x \leq 1}\left( \frac{a_n}{2}x^2 + \frac{m}{2}x^2 + \frac{1}{2}(1 - x)^2\right)\\
    &= \frac{a_n + m}{a_n + m + 1}.
    \end{aligned}
\end{equation}
%Here we applied the homogeneity observation to express the minimization via dynamic programming.

Solving the equation $x = \frac{x + m}{x + m + 1}$, we find the two fixed points of the recursive relationship $a_{n+1} = \frac{a_n + m}{a_n + m + 1}$ are
\[ x_1 = \frac{-m + \sqrt{m^2 + 4m}}{2}, \]
and
\[ x_2 = \frac{-m - \sqrt{m^2 + 4m}}{2}.\]

Notice that for $i = 1, 2$, we have
\[m - (m + 1)x_i = -(1 - x_i)x_i.\]
Using this property, we obtain
\begin{equation}\label{GeneralLowerE2}
    a_{n+1} - x_1 = \frac{a_n + m}{a_n + m + 1} - x_1 = \frac{(1 - x_1)a_n + m - (m + 1)x_1}{a_n + m + 1} = \frac{(1 - x_1)(a_n - x_1)}{a_n + m + 1},
\end{equation}
and
\begin{equation}\label{GeneralLowerE3}
    a_{n+1} - x_2 = \frac{a_n + m}{a_n + m + 1} - x_2 = \frac{(1 - x_2)a_n + m - (m + 1)x_2}{a_n + m + 1} = \frac{(1 - x_2)(a_n - x_2)}{a_n + m + 1}.
\end{equation}
Notice that $a_{n+1} - x_2 > 0$. By dividing equations \eqref{GeneralLowerE2} and \eqref{GeneralLowerE3}, we obtain
$$\left(\frac{a_{n+1} - x_1}{a_{n+1} - x_2}\right) = \frac{1 - x_1}{1 - x_2}\cdot \left(\frac{a_{n} - x_1}{a_{n} - x_2}\right), \forall n\geq 0.$$
Remember that $a_0 = 1$. Therefore we have
$$\left(\frac{a_{n} - x_1}{a_{n} - x_2}\right) = \left(\frac{1 - x_1}{1 - x_2}\right)^n \left(\frac{a_{0} - x_1}{a_{0} - x_2}\right) = \left(\frac{1 - x_1}{1 - x_2}\right)^{n+1}.$$
Rearranging this equation, we get
$$a_n = \left(1 - \left(\frac{1 - x_1}{1 - x_2}\right)^{n+1} \right)^{-1}\left(x_1 - x_2 \cdot \left(\frac{1 - x_1}{1 - x_2}\right)^{n+1}\right).$$
Since $0 < \left(\frac{1 - x_1}{1 - x_2}\right) < 1$, we have
\begin{equation}\label{ADVLim}
    \lim_{n\to \infty}a_n = x_1 = \frac{-m + \sqrt{m^2 + 4m}}{2}.
\end{equation}
\end{proof}

\section{Proof of Theorem \ref{OBDLowerT1}}

Our proof of Theorem \ref{OBDLowerT1} relies on a set of technical lemmas, which follow.   Lemma \ref{OBDLowerL1} and Lemma \ref{OBDLowerL2} work together to establish a lower bound on the competitive ratio as $m$ tends to zero when the balance parameter $\gamma$ is set to be $ o(1/m)$ , while Lemma \ref{OBDLowerL3} lower bound on the competitive ratio as $m$ tends to zero when the balance parameter $\gamma$ is set to be $\Omega(1/m)$.

\begin{lemma}\label{OBDLowerL1}
If $\gamma = o(1/m)$, the competitive ratio of OBD is $\Omega(1/(\gamma m))$ when $m\to 0^+$.
\end{lemma}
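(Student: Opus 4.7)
The plan is to construct an adversarial SOCO instance witnessing a competitive ratio of $\Omega(1/(\gamma m))$ for OBD as $m \to 0^+$ in the regime $\gamma = o(1/m)$. The construction is built on the geometric picture in Figure \ref{figure:OBDLowerMainBody}, in which the minimizers $\{v_t\}$ are arranged so that the OBD trajectory is forced around the boundary of some region while the offline optimum sits near the origin.

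Concretely, I would work with quadratic hitting costs $f_t(x) = (m_t/2)\|x - v_t\|_2^2$ with $m_t \geq m$, for which OBD's balance condition admits the closed form $\|x_t - v_t\| = \|x_{t-1}-v_t\|/(1+\sqrt{\gamma m_t})$ with $x_t$ lying on the segment from $x_{t-1}$ to $v_t$. The per-round OBD cost is then exactly $(1+\gamma)(m_t/2)\|x_t-v_t\|^2$. The key structural fact is that when OBD is forced to make a distance-$D$ move against a very steep hitting cost (i.e., $m_t \to \infty$), the balance condition blows up the total round cost to $(1+\gamma)D^2/(2\gamma) = \Omega(D^2/\gamma)$, so each such ``attack'' round contributes an extra factor of $1/\gamma$ relative to the raw movement $D^2/2$. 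The instance then interleaves long phases of ``idle'' rounds with $m_t = m$ that pin OBD to a chosen anchor, with occasional ``attack'' rounds of steep convexity whose minimizers are placed far from the anchor. Each attack forces OBD to pay the balanced $\Theta(1/\gamma)$ penalty, while the idle rounds simultaneously constrain the offline optimum via their $m$-strong convexity so that it cannot spread the required movement over arbitrarily many rounds the way it does in the proof of Theorem \ref{GeneralLowerT1}; instead, the offline optimum is pinned to a neighborhood of the anchor and its amortized per-attack cost scales like $\Theta(\gamma m)$ rather than $\Theta(\sqrt{m})$.

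Given the construction, the analysis has three steps. First, I would compute OBD's cost per attack in closed form using the balance recursion, obtaining $\Theta(1/\gamma)$ per attack as $m_t \to \infty$; this parallels the end-game calculation inside the proof of Theorem \ref{GeneralLowerT1}. Second, I would upper bound the offline optimum using a feasible ``stay near the anchor'' strategy, whose total cost is dominated by the $m$-strong hitting costs summed over the idle rounds, and sharpen this to $O(\gamma m)$ amortized per attack through a quadratic-programming argument that characterizes the true offline optimum on the coupled idle-plus-attack sequence. Third, summing over a horizon of $k$ attacks gives an OBD cost of $\Omega(k/\gamma)$ against an offline cost of $O(k\gamma m) + O(1)$, and letting $k$ grow yields the ratio $\Omega(1/(\gamma m))$.

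The main obstacle is the second step: forcing the offline optimum down to $\Theta(\gamma m)$ per attack rather than $\Theta(\sqrt{m})$. This requires that the idle rounds impose a genuinely binding $m$-strongly convex penalty on any offline trajectory that drifts from the anchor, which in turn requires coordinating the density, duration, and placement of the idle and attack rounds carefully. The hypothesis $\gamma = o(1/m)$ enters here non-trivially: it ensures that OBD's step fraction $\sqrt{\gamma m}/(1+\sqrt{\gamma m})$ is small enough that the adversary can steer OBD and maintain the anchor geometry throughout the horizon. The construction is thus an adversarial coupling between the adversary's $\{v_t\}$ and OBD's trajectory, and the bulk of the proof will be the quadratic-programming analysis that certifies the $\Theta(\gamma m)$ per-attack cost of the offline optimum.
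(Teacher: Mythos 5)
There is a genuine gap, and it sits exactly where you flagged it: the claim that the offline optimum can be forced down to $\Theta(\gamma m)$ amortized per attack is not just the hard step of your plan --- it is false for the construction you describe. In an ``anchor plus attack'' instance on the line, the offline player facing $n$ idle rounds with cost $\frac{m}{2}x^2$ followed by a steep attack at distance $D$ is solving precisely the program of Lemma \ref{GeneralLowerL1}; by the homogeneity identity \eqref{GeneralLowerE1} its optimal cost is $\frac{a_n}{2}D^2 \geq \frac{-m+\sqrt{m^2+4m}}{4}D^2 = \Theta(\sqrt{m})\,D^2$ for every $n$, no matter how you tune the density or duration of the idle phases. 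So the best ratio your instance can certify is $\Theta\bigl(\frac{1+\gamma}{\gamma\sqrt{m}}\bigr)$, which is $\Omega(1/(\gamma m))$ only when $\gamma = \Omega(m^{-1/2})$; for fixed $\gamma$ (the regime Theorem \ref{OBDLowerT1} actually needs) it gives only $\Omega(m^{-1/2})$, and combining it with Lemma \ref{OBDLowerL2} would not beat the general lower bound of Theorem \ref{GeneralLowerT1}. The structural reason is that your construction is one-dimensional in spirit: OBD and the adversary must cover the same net displacement $D$ per attack, and the only leverage left is amortization of movement, which caps out at the $\sqrt{m}$-versus-$1$ trade-off.

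The paper's proof escapes this by making the problem genuinely two-dimensional and decoupling OBD's step length from the minimizer's displacement. Each round the cost is $f_t(u) = \frac{m}{2}\norm{u - C}^2 + \alpha\, d(u, BC)$, a steep ``wall'' supported on a line $BC$ positioned so that OBD, starting at $A$ with $|AB| = \sqrt{\gamma m}\,\ell$ and $|BC| = \ell$, is driven (as $\alpha \to \infty$) essentially to the foot $B$ of the perpendicular --- a step of length $\sqrt{\gamma m}\,\ell$ taken almost orthogonally to the direction in which the minimizer actually advanced. The minimizer and the adversary move only $z = \sqrt{\ell^2 + \gamma m \ell^2} - \ell \approx \frac{\gamma m}{2}\ell$, so the adversary's movement cost is quadratically smaller than OBD's, giving a per-round ratio of $\frac{\sqrt{\gamma m + 1}+1}{\gamma m} \geq \frac{2}{\gamma m}$; and because the relative configuration (OBD at distance $\ell$ from the adversary) is restored at the end of each round, the gadget repeats forever with both players tracing concentric circles, so the one-time setup cost washes out. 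If you want to salvage your approach, you need some analogue of this orthogonality trick --- a mechanism by which OBD's balanced step is mostly wasted motion --- rather than steeper quadratics, which punish the offline player almost as much as they punish OBD.
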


\begin{proof}%[Proof of Lemma \ref{OBDLowerL1}]

Our approach is to construct a scenario where OBD is forced to move along the circumference of a large circle, but the offline adversary moves along the circumference of a much smaller circle (see Figure \ref{figure:OBDLowerMainBody}). The adversary is hence able to pay much smaller movements costs, forcing the competitive ratio to be large. %We divide the analysis into two steps. We first show how to force the OBD algorithm onto the circle. We then exhibit a series of cost functions which force the OBD algorithm to perpetually move along the circle. 

 We propose a series of costs which force OBD to move in a circle. The idea is to construct a cost function so that, at the end of every round, the relative positions of the OBD algorithm, the offline adversary, and the minimizer are fixed. Since OBD is memoryless, we can simply input this function arbitrarily many times and the positions of OBD and the offline adversary will trace out a pair of concentric circles (see Figure \ref{figure:OBDLowerMainBody}).
 
 Suppose that, at the start of a round, OBD is at the point $A$. Let $\ell$ be the distance between OBD and the adversary.  Consider a right triangle $ABC$ such that $|AB| = h = \sqrt{\gamma m} \ell$, the offline adversary is at some point $D$ on the hypotenuse $AC$ and $|AD| = |BC|= \ell$ (see Figure \ref{figure:OBDLowerProof2}). Let us introduce a coordinate system such that the origin lies at $C$, the $x$-axis contains $BC$ and the $y$-axis is parallel to $AB$, such that the positive part of the axis lies on the same side of $BC$ as the segment $AC$. Our goal is to construct a cost function which forces OBD towards $B$. This will preserve the relative positions of OBD and the adversary, since we assumed that they were a distance $\ell$ away at the start of the round. Consider the costs $g(u) = \frac{m}{2}\norm{u - C}^2$, $h(u) = \alpha \cdot d(u, BC)$ where $d(u, BC)$ is the distance from the point $u$ to the line passing through $B$ and $C$ and $\alpha >0$ is a parameter we will pick later. Define $f_t(u) = h(u) + g(u)$. Notice that $f_t$ is $m$-strongly convex because it is the sum of an $m$-strongly convex function and a convex function. Intuitively, when $\alpha$ is large, the function $f_t$ is infinity outside of the line $BC$ but is equal to $g(u) = \frac{m}{2} \|u - C\|^2$ when restricted to points $u$ on the line. After observing the cost $f_t$, OBD will pick some new point $E$.
 
 The following lemma highlights that $E$ can be driven arbitrarily close to $B$ by taking $\alpha$ to be sufficiently large.
 \begin{lemma}\label{OBDLowerL1:Close}
 Let $\varepsilon > 0$, and suppose $\alpha$ is picked to that $\alpha > \frac{hm\ell^2}{\varepsilon^2}$. Then the point $E$ picked by OBD satisfies $|EB| < \varepsilon$.
 \end{lemma}

%\textcolor{blue}{Gautam: we are now calling the distance x to be z. }
\begin{figure}
    \begin{center}
        \includegraphics{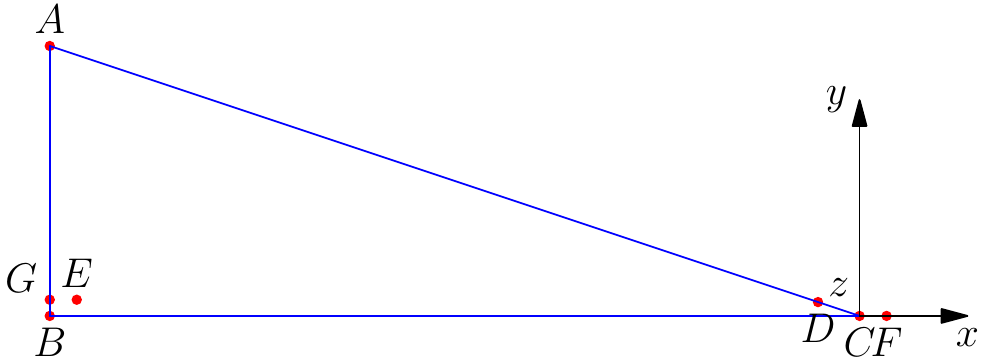}
    \end{center}
\caption{\emph{In the right triangle $\triangle ABC$, $\angle ABC = 90^o, |BC| = \ell, |AB| = h = \sqrt{\gamma m}\ell$. Point $D$ is on the line segment $AC$ such that $|AD| = \ell$. OBD starts at point $A$ and selects point $E$. The offline adversary starts at point $D$ and selects point $F$. $G$ is the projection point of $E$ on line segment $AB$.}}
\label{figure:OBDLowerProof2}
\end{figure}
We instruct the adversary to pick the point $F$ on the line $BC$ (the $x$-axis) such that $EF = \ell$ (see Figure \ref{figure:OBDLowerProof2}). Notice that $|CF| = |BF| - |BC| \leq |BE| + |EF| - |BC| = |EB| + \ell - \ell < \varepsilon$, where we used the triangle inequality. Let $z = |DC|$. We see that the total cost incurred by the offline adversary is
\[ M_t^* + H_t^* = \frac{1}{2}|DF|^2 + \frac{m}{2}|CF|^2 \leq \frac{1}{2}(|DC|+|CF|)^2 + \frac{m}{2}|CF|^2 \leq \frac{1}{2}(z + \varepsilon)^2 + \frac{m\varepsilon^2}{2},\]
where we applied the triangle inequality. 

Notice that $h = |AB| = \sqrt{|AC|^2 - |BC|^2}$ by the Pythagorean theorem (recall that $ABC$ is a right triangle). Since $|AC| = \ell + z$ and $|BC| = \ell$, we see that $h = \sqrt{2z\ell + z^2}$.
Hence the movement cost incurred by the OBD is
\[ M_t \geq \frac{1}{2}(h - \varepsilon)^2 = \frac{1}{2}(\sqrt{2z\ell + z^2} - \varepsilon)^2.\]

Hence the ratio of the costs is 
\[ \frac{M_t + H_t}{M_t^* + H_t^*} \geq \frac{M_t}{M_t^* + H_t^*} \geq \frac{\frac{1}{2}(\sqrt{2z\ell + z^2} - \varepsilon)^2}{\frac{1}{2}(z + \varepsilon)^2 + \frac{m\varepsilon^2}{2}}.\]

Since the limit of this expression as $\varepsilon \to 0$ is $\frac{2z\ell + z^2}{z^2}$, for sufficiently small $\varepsilon$ this will be at least $\frac{1}{2}\frac{2z\ell + z^2}{z^2} \ge \frac{\ell}{z}$. Since $z = \sqrt{h^2 + \ell^2} - \ell$ and $h = \sqrt{\gamma m} \ell$, the ratio of costs is at least 
\[ \frac{\ell }{\sqrt{\gamma m \ell^2 + \ell^2} - \ell} = \frac{1 }{\sqrt{\gamma m + 1} - 1} = \frac{\sqrt{\gamma m + 1} + 1 }{\gamma m} \geq \frac{2}{\gamma m}.\]

Now, we describe the whole process. When $t = 1$, the hitting cost function is $f_1(x) = \frac{m}{2}\norm{x}_2^2$. While OBD stays at $x = 0$, the adversary moves to the point $(\ell, 0)$; it incurs a one-time cost of $M_1^* + H_1^* = \frac{1}{2}\ell^2 + \frac{m}{2}\ell^2$. On all subsequent steps $t = 2 \ldots T$, we repeatedly apply the construction, which forces OBD to move in a circle. The one-time cost incurred by the adversary to setup the game is negligible in the limit as $T$ is large, and the per-round ratio of costs is $\Omega(\frac{1}{\gamma m})$, so the competitive ratio is also $\Omega(\frac{1}{\gamma m})$ as claimed. 
\end{proof}

The key technical lemma used in the proof is Lemma \ref{OBDLowerL1:Close}, and we now provide a proof of that result.

\begin{proof}[Proof of Lemma \ref{OBDLowerL1:Close}]
 Suppose $\alpha > \frac{hm\ell^2}{\varepsilon^2}$. We first show that OBD selects the point $E$ strictly contained by the $\frac{m}{2}\ell^2$-level set, which is the one $B$ lies on. First observe that the point $B$ satisfies the balance condition: $\frac{1}{2} |AB|^2 = \gamma \frac{m}{2}|BC|^2$, because we constructed $ABC$ so that $|AB| = h = \sqrt{\gamma m} \ell$ and $|BC| = \ell$. However, the point $B$ is not necessarily a projection of $A$ onto any level set of $f_t$. If OBD projected onto the level set which $B$ lies on, it would incur less cost than if it moved to $B$; however then the balance condition would be violated. To restore the balance condition, we must increase the movement cost while decreasing the hitting cost -- which means we must move to a strictly smaller level set, say the $\frac{m}{2}l_1^2$-level set, where $l_1 < l$.

Let $E_y$ denote the $y$-coordinate of $E$, using the coordinate system we define in the proof of Lemma \ref{OBDLowerL1}. Notice that $E_y = \frac{g(E)}{\alpha}$, since $g(E)$ was defined to be the vertical distance to the $x$-axis times $\alpha$. Since $g(E) \leq f_t(E)$, we see that $E_y \leq \frac{f_t(E)}{\alpha} = \frac{ml_1^2}{2\alpha} \leq \frac{ml^2}{2\alpha}$, where we used the fact that $E$ lies on the $\frac{m}{2}\ell_1^2$ level set and $\ell_1 \leq \ell$. By the balance condition, $\frac{1}{2}|AE|^2 = \frac{\gamma m}{2}l_1^2 \leq \frac{\gamma m}{2}l^2 = \frac{1}{2}h^2$. Let $G$ be the point with coordinates $(B_x, E_y)$. Applying the Pythagorean theorem successively to the right triangle $BEG$ and the right triangle $AEG$, we see that
\begin{equation}
    \begin{aligned}
    |EB|^2 &= |E_x - B_x|^2 + E_y^2\leq (|AE|^2 - (|AB| - E_y)^2) + E_y^2\\
    &\leq (|AB|^2 - (|AB| - E_y)^2) + E_y^2 \leq 2h\cdot E_y \leq h\frac{ml^2}{\alpha},
    \end{aligned}
\end{equation}
where we used the fact that $|AB| \geq |AE|$ and $|AB| = h$. Since we picked $\alpha > \frac{hm\ell^2}{\varepsilon^2}$, we see that $|EB| < \varepsilon$.
 
 \end{proof}

Now we move on to the next technical lemma in the proof of Theorem \ref{OBDLowerT1}.

\begin{lemma}\label{OBDLowerL2}

When $\gamma = o(\frac{1}{m})$, the competitive ratio of OBD is $\Omega(\sqrt{\frac{\gamma}{m}})$.
\end{lemma}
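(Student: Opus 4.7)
The plan is to construct an adversarial instance that captures OBD's suboptimal behavior in the $\gamma = o(1/m)$ regime, complementing the small-$\gamma$ geometric construction used in Lemma~\ref{OBDLowerL1}. The natural starting point is the one-dimensional shallow-then-steep sequence from Theorem~\ref{GeneralLowerT1}: a long phase of $n$ shallow hitting costs $f_t(x) = \tfrac{m}{2}x^2$ that keeps OBD stationary at $x = 0$ (OBD already sits at the minimizer, so the balance condition is trivially satisfied with zero movement and zero hitting), while the offline adversary is allowed to smooth its transition toward a target, paying total cost of order $\sqrt{m}$ in the limit $n \to \infty$ by Lemma~\ref{GeneralLowerL1}. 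This ``setup'' phase accounts for the $1/\sqrt{m}$ factor in the desired lower bound.

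Next, I would exploit the balance condition directly. On a final round with cost $\tfrac{m'}{2}(x-1)^2$, the balance condition $M_{n+1} = \gamma H_{n+1}$ pins OBD at $x_{n+1} = \sqrt{\gamma m'}/(1 + \sqrt{\gamma m'})$, yielding a total OBD cost of $m'(\gamma + 1)/(2(1+\sqrt{\gamma m'})^2)$. The key step is to choose $m'$ (possibly along with additional short oscillating or higher-dimensional structure in the final phase) as a function of $\gamma$ and $m$ so that the total ratio of OBD's cost to the adversary's smoothed cost attains $\Omega(\sqrt{\gamma / m})$. Intuitively, the extra $\sqrt{\gamma}$ factor beyond the basic Theorem~\ref{GeneralLowerT1} lower bound of $\Omega(1/\sqrt{m})$ arises from the fact that, in the moderate-to-large $\gamma$ regime, the balance condition forces OBD's movement to dominate its hitting cost by a factor $\gamma$: the adversary, by contrast, can use foresight to pay both movement and hitting costs at a rate dictated only by the shallow smoothing rate $\sqrt{m}$, not by $\gamma$.

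The main obstacle will be extracting a full factor of $\sqrt{\gamma}$ (rather than merely a factor of $\gamma$ from naive per-round oscillation or $1$ from a single steep round). This likely requires amortizing the adversary's $\sqrt{m}$ transition cost against OBD's forced movement over the final phase, and carefully controlling OBD's transient before it reaches its equilibrium response to the balance condition. Once this is established for all $\gamma$ in the range $o(1/m)$, combining Lemma~\ref{OBDLowerL2} with Lemma~\ref{OBDLowerL1} (handling the smallest $\gamma$ via the circle-chasing construction) and Lemma~\ref{OBDLowerL3} (handling $\gamma = \Omega(1/m)$) yields the overall $\Omega(m^{-2/3})$ lower bound for OBD's competitive ratio claimed in Theorem~\ref{OBDLowerT1}, since $\min_\gamma \max\!\left(\tfrac{1}{\gamma m},\,\sqrt{\tfrac{\gamma}{m}}\right) = \Theta(m^{-2/3})$ at the balancing choice $\gamma = m^{-1/3}$.
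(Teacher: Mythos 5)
Your proposal does not establish the lemma, and the gap is exactly the one you flag at the end but do not resolve. In your construction the minimizers of the shallow phase all sit at the origin, so OBD never moves until the final steep round, and on that round it travels a distance at most $1$; as $m'\to\infty$ its total cost tends to $\tfrac{1}{2}+\tfrac{1}{2\gamma}$, which is $O(1)$ in the regime where the lemma matters ($\gamma\to\infty$, e.g.\ $\gamma=m^{-1/3}$). Against the adversary's smoothed cost of $\Theta(\sqrt m)$ this yields only $\Omega(1/\sqrt m)$ --- the \emph{general} lower bound of Theorem~\ref{GeneralLowerT1}, which every algorithm suffers --- and falls short of the claimed $\Omega(\sqrt{\gamma/m})$ by a factor of $\sqrt{\gamma}$ (a factor of $m^{-1/6}$ at the critical $\gamma$). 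Rescaling the target point or repeating the gadget does not help, since both costs scale identically. A construction whose hitting costs are minimized where OBD already sits cannot expose OBD's suboptimality: it must specifically exploit the balance condition.

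The paper's proof supplies the missing idea by making the minimizers \emph{drift}: it uses $f_t(x)=\tfrac{m}{2}(x-t)^2$ for $t=1,\dots,n$, so the balance condition forces the recursion $x_t = x_{t-1} + \lambda(t-x_{t-1})$ with $\lambda = \sqrt{\gamma m}/(1+\sqrt{\gamma m})$. OBD chases the drifting minimizers with a geometric lag and, after $n=\lceil 1/\lambda\rceil$ rounds, finds itself at distance $\Theta(1/\lambda)=\Theta(1/\sqrt{\gamma m})$ from the origin, while the adversary stays at zero paying only $\Theta(mn^3)=\Theta(\gamma^{-3/2}m^{-1/2})$ in hitting cost. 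A final steep cost $m'x^2$ then forces OBD to pay movement cost $\Theta(1/\lambda^2)=\Theta(1/(\gamma m))$, and the ratio of these two quantities is exactly $\Theta(\sqrt{\gamma/m})$. Your closing observation about combining the three lemmas and balancing $\tfrac{1}{\gamma m}$ against $\sqrt{\gamma/m}$ at $\gamma=m^{-1/3}$ is correct, but the core amortization step you defer is precisely where the proof lives.
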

\begin{proof}%[Proof of Lemma \ref{OBDLowerL2}]
We consider a sequence of cost functions on the real line such that the OBD algorithm moves far away from the starting point, incurring significant movement costs, whereas the offline adversary could pay relatively little cost by staying at the starting point. More specifically, we consider the sequence of hitting cost functions $f_t(x) = \frac{m}{2}(x-t)^2, t = 1, 2, \cdots, n$. The value of $n$ will be picked later. We assume the starting point is at zero.

\begin{figure}[t]
    \begin{center}
        \includegraphics{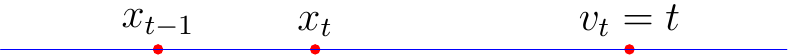}
        \caption{\emph{Balance condition at time step $t$ in Lemma \ref{OBDLowerL2}. Starting from $x_{t-1}$, OBD picks $x_t$ after observing the hitting cost function $f_t(x) = \frac{m}{2}(x - t)^2$.}}
        \label{figure:OBDLowerL2}
    \end{center}
\end{figure}

Notice that by the balance condition we always have $M_t = \gamma H_t$, so $\frac{1}{2}\|x_t - x_{t-1}\|^2 = \gamma \frac{m}{2}\|x_t - t\|^2.$ We can rearrange this expression to obtain $\frac{x_t - x_{t-1}}{t - x_{t}} = \sqrt{\gamma m}$. Define
\[ \lambda = \frac{x_t - x_{t-1}}{t - x_{t-1}} = \frac{\sqrt{\gamma m}}{1 + \sqrt{\gamma m}}.\]
We obtain the recursive equation $x_t = x_{t-1} + (t - x_{t-1})\lambda$ with initial condition $x_0 = 0$. Solving this equation, we obtain $x_t = t - \frac{1 - \lambda}{\lambda}(1 - (1 - \lambda)^t)$.

Suppose we picked $n$ to be = $\lceil \frac{1}{\lambda} \rceil $. By assumption, $\gamma = o(\frac{1}{m})$; hence in the limit as $m$ tends to zero, $\lambda$ also tends to zero.  Notice that $x_n = n - \frac{1 - \lambda}{\lambda}(1 - (1 - \lambda)^n) \geq \frac{1}{\lambda}\frac{1}{2e} - (1 - \frac{1}{e}) \geq \frac{1}{6\lambda}$ for sufficiently small $\lambda$. Here we used the fact that $(1 - \lambda)^\frac{1}{\lambda} \rightarrow e^{-1}$.

Suppose the next cost function is $f_{n+1}(x) = m'x^2$. Notice that if the offline adversary simply stays at zero throughout the game, the total cost it incurs would be
\[ cost(ADV) = \frac{m}{2}(1^2 + 2^2 + \cdots + n^2)\leq \frac{mn^3}{2} = \Theta \left(\frac{m}{\lambda^3} \right) = \Theta \left (\frac{1}{\sqrt{\gamma^3 m}} \right).\]
In the last step, we used the fact that $\lambda$ tends to $\sqrt{\gamma m}$ when $\gamma = o(\frac{1}{m})$ and $m$ tends to zero.

If we pick $m'$ large enough that OBD is forced to incur movement cost at least $\frac{1}{2}(\frac{x_n}{2})^2$, the total cost incurred by OBD is
\[cost(OBD) \geq \frac{1}{2} \left(\frac{x_n}{2} \right)^2  = \Theta \left(\frac{1}{\lambda^2} \right) = \Theta \left(\frac{1}{\gamma m} \right).\]
Putting these facts together, we see that the competitive ratio is at least $\Theta(\sqrt{\frac{\gamma}{m}})$.
\end{proof}

The last technical lemma used to proof Theorem \ref{OBDLowerT1} is the following.

\begin{lemma}\label{OBDLowerL3}
When $\gamma = \Omega(\frac{1}{m})$, the competitive ratio of OBD is $\Omega\left(\frac{1}{m}\right)$.
\end{lemma}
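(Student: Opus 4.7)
The plan is to construct a short instance on the real line that exploits the fact that when $\gamma m$ is large, the OBD balance condition forces OBD to move almost all the way to the minimizer of the current hitting cost each round. Under the assumption $\gamma = \Omega(1/m)$, we have $\sqrt{\gamma m}\ge c$ for some absolute constant $c>0$ as $m\to 0^+$, so the ``aggression ratio'' $\lambda := \sqrt{\gamma m}/(1+\sqrt{\gamma m})$ is bounded below by a positive constant (e.g.\ $\lambda\ge 1/2$ when $\gamma m\ge 1$). The idea is to first lure OBD far from the origin toward a distant minimizer and then force it to snap back, paying two large movement costs; the offline optimum, in contrast, can almost stay put.

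Concretely, set $x_0=0$ and use two rounds. In round one, take $f_1(x)=\tfrac{m}{2}(x-1)^2$ with $v_1=1$. Solving the balance condition $\tfrac{1}{2}x_1^2 = \gamma\cdot\tfrac{m}{2}(x_1-1)^2$ in $(0,1)$ yields $x_1=\lambda$. In round two, take $f_2(x)=\tfrac{M}{2}x^2$ with $v_2=0$ for a parameter $M$ to be driven to infinity; an analogous calculation gives $x_2=\lambda/(1+\sqrt{\gamma M})\to 0$, so the round-two movement cost tends to $\lambda^2/2$ and the round-two hitting cost tends to $\lambda^2/(2\gamma)=O(m)$. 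Adding everything, $\mathrm{cost}(\text{OBD})\ge \lambda^2 - O(m) - o_M(1)$.

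For the offline cost, I would simply exhibit the feasible sequence $x_1^*=x_2^*=0$, whose total cost is $\tfrac{m}{2}(0-1)^2+0+0+0=m/2$; hence $\mathrm{cost}(\text{OPT})\le m/2$ (the true optimum is only smaller). Taking $M\to\infty$ for each fixed $m$ and then $m\to 0^+$, the competitive ratio of OBD is at least
\[
\frac{\lambda^2-O(m)}{m/2} \;=\; \frac{2\lambda^2}{m}-O(1) \;=\; \Omega\!\left(\frac{1}{m}\right),
\]
using $\lambda\ge 1/2$. Since $f_1$ is $m$-strongly convex and $f_2$ is $M$-strongly convex with $M\ge m$, both costs lie in the allowed class.

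The main obstacle is handling the two limits ($M\to\infty$ and $m\to 0^+$) cleanly. I would address this by first fixing $m$ (and hence a lower bound on $\lambda$) and choosing $M=M(m)$ large enough that the round-two hitting cost of OBD and the asymptotic movement cost differ from their limits by at most, say, a factor of $(1+\varepsilon)$, so that $\mathrm{cost}(\text{OBD})\ge (1-\varepsilon)\lambda^2$. A secondary check is to confirm that $\mathrm{cost}(\text{OPT})=\Theta(m)$ and not smaller (one can simply upper-bound by $m/2$ as above, which is enough for the $\Omega(1/m)$ bound, so no precise computation of the offline optimum is needed).
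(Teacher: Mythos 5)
Your proposal is correct and uses essentially the same argument as the paper: the paper's proof is exactly your round one (the balance condition with $\gamma m \ge C$ forces OBD to move a constant distance toward $v_1=1$ while the adversary stays at $0$ paying only $m/2$), and it stops there. Your second "snap-back" round and the limit $M\to\infty$ are harmless but unnecessary, since the first round alone already yields the $\Omega(1/m)$ ratio.
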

\begin{proof}%[Proof of Lemma \ref{OBDLowerL3}]
Since $\gamma = \Omega(\frac{1}{m})$, we can assume there exists $C > 0$ such that $\gamma \geq C/m$. We again consider a situation such that the OBD algorithm moves far away from the starting point, incurring significant movement cost, whereas the offline adversary could pay relatively little cost by staying at the starting point. More specifically, suppose the starting point is zero and the first cost function is $f_1(x) = \frac{m}{2}(1 - x)^2$. Suppose the adversary stays at zero. The cost incurred by the adversary will be
\[cost(ADV) = \frac{m}{2}.\]
Notice that by the balance condition ($M_t = \gamma H_t$), the point $x_1$ picked by OBD satisfies $\frac{x_1^2}{2} = \gamma \frac{m}{2}(1 - x_1)^2$. So the cost incurred by OBD is lower bounded by
\[cost(OBD) \geq M_1 =  \frac{1}{2}\left( \frac{\sqrt{\gamma m}}{1 + \sqrt{\gamma m}}\right)^2 \geq \frac{1}{2}\left( \frac{\sqrt{C}}{1 + \sqrt{C}}\right)^2.\]
Since $C$ is a positive constant, the competitive ratio of OBD is lower bounded by $\frac{OBD}{ADV} = \Theta \left( \frac{1}{m}\right).$
\end{proof}

Now we return to the proof of Theorem \ref{OBDLowerT1}.  This proof is a straightforward combination of the above lemmas. When $\gamma = o(\frac{1}{m})$, by combining Lemma \ref{OBDLowerL1} and Lemma \ref{OBDLowerL2}, we know the competitive ratio is at least $\max \left(\frac{C_1}{\gamma m}, C_2\sqrt{\frac{\gamma}{m}}\right)$ for some positive constants $C_1, C_2$. Notice that function $\frac{C_1}{\gamma m}$ is monotonically decreasing in $\gamma$ and $C_2\sqrt{\frac{\gamma}{m}}$ is monotonically increasing in $\gamma$. Solving the equation $\frac{C_1}{\gamma m} = C_2\sqrt{\frac{\gamma}{m}}$, we get $\gamma = \left(\frac{C_1}{C_2}\right)^{\frac{2}{3}}m^{-\frac{1}{3}}$. Therefore we see that
\[\max \left\{\frac{C_1}{\gamma m}, C_2\sqrt{\frac{\gamma}{m}}\right\} \geq C_1^{\frac{1}{3}}C_2^{\frac{2}{3}}m^{-\frac{2}{3}} = \Theta(m^{-\frac{2}{3}}).\]
On the other hand, when $\gamma = \Omega(\frac{1}{m})$, by Lemma \ref{OBDLowerL3}, we know the competitive ratio of OBD is lower bounded by $\Theta\left(\frac{1}{m}\right)$.

Together, the above implies that the competitive ratio of OBD is at least $\Theta(m^{-\frac{2}{3}})$ when $m \to 0^+$.

%\end{proof}
\section{Proof of Theorem \ref{OBD_PLUS_T1}}
%\textcolor{blue}{Roughly fixed. Waiting for further tuning.}

To begin, note that it is sufficient to prove result for all positive $m \le \frac{9}{64}$. Similarly, it also suffices to show Theorem \ref{OBD_PLUS_T1} when the minimum of every hitting cost function is zero, since otherwise the competitive ratio can only improve if this is not the case.

Our argument makes use of the following potential function: $\phi(x_t, x_t^*) = \eta \norm{x_t - x_t^*}^2$.
We define $\Delta \phi = \phi(x_t, x_t^*) - \phi(x_{t-1}, x_{t-1}^*)$ and $\Delta \phi' = \phi(x_t', x_t^*) - \phi(x_{t-1}, x_{t-1}^*)$.
It suffices to show that $H_t + M_t + \Delta \phi \leq C(H_t^* + M_t^*)$, for some positive constant $C$.
From this inequality, we can sum over all timesteps $t$ to yield that the competitive ratio is upper bounded by $C$:
\[ \sum_{t=0}^T H_t + M_t \le \sum_{t=0}^T H_t + M_t + \Delta\phi \le C \sum_{t=0}^T\left( H^*_t + M^*_t\right).\]

Throughout the proof, we fix $\eta = 4$ and use $\norm{\cdot}$ to denote $\ell_2$ norm. When we refer to generalized mean inequality, we mean
\[ (a + b)^2 \leq 2a^2 + 2b^2, \forall a, b \in \mathbb{R}.\] 
We define $H_t' := f_t(x_t')$ and $M_t' := c(x_t', x_{t-1}) = \frac{1}{2}\norm{x_t' - x_{t-1}}_2^2$, where $x_t'$ is the point chosen by the first OBD phase (line 3) of Algorithm \ref{alg:OBD_PLUS}.

Before we move to the main casework in the proof, we begin with a technical lemma that we use to bound the change in the potential function.

\begin{lemma}
        \label{thm:quad}
        Suppose the potential function $\phi : \mathbb{R}^d \times \mathbb{R}^d \to \mathbb{R}_{\geq 0}$ is defined as $\phi(a, b) = \eta \norm{a - b}^2$, where $\eta > 0$. Then $\forall \, \lambda > 0$, the change in potential satisfies
        \[ \phi(a, c) - \phi(a, b)\leq (1+\lambda^2)\phi(b, c) + \frac{1}{\lambda^2}\phi(a, b),\]
        for all $a, b, c \in \mathbb{R}^d$.
\end{lemma}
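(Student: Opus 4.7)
The plan is to expand $\phi(a,c) - \phi(a,b) = \eta(\|a-c\|^2 - \|a-b\|^2)$ using the decomposition $a - c = (a-b) + (b-c)$, and then apply Young's inequality to the resulting inner product term with parameter $\lambda$.

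Concretely, I would first write
\[ \|a-c\|^2 = \|(a-b) + (b-c)\|^2 = \|a-b\|^2 + 2\langle a-b, b-c\rangle + \|b-c\|^2, \]
so that
\[ \|a-c\|^2 - \|a-b\|^2 = 2\langle a-b, b-c\rangle + \|b-c\|^2. \]
Next, I would bound the cross term by Cauchy--Schwarz and Young's inequality: for any $\lambda > 0$,
\[ 2\langle a-b, b-c\rangle \le 2\|a-b\|\cdot\|b-c\| \le \tfrac{1}{\lambda^2}\|a-b\|^2 + \lambda^2 \|b-c\|^2, \]
where the last step follows from $2xy \le \tfrac{1}{\lambda^2}x^2 + \lambda^2 y^2$ applied with $x = \|a-b\|$ and $y = \|b-c\|$. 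Combining these two displays gives
\[ \|a-c\|^2 - \|a-b\|^2 \le \tfrac{1}{\lambda^2}\|a-b\|^2 + (1+\lambda^2)\|b-c\|^2, \]
and multiplying through by $\eta$ yields the claimed bound on $\phi(a,c) - \phi(a,b)$.

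There is no real obstacle here: the lemma is essentially a weighted triangle-type inequality for the squared norm, and the only nontrivial choice is recognizing that Young's inequality with weight $\lambda^2$ is what produces the asymmetric split between the $\phi(a,b)$ and $\phi(b,c)$ terms on the right-hand side. Since the argument only uses inner-product identities and a one-parameter AM--GM bound, it is valid for any inner-product norm on $\mathbb{R}^d$ and any $\eta, \lambda > 0$, matching the statement of the lemma exactly.
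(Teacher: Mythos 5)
Your proof is correct and follows essentially the same route as the paper's: the same decomposition through $b$ and the same weighted AM--GM/Young step with parameter $\lambda$ to split the cross term asymmetrically. The only cosmetic difference is that you obtain the cross term $2\norm{a-b}\,\norm{b-c}$ by expanding the square and applying Cauchy--Schwarz, whereas the paper gets it directly from the triangle inequality $\norm{a-c} \leq \norm{a-b} + \norm{b-c}$ (which, incidentally, makes the paper's version valid for an arbitrary norm rather than only inner-product norms).
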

\begin{proof}%[Proof of Lemma \ref{thm:quad}]
        Using the triangle inequality, we obtain
        \[ \norm{a - c}^2 \leq (\norm{a - b} + \norm{b - c})^2 = \norm{a - b}^2 + \norm{b - c}^2 + 2\norm{a - b}\norm{b - c}.\]
        Rearranging the terms, we obtain
        \begin{equation*}
            \begin{aligned}
                \norm{a - c}^2 - \norm{a - b}^2 
                &\leq \norm{b - c}^2 + 2\norm{a - b} \norm{b - c} \\
                &= \norm{b - c}^2 + 2(\frac{1}{\lambda}\norm{a - b}) (\lambda \norm{b - c}) \\
                &\leq (1 + \lambda ^ 2) \norm{b - c}^2 + \frac{1}{\lambda^2}\norm{a - b}^2,
            \end{aligned}
        \end{equation*}
        where in the last line we use the AM-GM inequality.
\end{proof}

We are now ready to precede with the proof, which is divided up into two cases based on the relationship between the hitting cost of the algorithm and that of the adversary.

\subsubsection*{Case 1: $H_t'\leq H_t^*$}
Since the hitting cost function satisfies $f_t(x) \geq \frac{m}{2}\norm{x - v_t}^2$, by the triangle inequality, we have
\begin{equation}\label{OBDPlusE1}
    \norm{x_t' - x_t^*} \leq \norm{x_t' - v_t} + \norm{x_t^* - v_t} \leq  \left(\sqrt{\frac{2H_t'}{m}} + \sqrt{\frac{2H_t^*}{m}}\right).
\end{equation}
Thus the change in potential satisfies
%\begin{equation}
\begin{subequations}\label{Plus_E1}
    \begin{align}
    \frac{1}{\eta}\Delta \phi' &= \norm{x_t' - x_t^*}^2 - \norm{x_{t-1} - x_{t-1}^*}^2\nonumber\\
    &= (\norm{x_t' - x_t^*} - \norm{x_{t-1} - x_{t-1}^*})(\norm{x_t' - x_t^*} + \norm{x_{t-1} - x_{t-1}^*})\nonumber\\
    &\leq (\norm{x_t' - x_{t-1}} + \norm{x_t^* - x_{t-1}^*})\big(\norm{x_t' - x_{t-1}} + \norm{x_t^* - x_{t-1}^*} + 2\norm{x_t' - x_t^*}\big)\label{Plus_E1:S3}\\
    &= (\norm{x_t' - x_{t-1}} + \norm{x_t^* - x_{t-1}^*})^2 + 2(\norm{x_t' - x_{t-1}} + \norm{x_t^* - x_{t-1}^*})\norm{x_t' - x_t^*}\nonumber\\
    &\leq 2\norm{x_t' - x_{t-1}}^2 + 2\norm{x_t^* - x_{t-1}^*}^2 + 2(\norm{x_t' - x_{t-1}} + \norm{x_t^* - x_{t-1}^*})\norm{x_t' - x_t^*}\label{Plus_E1:S5}\\
    &\leq 4M_t' + 4M_t^* + 2(\sqrt{2M_t'} + \sqrt{2M_t^*})\left(\sqrt{\frac{2H_t'}{m}} + \sqrt{\frac{2H_t^*}{m}}\right)\label{Plus_E1:S6}\\
    &\leq 4M_t' + 4M_t^* + \sqrt{\frac{1}{m}}\big((\sqrt{2M_t'} + \sqrt{2M_t^*})^2 + (\sqrt{2H_t'} + \sqrt{2H_t^*})^2\big)\label{Plus_E1:S7}\\
    &\leq 4M_t' + 4M_t^* + \sqrt{\frac{1}{m}}\big((4M_t' + 4M_t^*) + (4H_t' + 4H_t^*)\big)\label{Plus_E1:S8}\\
    &= \left(4 + 4\sqrt{\frac{1}{m}}\right)M_t' + \left(4 + 4\sqrt{\frac{1}{m}}\right)M_t^* + 4\sqrt{\frac{1}{m}}H_t' + 4\sqrt{\frac{1}{m}}H_t^*,\nonumber
    \end{align}
\end{subequations}
%\end{equation}
where we use the triangle inequality in line \eqref{Plus_E1:S3}; the generalized mean inequality in lines \eqref{Plus_E1:S5}, \eqref{Plus_E1:S7} and \eqref{Plus_E1:S8} and inequality \eqref{OBDPlusE1} in line \eqref{Plus_E1:S6}.

Using the OBD's balance condition $M_t' = \gamma H_t'$ and the assumption $H_t' \leq H_t^*$ based on inequality \eqref{Plus_E1}, we have
\begin{equation*}
    \begin{aligned}
    \frac{1}{\eta}\Delta \phi' &\leq \left(4 + 4\sqrt{\frac{1}{m}}\right)\gamma H_t' + \left(4 + 4\sqrt{\frac{1}{m}}\right)M_t^* + 4\sqrt{\frac{1}{m}}H_t' + 4\sqrt{\frac{1}{m}}H_t^*\\
    &\leq \left(4 + 4\sqrt{\frac{1}{m}}\right)\gamma H_t^* + \left(4 + 4\sqrt{\frac{1}{m}}\right)M_t^* + 8\sqrt{\frac{1}{m}}H_t^*.
    \end{aligned}
\end{equation*}
Notice that by the triangle inequality and the generalized mean inequality, we have that 
\[M_t = \frac{1}{2}\norm{x_t - x_{t-1}}^2 \leq \frac{1}{2}(\norm{x_t' - x_{t-1}} + \norm{x_t - x_t'})^2 \leq \frac{1}{2}(2\norm{x_t' - x_{t-1}}^2 + 2\norm{x_t - x_t'}^2).\]
Remember that since $\mu = 1$, we have $\norm{x_t - x_t'}^2 = m\norm{x_t' - v_t}^2$. Using this fact, we derive the following bound on $H_t + M_t + \Delta\phi$:
\begin{subequations}\label{Plus_E2}
    \begin{align}
    H_t + M_t + \Delta \phi \leq{}& H_t' + \frac{1}{2}\left(2\norm{x_t' - x_{t-1}}^2 + 2\norm{x_t - x_t'}^2\right) \nonumber\\
    &+ \eta (\norm{x_t - x_t^*}^2 - \norm{x_t' - x_t^*}^2) + \Delta \phi'\nonumber\\
    \leq{}& H_t' + \big(2M_t' + m \norm{x_t' - v_t}^2\big) \nonumber\\
    &+ \left( \eta\left(1 + \frac{1}{\sqrt{m}}\right)\norm{x_t - x_t'}^2 + \eta \sqrt{m}\norm{x_t' - x_t^*}^2\right) + \Delta \phi'\label{Plus_E2:S2}\\
    \leq{}& H_t' + \big(2M_t' + m \norm{x_t' - v_t}^2\big) \nonumber\\
    &+ \left( \eta\left(1 + \frac{1}{\sqrt{m}}\right)m\norm{x_t' - v_t}^2 + \eta \sqrt{m}\left(2\norm{x_t' - v_t}^2 + 2\norm{x_t^* - v_t}^2\right) \right) \nonumber\\
    &+ \Delta \phi'\label{Plus_E2:S3}\\
    \leq{}& H_t' + (2M_t' + 2H_t') + \left(\eta\left(1 + \frac{1}{\sqrt{m}}\right)2H_t' + \eta \sqrt{m}\left(\frac{4H_t'}{m} + \frac{4H_t^*}{m}\right)\right) + \Delta \phi'\label{Plus_E2:S4}\\
    ={}& (3 + 2\eta + \frac{6\eta}{\sqrt{m}})H_t' + 2M_t' + 4\eta \frac{H_t^*}{\sqrt{m}} + \Delta \phi'\nonumber\\
    ={}& \Big(3 + 2\eta + \frac{6\eta}{\sqrt{m}} + 2\gamma \Big)H_t' + 4\eta \frac{H_t^*}{\sqrt{m}} + \Delta \phi'\nonumber\\
    \leq{}& \Big(3 + 2\eta + \frac{6\eta}{\sqrt{m}} + 2\gamma \Big)H_t^* + 4\eta \frac{H_t^*}{\sqrt{m}} + \Delta \phi'\label{Plus_E2:S7}\\
    ={}& \Big(3 + 2\eta + \frac{10\eta}{\sqrt{m}} + 2\gamma \Big)H_t^* + \Delta \phi', \nonumber
    \end{align}
\end{subequations}
where we use Lemma \ref{thm:quad} in line \eqref{Plus_E2:S2}; the triangle inequality in line \eqref{Plus_E2:S3}; $m$-strongly convexity of $f_t$ in line \eqref{Plus_E2:S4}; and the assumption $H_t' \leq H_t^*$ in line \eqref{Plus_E2:S7}. 

Combining inequalities \eqref{Plus_E1} and \eqref{Plus_E2}, we obtain
\begin{equation}
    \begin{aligned}
    H_t + M_t + \Delta \phi \leq \big( 3 + 2\eta + 2\gamma + 4\eta \gamma + \frac{\eta}{\sqrt{m}}(18 + 4\gamma)\big)H_t^* + \eta(4 + 4\sqrt{\frac{1}{m}})M_t^*.
    \end{aligned}
\end{equation}

\subsection*{Case 2: $H_t' \geq H_t^*$}
In this case, we prove that for any $x_t^*, x_{t-1}^* \in \mathbb{R}^d$, we have
\begin{equation}\label{OBDPlusE2}
    H_t + M_t + \Delta \phi \leq \frac{C}{\sqrt{m}}(H_t^* + M_t^*),
\end{equation}
for some positive constant $C$.

In the proof, we use $D_1, D_2, \cdots, D_d$ to represent the $d$ axes in the coordinate system.

\begin{figure}
    \begin{center}
        \includegraphics{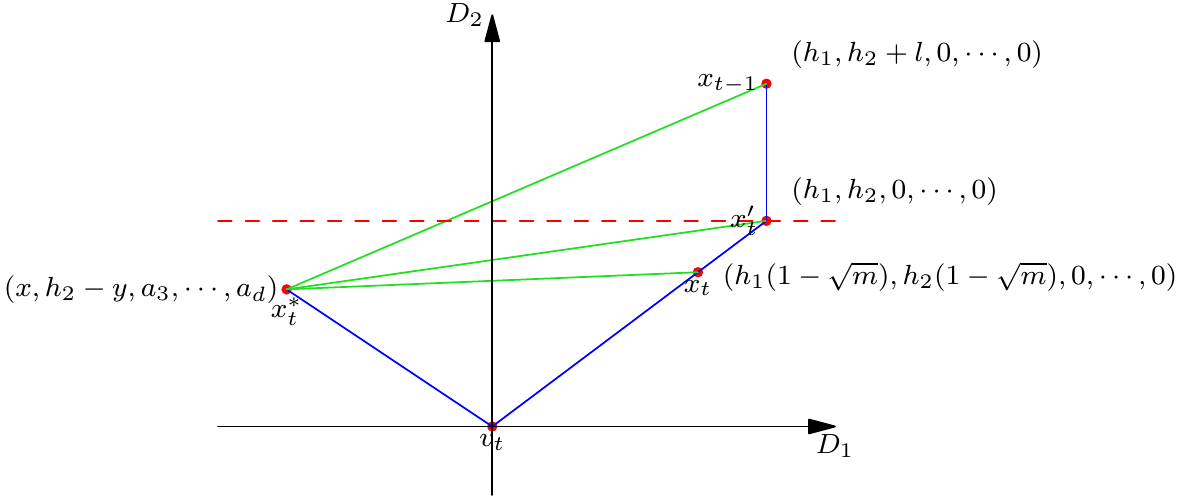}
    \end{center}
\caption{\emph{Starting at $x_{t-1}$, G-OBD first does projection on to the $H_t'$ level set (red dashed line) in the first phase. The projection point is $x_t'$. Then G-OBD moves toward the minimizer to obtain point $x_t$ in the second phase. Let the minimizer $v_t$ be the origin. Notice that the three points $x_{t-1}, x_t', v_t$ defines a plane $S$. Without loss of generality, we can let axis $D_2$ be parallel to line $x_t'x_{t-1}$; and let axis $D_1$ be parallel to the projection hyperplane.}}
\label{figure:OBDPlusCase2}
\end{figure}

As shown in Figure \ref{figure:OBDPlusCase2}, without loss of generality, let $v_t = (0, 0, \cdots, 0), x_t' = (h_1, h_2, 0, \cdots, 0)$ and $D_2 = h_2$ be the projection hyper plane, where $h_1 \geq 0, h_2 \geq 0$. 
And let $l = \norm{x_{t-1} - x_t'} > 0$. 
Note that our analysis still holds in one-dimension because we can restrict ourselves to the $D_2$ axis.

Then we know $x_{t-1} = (h_1, h_2 + l, 0, \cdots, 0), x_t = (h_1(1 - \sqrt{m}), h_2(1 - \sqrt{m}), 0, \cdots, 0)$. Since we know $x_t^*$ must lie below the projection hyper plane, we can let $x_t^* = (x, h_2 - y, a_3, a_4, \cdots, a_d)$, where $y > 0$.

Now we show that it suffices to prove the statement when $x_{t-1}^*$ is on the line segment $x_t^* x_{t-1}$. 
Suppose $x_{t-1}^*$ is not on the line segment $x_t^* x_{t-1}$.
If $\norm{x_{t-1}^* - x_{t-1}} > \norm{x_t^* - x_{t-1}}$, by moving $x_{t-1}^*$ to $x_t^*$, $\Delta \phi$ increases and $M_t^*$ decreases.
Otherwise, we can choose a point $K$ on line segment $x_t^* x_{t-1}$ such that $\norm{K - x_{t-1}} = \norm{x_{t-1}^* - x_{t-1}}$. By moving $x_{t-1}^*$ to $K$, $\Delta \phi$ remains unchanged and $M_t^*$ decreases.
Therefore if inequality \eqref{OBDPlusE2} holds for $x^*_{t-1}$ on the segment $x^*_t x_{t_1}$, then it must also hold for any other $x^*_{t-1} \in \mathbb{R}^d$.

Now we suppose $x_{t-1}^*$ is on the line segment $x_t^* x_{t-1}$, and $\norm{x_t^* - x_{t-1}^*} = \lambda \norm{x_t^* - x_{t-1}}$.

Recall that we set $\gamma = 1$, so $M_t' = \gamma H_t' = H_t'$. It follows that
\[ M_t \leq l^2 + \norm{x_t - x_t'}^2 = l^2 + m(h_1^2 + h_2^2) \leq l^2 + 2H_t' = l^2 + 2M_t' \leq 2l^2,\]
and
\[H_t \leq H_t' = M_t' = \frac{l^2}{2}.\]
We can separate $\Delta \phi$ into two parts:
\[\frac{\Delta \phi}{\eta} = \left(\norm{x_t^* - x_t}^2 - \norm{x_t^* - x_{t-1}}^2\right) + \left(\norm{x_t^* - x_{t-1}}^2 - \norm{x_{t-1}^* - x_{t-1}}^2\right).\]
For convenience, we define
\[\Delta \phi_1 := \left(\norm{x_t^* - x_t}^2 - \norm{x_t^* - x_{t-1}}^2\right),\]
and
\[\Delta \phi_2 := \left(\norm{x_t^* - x_{t-1}}^2 - \norm{x_{t-1}^* - x_{t-1}}^2\right).\]
We further notice that from the triangle inequality,
\begin{equation}\label{OBDPlusE3}
    \Delta \phi_2 \le (1 - (1 - \lambda)^2)\norm{x_t^* - x_{t-1}}^2 = \lambda (2 - \lambda)\left((x - h_1)^2 + (y + l)^2 + \sum_{i=3}^d a_i^2\right).
\end{equation}
Now we express $M_t^*$ and $H_t^*$ in terms of the variables we define, which are
\begin{equation}\label{OBDPlusE4}
    M_t^* = \frac{1}{2}(\lambda \norm{x_t^* - x_{t-1}})^2 = \frac{\lambda^2}{2} \left((x - h_1)^2 + (y + l)^2 + \sum_{i=3}^d a_i^2\right),
\end{equation}
and
\begin{equation}\label{OBDPlusE5}
    H_t^* \geq \frac{m}{2}\norm{x_t^* - v_t}^2 = \frac{m}{2}\left(x^2 + (h_2 - y)^2 + \sum_{i=3}^d a_i^2\right).
\end{equation}
We also expand $\Delta \phi_1$:
\begin{equation}\label{d2}
\begin{aligned}
    \Delta \phi_1 &= \norm{x_t^* - x_t}^2 - \norm{x_t^* - x_{t-1}}^2\\
    &= (x - h_1 + h_1\sqrt{m})^2 + (y - h_2\sqrt{m})^2 + \sum_{i=3}^d a_i^2 - (x - h_1)^2 - (y + l)^2 - \sum_{i=3}^d a_i^2\\
    &= \left((x - h_1 + h_1\sqrt{m})^2 - (x - h_1)^2\right) + \left((y - h_2\sqrt{m})^2 - (y + l)^2\right)\\
    &= h_1\sqrt{m}(2x - 2h_1 + h_1\sqrt{m}) - (h_2\sqrt{m} + l)(2y + l - h_2\sqrt{m})\\
    &= 2x h_1\sqrt{m} - 2h_1^2\sqrt{m} + h_1^2 m - 2y(h_2\sqrt{m} + l) - l^2 + h_2^2 m.
\end{aligned}
\end{equation}
Using the condition that $m \leq \frac{9}{64} < 1$, we derive the following bound:
\begin{equation}\label{d3}
\begin{aligned}
    \Delta \phi_1 &\le 2x h_1\sqrt{m} - 2h_1^2\sqrt{m} + h_1^2 m - 2y(h_2\sqrt{m} + l) - l^2 + h_2^2 \sqrt{m}\\
    &= 2x h_1\sqrt{m} - 2h_1^2\sqrt{m} + h_1^2 m + \sqrt{m}(h_2 - y)^2 - \sqrt{m}y^2 - 2yl - l^2.
\end{aligned}
\end{equation}
Substituting equations \eqref{OBDPlusE4} and \eqref{OBDPlusE5} into inequality \eqref{OBDPlusE2}, we know that it suffices to show that for some constant $C$,
\begin{equation}\label{OBDPlusT1Target1}
    M_t + H_t + \eta \Delta \phi_1 + \eta \Delta \phi_2 \leq \frac{C}{\sqrt{m}}\Bigg( \frac{m}{2}\Big(x^2 + (h_2 - y)^2 + \sum_{i=3}^d a_i^2\Big) + \frac{\lambda^2}{2} \Big((x - h_1)^2 + (y + l)^2 + \sum_{i=3}^d a_i^2\Big)\Bigg).
\end{equation}

\subsubsection*{Subcase 2.1: $\lambda \leq \frac{\sqrt{m}}{2}$}
We can bound equation \eqref{OBDPlusE3} as follows:
\begin{equation}\label{d1}
\begin{aligned}
    \Delta \phi_2 &= \lambda (2 - \lambda)\left((x - h_1)^2 + (y + l)^2 + \sum_{i=3}^d a_i^2\right)\\
    &\leq \sqrt{m}(x - h_1)^2 + \sqrt{m}(y + l)^2 + \sqrt{m}\sum_{i=3}^d a_i^2\\
    &= \sqrt{m}x^2 - 2\sqrt{m}x h_1 + \sqrt{m}h_1^2 + \sqrt{m}y^2 + 2\sqrt{m}yl + \sqrt{m}l^2 + \sqrt{m}\sum_{i=3}^d a_i^2.
\end{aligned}
\end{equation}
Summing inequalities \eqref{d3} and \eqref{d1}, we get
\begin{subequations}\label{Plus_Sub1_E1}
\begin{align}
\Delta \phi_1 + \Delta \phi_2 &\leq \sqrt{m}x^2 + (-h_1^2\sqrt{m} + h_1^2 m) + \sqrt{m}(h_2 - y)^2\nonumber\\
&\quad+ (2\sqrt{m}yl - 2yl) + (\sqrt{m}l^2 - l^2) + \sqrt{m}\sum_{i=3}^d a_i^2\nonumber\\
&\leq \sqrt{m}x^2 + 0 + \sqrt{m}(h_2 - y)^2 + 0 - \frac{5}{8}l^2 + \sqrt{m}\sum_{i=3}^d a_i^2\label{Plus_Sub1_E1:S2}\\
&\leq \sqrt{m}x^2 + \sqrt{m}(h_2 - y)^2 - \frac{5}{8}l^2 + \sqrt{m}\sum_{i=3}^d a_i^2,\nonumber
\end{align}
\end{subequations}
where we use the condition that $m \leq \frac{9}{64}$ in line \eqref{Plus_Sub1_E1:S2}. We further obtain
\begin{equation*}
\begin{aligned}
M_t + H_t + \eta(\Delta \phi_1 + \Delta \phi_2) &\leq 2l^2 + \frac{l^2}{2} + \eta \left( \sqrt{m}x^2 + \sqrt{m}(h_2 - y)^2 - \frac{5}{8}l^2 + \sqrt{m}\sum_{i=3}^d a_i^2 \right)\\
&= \frac{5l^2}{2} + 4 \left( \sqrt{m}x^2 + \sqrt{m}(h_2 - y)^2 - \frac{5}{8}l^2 + \sqrt{m}\sum_{i=3}^d a_i^2 \right)\\
&= 4 \left( \sqrt{m}x^2 + \sqrt{m}(h_2 - y)^2 + \sqrt{m}\sum_{i=3}^d a_i^2 \right).
\end{aligned}
\end{equation*}
Therefore, for $C \geq 8$, we have
\[ M_t + H_t + \eta \Delta \phi_1 + \eta \Delta \phi_2 \leq \frac{C}{\sqrt{m}}\Bigg( \frac{m}{2}\Big(x^2 + (h_2 - y)^2 + \sum_{i=3}^d a_i^2\Big) + \frac{\lambda^2}{2} \Big((x - h_1)^2 + (y + l)^2 + \sum_{i=3}^d a_i^2\Big) \Bigg),\]
which establishes inequality \eqref{OBDPlusT1Target1}.

\subsubsection*{Subcase 2.2: $\lambda \geq \frac{\sqrt{m}}{2}$}
Notice that when $C \geq 32$, we have
\[\frac{C}{2\sqrt{m}}\lambda^2 \geq \frac{16}{\sqrt{m}}\lambda^2 \geq \frac{16}{\sqrt{m}}\cdot \frac{\sqrt{m}}{2}\lambda = 8\lambda \geq 4\lambda (2 - \lambda) = \eta \lambda (2 - \lambda).\]
Substituting this inequality into equation \eqref{OBDPlusE3}, we know that for $C \geq 32$,
\begin{equation}\label{d4}
\begin{aligned}
\eta \Delta \phi_2 \leq \frac{C}{\sqrt{m}}\cdot \frac{\lambda^2}{2} \left((x - h_1)^2 + (y + l)^2 + \sum_{i=3}^d a_i^2\right).
\end{aligned}
\end{equation}
We can further bound inequality \eqref{d3}:
\begin{equation*}
\begin{aligned}
\Delta \phi_1 &\leq 2x h_1\sqrt{m} - 2h_1^2\sqrt{m} + h_1^2 m + \sqrt{m}(h_2 - y)^2 - \sqrt{m}y^2 - 2yl - l^2\\
&\leq \sqrt{m}x^2 + \sqrt{m}h_1^2 - 2h_1^2\sqrt{m} + h_1^2 m + \sqrt{m}(h_2 - y)^2 - l^2\\
&\leq \sqrt{m}x^2 + \sqrt{m}(h_2 - y)^2 - l^2,
\end{aligned}
\end{equation*}
where we apply the AM-GM inequality in step 2 and use the condition $m < 1$ in step 3.

Therefore we have
\begin{equation}\label{d5}
\begin{aligned}
H_t + M_t + \eta \Delta \phi_1 &\leq \frac{5l^2}{2} + 4(\sqrt{m}x^2 + \sqrt{m}(h_2 - y)^2 - l^2)\\
&\leq 4(\sqrt{m}x^2 + \sqrt{m}(h_2 - y)^2).
\end{aligned}
\end{equation}
Summing inequalities \eqref{d5} and \eqref{d4}, we yield that for $C\geq 32$,
\[ M_t + H_t + \eta \Delta \phi_1 + \eta \Delta \phi_2 \leq \frac{C}{\sqrt{m}}\Bigg( \frac{m}{2}\Big(x^2 + (h_2 - y)^2 + \sum_{i=3}^d a_i^2\Big) + \frac{\lambda^2}{2} \Big((x - h_1)^2 + (y + l)^2 + \sum_{i=3}^d a_i^2\Big) \Bigg),\]
which establishes inequality \eqref{OBDPlusT1Target1}.

Combining all cases above, we conclude that G-OBD is an $O(\frac{1}{\sqrt{m}})$-competitive algorithm.

%\end{proof}

\section{Proof of Theorem \ref{MainT1}}
To prove Theorem \ref{MainT1} we make use of Lemma \ref{DualNormT1} and \ref{GWGL1}.

%\begin{proof}[Proof of Theorem \ref{MainT1}]
Our approach is to make use of strong convexity and properties of Bregman Divergences to derive an inequality in the form of $H_t + M_t + \Delta \phi \leq C(H_t^* + M_t^*)$ for some positive constant $C$, where $\Delta \phi$ is the change in potential, which we will define later. The constant $C$ is then an upper bound for the competitive ratio.

To begin, recall that $h$ is assumed to be $\alpha-$strongly convex and $\beta-$strongly smooth with respect to norm $\norm{\cdot}$. Thus we can give a trivial bound on Bregman Divergence, namely
\begin{equation}\label{MainT1E0}
    \forall x, y, \frac{\alpha}{2}\norm{x - y}^2 \leq D_h(x || y) \leq \frac{\beta}{2}\norm{x - y}^2.
\end{equation}

Recall that the update rule in Algorithm \ref{alg:Breg} can be stated as:
\[ x_t = \argmin_x f_t(x) + \lambda_1 D_h(x || x_{t-1}) + \lambda_2 D_h(x || v_t). \]

Since the function $f_t(x) + \lambda_1 D_h(x || x_{t-1}) + \lambda_2 D_h(x || v_t)$ is strongly convex, the minimizer $x_t$ exists and is unique. Furthermore, it must satisfy the first-order condition
\[ \nabla f_t(x_t) + \lambda_1 (\nabla h(x_t) - \nabla h(x_{t-1})) + \lambda_2 (\nabla h(x_t) - \nabla h(v_t)) = 0. \]

Further, since $f_t(x)$ is $m$-strongly convex, we have
\begin{equation}\label{MainT1_1}
    \begin{aligned}
    f_t(x_t^*) &\geq f_t(x_t) + \langle \nabla f_t(x_t), x_t^* - x_t\rangle + \frac{m}{2}\norm{x_t^* - x_t}^2\\
    &= f_t(x_t) - \lambda_1\langle \nabla h(x_{t-1}) - \nabla h(x_t), x_t - x_t^* \rangle\\
    &\phantom{==}- \lambda_2 \langle \nabla h(v_t) - \nabla h(x_t), x_t - x_t^* \rangle + \frac{m}{2}\norm{x_t^* - x_t}^2.
    \end{aligned}
\end{equation}
Using Lemma \ref{GWGL1}, we obtain
\[ \langle \nabla h(x_{t-1}) - \nabla h(x_t), x_t - x_t^* \rangle = D_h(x_t^* || x_{t-1}) - D_h(x_t^* || x_t) - D_h(x_t || x_{t-1}), \]
and
\[ \langle \nabla h(v_t) - \nabla h(x_t), x_t - x_t^* \rangle = D_h(x_t^* || v_t) - D_h(x_t^* || x_t) - D_h(x_t || v_t).\]
Substituting the two above identities into inequality \eqref{MainT1_1}, we get
\begin{equation*}
\begin{aligned}
    &f_t(x_t) + \lambda_1 D_h(x_t || x_{t-1}) + \lambda_2 D_h(x_t || v_t) + (\lambda_1 + \lambda_2)D_h(x_t^* || x_t) + \frac{m}{2}\norm{x_t^* - x_t}^2\\
    \leq{}& f_t(x_t^*) + \lambda_1 D_h(x_t^* || x_{t-1}) + \lambda_2 D_h(x_t^* || v_t).
\end{aligned}
\end{equation*}
It follows that
\begin{equation}\label{MainT1E1}
    \begin{aligned}
    &f_t(x_t) + \lambda_1 D_h(x_t || x_{t-1}) + (\lambda_1 + \lambda_2)D_h(x_t^* || x_t) + \frac{m}{2}\norm{x_t^* - x_t}^2\\
    \leq{}& f_t(x_t^*) + \lambda_1 D_h(x_t^* || x_{t-1}) + \lambda_2 D_h(x_t^* || v_t).
    \end{aligned}
\end{equation}
We define the potential function as $\phi(x_t, x_t^*) = (\lambda_1 + \lambda_2) D_h(x_t^* || x_t) + \frac{m}{2}\norm{x_t^* - x_t}^2$, and let $\Delta \phi = \phi(x_t, x_t^*) - \phi(x_{t-1}, x_{t-1}^*)$. Applying this notation to inequality \eqref{MainT1E1} and rearranging terms, we obtain
\begin{equation}\label{MainT1E4}
    \begin{aligned}
    &H_t + \lambda_1 M_t + \Delta \phi\\
    \leq{}& \left(H_t^* + \lambda_2 D_h(x_t^* || v_t)\right) + \lambda_1 D_h(x_t^* || x_{t-1}) - (\lambda_1 + \lambda_2) D_h(x_{t-1}^* || x_{t-1}) - \frac{m}{2}\norm{x_{t-1}^* - x_{t-1}}^2.
    \end{aligned}
\end{equation}
Using Lemma \ref{DualNormT1}, we get
\begin{equation}\label{MainT1E2}
    \frac{1}{2\beta}\norm{\nabla h(x_{t-1}) - \nabla h(x_{t-1}^*)}_*^2 \leq D_h(x_{t-1}^* || x_{t-1}),
\end{equation}
and
\begin{equation}\label{MainT1E3}
    \norm{\nabla h(x_{t-1}) - \nabla h(x_{t-1}^*)}_* \leq \beta \norm{x_{t-1} - x_{t-1}^*}.
\end{equation}
Using Lemma \ref{GWGL1} and the two above inequalities, we get
\begin{subequations}\label{MainT1E5}
    \begin{align}
    &\lambda_1 D_h(x_t^* || x_{t-1}) - (\lambda_1 + \lambda_2) D_h(x_{t-1}^* || x_{t-1}) - \frac{m}{2}\norm{x_{t-1}^* - x_{t-1}}^2\nonumber\\
    ={}&\lambda_1 \big(D_h(x_t^* || x_{t-1}) - D_h(x_{t-1}^* || x_{t-1})\big) - \lambda_2 D_h(x_{t-1}^* || x_{t-1}) - \frac{m}{2}\norm{x_{t-1}^* - x_{t-1}}^2\label{MainT1E5:S2}\\
    ={}& \lambda_1 D_h(x_t^* || x_{t-1}^*) + \lambda_1 \langle \nabla h(x_{t-1}) - \nabla h(x_{t-1}^*), x_{t-1}^* - x_t^* \rangle\nonumber\\ 
    &- \lambda_2 D_h(x_{t-1}^* || x_{t-1}) - \frac{m}{2}\norm{x_{t-1}^* - x_{t-1}}^2\label{MainT1E5:S3}\\
    \leq{}& \lambda_1 D_h(x_t^* || x_{t-1}^*) + \lambda_1 \norm{\nabla h(x_{t-1}) - \nabla h(x_{t-1}^*)}_* \norm{x_{t-1}^* - x_t^*}\nonumber\\ 
    &- \lambda_2 D_h(x_{t-1}^* || x_{t-1}) - \frac{m}{2}\norm{x_{t-1}^* - x_{t-1}}^2\label{MainT1E5:S4}\\
    \leq{}& \lambda_1 D_h(x_t^* || x_{t-1}^*) + \frac{\lambda_2 \beta + m}{2\beta^2}\norm{\nabla h(x_{t-1}) - \nabla h(x_{t-1}^*)}_*^2 +  \frac{\lambda_1^2 \beta^2}{2(\lambda_2 \beta + m)}\norm{x_{t-1}^* - x_t^*}^2\nonumber\\
    &- \lambda_2 D_h(x_{t-1}^* || x_{t-1}) - \frac{m}{2}\norm{x_{t-1}^* - x_{t-1}}^2\nonumber\\
    ={}& \lambda_1 D_h(x_t^* || x_{t-1}^*) + \frac{\lambda_1^2\beta^2}{2(\lambda_2 \beta + m)}\norm{x_{t-1}^* - x_t^*}^2\nonumber\\ 
    &+ \left(\frac{\lambda_2}{2\beta}\norm{\nabla h(x_{t-1}) - \nabla h(x_{t-1}^*)}_*^2 - \lambda_2 D_h(x_{t-1}^* || x_{t-1}) \right)\nonumber\\
    &+ \left( \frac{m}{2\beta^2}\norm{\nabla h(x_{t-1}) - \nabla h(x_{t-1}^*)}_*^2 - \frac{m}{2}\norm{x_{t-1}^* - x_{t-1}}^2 \right)\label{MainT1E5:S6}\\
    \leq{}& \lambda_1 D_h(x_t^* || x_{t-1}^*) + \frac{\lambda_1^2\beta^2}{2(\lambda_2 \beta + m)}\norm{x_{t-1}^* - x_t^*}^2\nonumber\\
    \leq{}& \lambda_1\left(1 + \frac{\lambda_1\beta^2}{\alpha(\lambda_2 \beta + m)}\right)D_h(x_t^* || x_{t-1}^*),\label{MainT1E5:S7}
    \end{align}
\end{subequations}
where we use Lemma \ref{GWGL1} in line \eqref{MainT1E5:S2}; Cauchy-Schwartz inequality in line \eqref{MainT1E5:S3}; the AM-GM inequality in the line \eqref{MainT1E5:S4}; inequalities \eqref{MainT1E2} and \eqref{MainT1E3} in line \eqref{MainT1E5:S6}; 
and inequality \eqref{MainT1E0} in line \eqref{MainT1E5:S7}. 

Substituting inequality \eqref{MainT1E5} into inequality \eqref{MainT1E4}, we obtain
\[ H_t + \lambda_1 M_t + \Delta \phi \leq \big(H_t^* + \lambda_2 D_h(x_t^* || v_t)\big) + \lambda_1\left(1 + \frac{\lambda_1\beta^2}{\alpha(\lambda_2 \beta + m)}\right)M_t^*. \]
Using inequality \eqref{MainT1E0} and the fact that $f_t$ is $m$-strongly convex, we obtain
\begin{equation*}
    \begin{aligned}
    \lambda_2 D_h(x_t^* || v_t) &\leq \frac{\lambda_2\beta}{2}\norm{x_t^* - v_t}^2\leq \frac{\lambda_2\beta}{m}H_t^*.
    \end{aligned}
\end{equation*}

Therefore we have
\[ H_t + \lambda_1 M_t + \Delta \phi \leq (1 + \frac{\lambda_2\beta}{m})H_t^* + \lambda_1\left(1 + \frac{\lambda_1\beta^2}{\alpha(\lambda_2 \beta + m)}\right)M_t^*.\]
Since $0 < \lambda_1 \leq 1$, we have
\[H_t + M_t + \frac{1}{\lambda_1}\Delta \phi \leq \frac{H_t + \lambda_1 M_t + \Delta \phi}{\lambda_1} \leq \frac{m + \lambda_2 \beta}{m\lambda_1} H_t^* + \left(1 + \frac{\beta^2}{\alpha}\cdot \frac{\lambda_1}{\lambda_2 \beta + m}\right)M_t^*.\]

Theorem \ref{MainT1} follows from summing the above inequality over all timesteps $t$.

%\end{proof}
\section{R-OBD with Squared $\ell_2$ Norm}
% \haoyuan{TODO: explain that this is a special case of Bregman, but a direct proof allows us to relax the differentiability condition.}

When $h(x) = \frac{1}{2} \norm{x}_2^2$, the Bregman Divergence $D_h(x || y)$ is equal to the squared $\ell_2$ norm $\frac{1}{2}\norm{x - y}_2^2$.
Hence, setting $h(x) = \frac{1}{2} \norm{x}_2^2$ in Algorithm \ref{alg:Breg} gives us R-OBD in the squared $\ell_2$ setting.
In this section, we present a separate proof of Regularized OBD with squared $\ell_2$ norm, in order to remove the assumption that the hitting costs $\{f_t\}$ are differentiable.

\begin{theorem} \label{thm-optimal-cr}
Consider hitting cost functions that are $m$-strongly convex with respect to $\ell_2$ norm and movement costs given by $\frac{1}{2}\norm{x_t - x_{t-1}}_2^2$. There exists a choice $\lambda_1, \lambda_2$ such that the competitive ratio of Regularized OBD matches the lower bound proved in Theorem \ref{GeneralLowerT1}, i.e. the competitive ratio is at most $\frac{1}{2}\left(1 + \sqrt{1 + \frac{4}{m}}\right)$.
\end{theorem}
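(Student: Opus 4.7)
The plan is to essentially redo the proof of Theorem \ref{MainT1} with $h(x) = \tfrac{1}{2}\|x\|_2^2$ (so $\alpha = \beta = 1$ and $D_h(x\|y) = \tfrac{1}{2}\|x-y\|_2^2$) while carefully replacing every appearance of the gradient $\nabla f_t(x_t)$ by a subgradient from $\partial f_t(x_t)$. The argument of Theorem \ref{MainT1} used differentiability of $f_t$ only once, namely to write down the first-order optimality condition for the R-OBD update and to invoke $m$-strong convexity at $x_t$. Both steps remain valid with subgradients: since $f_t(x) + \tfrac{\lambda_1}{2}\|x-x_{t-1}\|_2^2 + \tfrac{\lambda_2}{2}\|x-v_t\|_2^2$ is strongly convex, its unique minimizer $x_t$ satisfies $0 \in \partial f_t(x_t) + \lambda_1(x_t - x_{t-1}) + \lambda_2(x_t - v_t)$, so there exists $g_t \in \partial f_t(x_t)$ with $g_t = \lambda_1(x_{t-1} - x_t) + \lambda_2(v_t - x_t)$; and strong convexity of $f_t$ yields $f_t(x_t^*) \geq f_t(x_t) + \langle g_t, x_t^* - x_t\rangle + \tfrac{m}{2}\|x_t^* - x_t\|_2^2$ for any such $g_t$.

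Next, I would substitute the subgradient identity into the strong convexity bound and apply the squared-$\ell_2$ instance of Lemma \ref{GWGL1} (which is just the parallelogram law) to rewrite the inner products as combinations of squared distances. This recovers exactly the inequality
\begin{equation*}
H_t + \lambda_1 M_t + \Delta\phi \leq \bigl(H_t^* + \tfrac{\lambda_2}{2}\|x_t^* - v_t\|_2^2\bigr) + \lambda_1 D_h(x_t^*\|x_{t-1}) - (\lambda_1+\lambda_2)D_h(x_{t-1}^*\|x_{t-1}) - \tfrac{m}{2}\|x_{t-1}^* - x_{t-1}\|_2^2,
\end{equation*}
with $\phi(x_t,x_t^*) = \tfrac{\lambda_1 + \lambda_2 + m}{2}\|x_t^* - x_t\|_2^2$, which is precisely the intermediate estimate in the proof of Theorem \ref{MainT1} specialized to the squared $\ell_2$ norm.

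From there the remaining steps are identical to Theorem \ref{MainT1}: apply Lemma \ref{GWGL1} a second time to split $D_h(x_t^*\|x_{t-1}) - D_h(x_{t-1}^*\|x_{t-1})$, use Cauchy--Schwarz followed by an AM--GM split to absorb the cross term into the $D_h(x_{t-1}^*\|x_{t-1})$ and $\tfrac{m}{2}\|x_{t-1}^* - x_{t-1}\|_2^2$ slack on the right, and use the strong-convexity bound $\tfrac{\lambda_2}{2}\|x_t^* - v_t\|_2^2 \leq \tfrac{\lambda_2}{m}H_t^*$. Dividing by $\lambda_1$ and telescoping over $t$ produces the per-round competitive bound $\max\bigl(\tfrac{m+\lambda_2}{\lambda_1 m},\, 1 + \tfrac{\lambda_1}{\lambda_2 + m}\bigr)$. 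Choosing $\lambda_1,\lambda_2$ so that $\tfrac{m+\lambda_2}{\lambda_1} = \tfrac{m}{2}\bigl(1 + \sqrt{1 + 4/m}\bigr)$ equalizes the two terms and yields the desired competitive ratio $\tfrac{1}{2}\bigl(1 + \sqrt{1 + 4/m}\bigr)$.

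The only potential subtlety is verifying that every invocation of a ``gradient'' in the original proof can be cleanly replaced by the \emph{same} fixed subgradient $g_t$ coming from the optimality condition; once one picks that particular $g_t$ at the outset of the round and uses it consistently in the strong-convexity inequality, the entire computation is algebraically unchanged because $h$ is smooth (so $\nabla h$ is still used without ambiguity) and $\partial f_t$ only appears through the optimality condition. Hence no differentiability of $f_t$ is required, and the lower bound of Theorem \ref{GeneralLowerT1} is matched exactly.
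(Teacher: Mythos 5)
Your proposal is correct, but it removes the differentiability assumption by a genuinely different mechanism than the paper. The paper's proof (Theorem \ref{MainT0} in the appendix) never touches a first-order condition at all: it observes that the full per-round objective $F_t(x) = f_t(x) + \frac{\lambda_1}{2}\norm{x - x_{t-1}}_2^2 + \frac{\lambda_2}{2}\norm{x - v_t}_2^2$ is $(m+\lambda_1+\lambda_2)$-strongly convex and applies the quadratic-growth property at a minimizer (Lemma \ref{MainL0}, itself proved directly from the convex-combination definition of strong convexity) to get $F_t(x_t^*) \geq F_t(x_t) + \frac{m+\lambda_1+\lambda_2}{2}\norm{x_t^* - x_t}_2^2$ in one stroke; the remainder is the triangle inequality plus AM--GM. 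You instead retain the structure of the proof of Theorem \ref{MainT1}: you extract a specific subgradient $g_t \in \partial f_t(x_t)$ from $0 \in \partial F_t(x_t)$ and run the three-point identity with that fixed $g_t$. This is sound --- the subdifferential sum rule applies since the quadratic terms are smooth and $f_t$ is finite-valued, and the subgradient form of $m$-strong convexity holds for every element of $\partial f_t(x_t)$ --- and both routes land on the same intermediate inequality and the same final bound after equalizing $\frac{m+\lambda_2}{\lambda_1 m}$ and $1 + \frac{\lambda_1}{\lambda_2+m}$. The paper's route is slightly more economical for the $\ell_2$ case (no subdifferential calculus to justify, no concern about which subgradient is used where); your route has the virtue of showing that the general Bregman proof of Theorem \ref{MainT1} itself only needs subdifferentiability of $f_t$, not differentiability, which is a marginally stronger takeaway than the paper states.
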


This result follows from the more general bound in Theorem \ref{MainT0} below, which describes the competitive ratio of Algorithm \ref{alg:Breg} as a function of $\lambda_1, \lambda_2$.

\begin{theorem}\label{MainT0}
Consider hitting cost functions that are $m$-strongly convex with respect to $\ell_2$ norm and movement costs given by $\frac{1}{2}\norm{x_t - x_{t-1}}_2^2$. Regularized-OBD (Algorithm \ref{alg:Breg} with $h(x) = \frac{1}{2}\norm{x}_2^2$) with parameters $1 \geq \lambda_1 > 0, \lambda_2 \geq 0$ has competitive ratio at most
\[ \max \left(\frac{m + \lambda_2}{\lambda_1}\cdot \frac{1}{m}, 1 + \frac{\lambda_1}{\lambda_2 + m}\right).\]
\end{theorem}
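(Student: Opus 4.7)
The plan is to mirror the potential function argument used in the proof of Theorem \ref{MainT1}, specialized to the Euclidean case where $D_h(x\|y)=\tfrac{1}{2}\|x-y\|_2^2$, but to replace the use of $\nabla f_t(x_t)$ by a subgradient so that differentiability of $\{f_t\}$ is not required. Concretely, I will show that there is a per-round inequality of the form $H_t+M_t+\tfrac{1}{\lambda_1}\Delta\phi \le \tfrac{m+\lambda_2}{m\lambda_1}H_t^*+\bigl(1+\tfrac{\lambda_1}{\lambda_2+m}\bigr)M_t^*$ for the potential $\phi_t=\tfrac{\lambda_1+\lambda_2+m}{2}\|x_t^*-x_t\|_2^2$; summing over $t$ and using $\phi_T\ge0$, $\phi_0=0$ gives the competitive ratio bound.

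First, because the objective in line 3 of Algorithm \ref{alg:Breg} is strongly convex, the minimizer $x_t$ exists uniquely and first-order optimality in the subdifferential sense yields a subgradient $g_t\in\partial f_t(x_t)$ with $g_t+\lambda_1(x_t-x_{t-1})+\lambda_2(x_t-v_t)=0$. Applying $m$-strong convexity of $f_t$ at $x_t^*$ with this $g_t$ gives $f_t(x_t^*)\ge f_t(x_t)+\langle g_t,x_t^*-x_t\rangle+\tfrac{m}{2}\|x_t^*-x_t\|_2^2$. Substituting the expression for $g_t$ and applying the three-point identity $\langle a-b,c-a\rangle=\tfrac{1}{2}\|c-b\|_2^2-\tfrac{1}{2}\|c-a\|_2^2-\tfrac{1}{2}\|a-b\|_2^2$ (with appropriate choices) yields
\begin{equation*}
H_t+\lambda_1 M_t+\tfrac{\lambda_1+\lambda_2+m}{2}\|x_t^*-x_t\|_2^2\le H_t^*+\tfrac{\lambda_1}{2}\|x_t^*-x_{t-1}\|_2^2+\tfrac{\lambda_2}{2}\|x_t^*-v_t\|_2^2.
\end{equation*}
Next, subtract $\phi_{t-1}=\tfrac{\lambda_1+\lambda_2+m}{2}\|x_{t-1}^*-x_{t-1}\|_2^2$ from both sides to introduce $\Delta\phi$ on the left. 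On the right we must control $\tfrac{\lambda_1}{2}\|x_t^*-x_{t-1}\|_2^2-\tfrac{\lambda_1+\lambda_2+m}{2}\|x_{t-1}^*-x_{t-1}\|_2^2$ in terms of $M_t^*=\tfrac{1}{2}\|x_t^*-x_{t-1}^*\|_2^2$. To do this, I apply the tunable inequality $\|x_t^*-x_{t-1}\|_2^2\le(1+a)\|x_{t-1}^*-x_{t-1}\|_2^2+(1+1/a)\|x_t^*-x_{t-1}^*\|_2^2$ and choose $a=(\lambda_2+m)/\lambda_1$; this is exactly the value that cancels the $\|x_{t-1}^*-x_{t-1}\|_2^2$ term since $\lambda_1(1+a)=\lambda_1+\lambda_2+m$, and it leaves a coefficient of $\lambda_1\bigl(1+\tfrac{\lambda_1}{\lambda_2+m}\bigr)$ multiplying $M_t^*$, which is precisely what appears inside the max.

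Finally, I bound $\tfrac{\lambda_2}{2}\|x_t^*-v_t\|_2^2$ using $m$-strong convexity of $f_t$ at $v_t=\arg\min f_t$: $f_t(x_t^*)\ge\tfrac{m}{2}\|x_t^*-v_t\|_2^2$, so $\tfrac{\lambda_2}{2}\|x_t^*-v_t\|_2^2\le\tfrac{\lambda_2}{m}H_t^*$. Combining gives $H_t+\lambda_1 M_t+\Delta\phi\le\bigl(1+\tfrac{\lambda_2}{m}\bigr)H_t^*+\lambda_1\bigl(1+\tfrac{\lambda_1}{\lambda_2+m}\bigr)M_t^*$; dividing by $\lambda_1\in(0,1]$ (which only weakens the left side since $\lambda_1\le1$) and taking the max of the two coefficients on the right yields the claimed bound after telescoping.

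The main obstacle is to pick the AM-GM parameter so that the $\|x_{t-1}^*-x_{t-1}\|_2^2$ term cancels \emph{exactly} with the potential-difference contribution, rather than being bounded lossily; once that calibration is found, the rest is routine algebra. A minor but important subtlety is the removal of differentiability: the original Bregman proof invoked $\nabla f_t(x_t)$ inside the first-order condition, and I need to verify that the Euclidean specialization goes through verbatim with a subgradient $g_t\in\partial f_t(x_t)$, which it does since strong convexity $f_t(y)\ge f_t(x)+\langle g,y-x\rangle+\tfrac{m}{2}\|y-x\|_2^2$ holds for every $g\in\partial f_t(x)$ without any smoothness assumption.
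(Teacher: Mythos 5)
Your proposal is correct and follows essentially the same route as the paper: the same potential $\phi_t=\tfrac{m+\lambda_1+\lambda_2}{2}\|x_t^*-x_t\|_2^2$, the same per-round inequality, the same AM-GM calibration yielding the coefficient $\lambda_1\bigl(1+\tfrac{\lambda_1}{\lambda_2+m}\bigr)$ on $M_t^*$, and the same division by $\lambda_1$ before telescoping. The only (minor) difference is in deriving the key inequality: the paper applies strong convexity of the entire surrogate objective $F_t$ at its minimizer (Lemma \ref{MainL0}), thereby avoiding any first-order condition, whereas you use the subdifferential optimality condition together with the subgradient form of strong convexity — both are valid and both dispense with differentiability of $f_t$.
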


Notice that Theorem \ref{thm-optimal-cr} follows immediately by setting $\frac{m + \lambda_2}{\lambda_1} =  \frac{m}{2}\left(1 + \sqrt{1 + \frac{4}{m}}\right)$ in Theorem \ref{MainT0}.

Before proving Theorem \ref{MainT0}, we first prove a teechnical lemma which gives a lower bound of the value of hitting cost as a function of the distance to the minimizer.
\begin{lemma}\label{MainL0}
If $f: \mathcal{X} \to \mathbb{R}$ is a $m$-strongly convex function with respect to some norm $\norm{\cdot}$, and $v$ is the minimizer of f (i.e. $v = \argmin_{x \in \mathcal{X}}f(x)$), then we have $\forall x \in \mathcal{X}$,
\[ f(x) \geq f(v) + \frac{m}{2}\norm{x - v}^2. \]
\end{lemma}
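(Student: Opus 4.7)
The plan is to derive the bound directly from the convex-combination form of $m$-strong convexity stated in the preliminaries, together with the fact that $v$ is a global minimizer.

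First I would fix an arbitrary $x \in \mathcal{X}$ and a parameter $\lambda \in (0,1)$. Since $\mathcal{X}$ is convex, the point $z_\lambda := \lambda x + (1-\lambda) v$ lies in $\mathcal{X}$, and the definition of $m$-strong convexity gives
\[ f(z_\lambda) \leq \lambda f(x) + (1-\lambda) f(v) - \frac{m}{2}\lambda(1-\lambda)\norm{x-v}^2. \]
Because $v$ is the minimizer, we have $f(z_\lambda) \geq f(v)$, so substituting this lower bound on the left and subtracting $(1-\lambda) f(v)$ from both sides yields
\[ \lambda f(v) \leq \lambda f(x) - \frac{m}{2}\lambda(1-\lambda)\norm{x-v}^2. \]

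Next, dividing through by $\lambda > 0$ and rearranging gives the family of inequalities
\[ f(x) \geq f(v) + \frac{m}{2}(1-\lambda)\norm{x-v}^2, \qquad \lambda \in (0,1). \]
Taking $\lambda \to 0^+$ in this inequality (the right-hand side is continuous in $\lambda$) produces the desired bound $f(x) \geq f(v) + \frac{m}{2}\norm{x - v}^2$.

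There is no real obstacle in this argument: it is a standard manipulation, and the only point worth checking is that $z_\lambda \in \mathcal{X}$ so the strong-convexity inequality applies, which is immediate from convexity of $\mathcal{X}$. Notably, this approach does not require $f$ to be differentiable, which is why it is usable in the squared $\ell_2$ analysis where we want to drop the differentiability assumption on $\{f_t\}$.
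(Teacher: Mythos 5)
Your proof is correct and follows essentially the same route as the paper's: apply the convex-combination form of $m$-strong convexity at $\lambda x + (1-\lambda)v$, use $f(v) \leq f(\lambda x + (1-\lambda)v)$, rearrange, and let $\lambda \to 0^+$. No gaps.
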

\begin{proof}%[Proof of Lemma \ref{MainL0}]
By the definition of $m$-strongly convex, we obtain that $\forall \alpha \in (0, 1)$,
\begin{equation}\label{MainL0E1}
    f(\alpha x + (1 - \alpha)v) \leq \alpha f(x) + (1 - \alpha)f(v) - \frac{m}{2}\alpha (1 - \alpha) \norm{x - v}^2.
\end{equation}
Notice that $f(v) \leq f(\alpha x + (1 - \alpha)v)$. Combining this with inequality \eqref{MainL0E1}, we obtain that $\forall \alpha \in (0, 1)$,
\[ f(v) \leq \alpha f(x) + (1 - \alpha)f(v) - \frac{m}{2}\alpha (1 - \alpha) \norm{x - v}^2. \]
Rearranging the terms, we observe that $\forall \alpha \in (0, 1)$,
\[ f(x) \geq f(v) + \frac{m}{2}(1 - \alpha)\norm{x - v}^2.\]
Therefore
\[ f(x) \geq \lim_{\alpha \to 0^+}\left( f(v) + \frac{m}{2}(1 - \alpha)\norm{x - v}^2\right) = f(v) + \frac{m}{2}\norm{x - v}^2.\]
\end{proof}

Now we return to the proof of Theorem \ref{MainT0}.
\begin{proof}[Proof of Theorem \ref{MainT0}]
In the proof, we use the property of strongly convex to derive an inequality in the form of $H_t + M_t + \Delta \phi \leq C(H_t^* + M_t^*)$, where $\Delta \phi$ is the change in potential and $C$ is an upper bound for the competitive ratio.

Throughout the proof, we use $\norm{\cdot}$ to denote $\ell_2$ norm.

Notice that when $h(x) = \frac{1}{2}\norm{x}^2$, the update rule in Algorithm \ref{alg:Breg} is:
\[ x_t = \argmin_x f_t(x) + \frac{\lambda_1}{2}\norm{x - x_{t-1}}^2 + \frac{\lambda_2}{2}\norm{x - v_t}^2.\]

For convenience, we define
\[F_t(x) = f_t(x) + \frac{\lambda_1}{2}\norm{x - x_{t-1}}^2 + \frac{\lambda_2}{2}\norm{x - v_t}^2.\]
Since $f_t(x)$ is $m$-strongly convex,$\frac{\lambda_1}{2}\norm{x - x_{t-1}}^2$ is $\lambda_1$-strongly convex, and $\frac{\lambda_2}{2}\norm{x - v_t}^2$ is $\lambda_2$-strongly convex, $F_t(x)$ is $(m + \lambda_1 + \lambda_2)-$strongly convex. Since $x_t = \argmin_x F_t(x)$, by Lemma \ref{MainL0}, we obtain
\[F_t(x_t^*) \geq F_t(x_t) + \frac{m + \lambda_1 + \lambda_2}{2}\norm{x_t^* - x_t}^2,\]
which implies
\begin{equation}\label{MainT0E0}
\begin{aligned}
& H_t + \lambda_1 M_t + \frac{m + \lambda_1 + \lambda_2}{2}\norm{x_t^* - x_t}^2 \\
\le{}& H_t + \lambda_1 M_t + \frac{\lambda_2}{2}\norm{x_t - v_t}^2 + \frac{m + \lambda_1 + \lambda_2}{2}\norm{x_t^* - x_t}^2 \\
\le{}& H_t^* + \frac{\lambda_1}{2}\norm{x_t^* - x_{t-1}}^2 + \frac{\lambda_2}{2}\norm{x_t^* - v_t}^2.
\end{aligned}
\end{equation}

We define the potential function as $\phi(x_t, x_t^*) = \frac{m + \lambda_1 + \lambda_2}{2}\norm{x_t^* - x_t}^2$ and $\Delta \phi = \phi(x_t, x_t^*) - \phi(x_{t-1}, x_{t-1}^*)$. We then can rewrite inequality \eqref{MainT0E0} as
\begin{equation}\label{T0E1}
    H_t + \lambda_1 M_t + \Delta \phi \leq \left(H_t^* + \frac{\lambda_2}{2}\norm{x_t^* - v_t}^2\right) + \frac{\lambda_1}{2}\norm{x_t^* - x_{t-1}}^2 - \frac{m + \lambda_1 + \lambda_2}{2}\norm{x_{t-1}^* - x_{t-1}}^2.
\end{equation}
Additionally
\begin{subequations}\label{MainT0E1}
    \begin{align}
    &\frac{\lambda_1}{2}\norm{x_t^* - x_{t-1}}^2 - \frac{m + \lambda_1 + \lambda_2}{2}\norm{x_{t-1}^* - x_{t-1}}^2\nonumber\\
    \leq{}& \frac{\lambda_1}{2}\left(\norm{x_t^* - x_{t-1}^*} + \norm{x_{t-1}^* - x_{t-1}}\right)^2 - \frac{m + \lambda_1 + \lambda_2}{2}\norm{x_{t-1}^* - x_{t-1}}^2\label{MainT0E1:S2}\\
    ={}& \frac{\lambda_1}{2}\norm{x_t^* - x_{t-1}^*}^2 + \lambda_1\norm{x_t^* - x_{t-1}^*}\cdot \norm{x_{t-1}^* - x_{t-1}} - \frac{m + \lambda_2}{2}\norm{x_{t-1}^* - x_{t-1}}^2\nonumber\\
    \leq{}& \frac{\lambda_1}{2}\norm{x_t^* - x_{t-1}^*}^2 + \frac{\lambda_1^2}{2(m + \lambda_2)}\norm{x_t^* - x_{t-1}^*}^2 + \frac{m + \lambda_2}{2}\norm{x_{t-1}^* - x_{t-1}}^2\nonumber\\
    &- \frac{m + \lambda_2}{2}\norm{x_{t-1}^* - x_{t-1}}^2\label{MainT0E1:S4}\\
    ={}& \frac{\lambda_1(\lambda_1 + \lambda_2 + m)}{2(\lambda_2 + m)}\norm{x_{t-1}^* - x_{t-1}^*}^2\nonumber\\
    ={}& \lambda_1\left(1 + \frac{\lambda_1}{\lambda_2 + m}\right)M_t^*,\nonumber
    \end{align}
\end{subequations}
where we apply the triangle inequality in line \eqref{MainT0E1:S2} and AM-GM in line \eqref{MainT0E1:S4}.

Combining inequalities \eqref{T0E1} and \eqref{MainT0E1}, we obtain
\begin{equation}\label{T0E2}
    H_t + \lambda_1 M_t + \Delta \phi \leq \left(H_t^* + \frac{\lambda_2}{2}\norm{x_t^* - v_t}^2\right) + \lambda_1\left(1 + \frac{\lambda_1}{\lambda_2 + m}\right)M_t^*.
\end{equation}
And since $f_t(x)$ is $m$-strongly convex, we have
\[ \frac{\lambda_2}{2}\norm{x_t^* - v_t}^2 \leq \frac{\lambda_2}{m}H_t^*.\]
Substituting the above identity into inequality \eqref{T0E2} yields
\begin{equation}\label{T0E3}
    H_t + \lambda_1 M_t + \Delta \phi \leq \frac{m + \lambda_2}{m}H_t^* + \lambda_1\left(1 + \frac{\lambda_1}{m + \lambda_2}\right)M_t^*.
\end{equation}
Using inequality \eqref{T0E3}, we obtain
\begin{equation*}\label{T0E4}
    H_t + M_t + \frac{1}{\lambda_1}\Delta \phi \leq \frac{H_t + \lambda_1 M_t + \Delta \phi}{\lambda_1} \leq \frac{m + \lambda_2}{\lambda_1 m}H_t^* + \left(1 + \frac{\lambda_1}{m + \lambda_2}\right)M_t^*.
\end{equation*}

Theorem \ref{MainT0} follows from summing the above inequality over all timesteps $t$.
\end{proof}

\begin{comment}
Notice that R-OBD (Algorithm \ref{alg:Breg}) with $h(x) = \frac{1}{2}\norm{x}_2^2$ is equivalent to Algorithm \ref{alg:L2Quadratic}.

\begin{algorithm}
\caption{Regularized OBD: $\ell_2$ quadratic setting}\label{alg:L2Quadratic}
\begin{algorithmic}[1]
\Procedure{R-OBD}{$f_t, x_{t-1}$}\Comment{Procedure to select $x_t$}
\State $v_t \gets \argmin_x f_t(x)$
\State $x_t \gets \argmin_x f_t(x) + \frac{\lambda_1}{2}\norm{x - x_{t-1}}_2^2 + \frac{\lambda_2}{2}\norm{x - v_t}_2^2$
\State \textbf{return} $x_t$
\EndProcedure
\end{algorithmic}
\end{algorithm}
\end{comment}
\section{Proof of Theorem \ref{WG_tradeT1}}
%\begin{proof}[Proof of Theorem \ref{WG_tradeT1}]
In this proof, we construct counterexamples for two separate cases, based on whether $\lambda_1$ is larger or smaller than $m$. Recall that $\lambda_2=0$ throughout the proof.

\subsubsection*{Case 1: $\lambda_1 > m$}
In this case, we show the competitive ratio can be unbounded by proposing a series of identical hitting cost functions on the real number line. We construct a hitting cost function $f$ with minimizer $v$ so that there exists a fixed point $K \not = v$ (i.e. when $x_{t-1} = K$ and $f_t = f$, the algorithm selects $x_t = x_{t-1}$). Since R-OBD is independent of timestep $t$, we can propose $f_t = f$ for $t = 1, 2, \cdots, T$ and let $x_0 = K$. In this scenario, the total cost of R-OBD grows linearly in $T$. However, by choosing $x_1 = x_2 = \cdots = x_T = v$, the total cost incurred by the offline adversary is a constant. Therefore the competitive ratio of R-OBD will be unbounded.

Specifically, consider the hitting cost function
\[ f(x) = \begin{cases}
\frac{m}{2}\left(1 - (x + 1)^2\right) & -1\leq x\leq 0\\
\frac{m}{2}x^2 & \text{ otherwise}
\end{cases}.\]
Suppose $x_{t-1} = -1$, then R-OBD will choose $x_t$ such that
\[x_t = \argmin_x f(x) + \frac{\lambda_1}{2}(x + 1)^2.\]
Notice that
\[f(x) + \frac{\lambda_1}{2}(x + 1)^2 = \begin{cases}
\frac{m}{2} + \frac{\lambda_1 - m}{2}(x + 1)^2 & -1\leq x\leq 0\\
\frac{m}{2}x^2 + \frac{\lambda_1}{2}(x+1)^2 & \text{ otherwise}
\end{cases}.\]
Since $\lambda_1 > m$, we see that the quantity above is $\ge \frac{m}{2}$ for all real $x$, where equality only holds when $x = -1$.
It follows that $x_t = x_{t-1} = -1 \not = 0 = v$. Thus $K = -1$ is a fixed point satisfying the requirements described as above.

\subsubsection*{Case 2: $\lambda_1 \leq m$}
We consider a situation such that the R-OBD algorithm moves far away from the starting point, incurring significant movement cost, whereas the offline adversary could pay relatively little cost by staying at the starting point. More specifically, suppose the starting point $x_0 = 0$ and the first hitting cost function is $f_1(x) = \frac{m}{2}(1 - x)^2$. 
Consider an adversary which chooses $x_0 = x_1 = \cdots = x_T$. 
The cost incurred by the adversary is
\[ cost(ADV) = \frac{m}{2}. \]
Using the update rule, the R-OBD algorithm chooses
\[ x_1 = \argmin_x \frac{m}{2}(1 - x)^2 + \frac{\lambda_1}{2}x^2 = \frac{m}{m + \lambda_1} \geq \frac{1}{2}. \]
The movement cost incurred by R-OBD is at least
\[ cost(ALG) \ge M_1 = \frac{1}{2}x_1^2 \geq \frac{1}{8}. \]
Thus the competitive ratio is at least
\[ \frac{cost(ALG)}{cost(ADV)} \geq \frac{1}{4m}. \]

Theorem \ref{WG_tradeT1} follows from combining these two cases.
%\end{proof}
\section{Proof of Theorem \ref{R-OBD-RegretT1}}
%Now we return to the proof of Theorem \ref{R-OBD-RegretT1}.

%\begin{proof}[Proof of Theorem \ref{R-OBD-RegretT1}]
%We use a similar argument as in the proof of Theorem 10 in \cite{chen2018smoothed}; this approach was also used in \cite{blum1992decomposition} and \cite{blum2000line} \gautam{what are the right citations?}. 
Let $\{x^L_t\}$ be the sequence of points achieving the $L$-constrained offline optimal .
We first prove an upper bound on the difference of hitting costs $f_t(x_t) - f_t(x_t^L)$, and then use this bound to prove a $O\left(G\sqrt{TL}\right)$ upper bound on the regret $\sum_{t=1}^T \left(f_t(x_t) - f_t(x_t^L) + c(x_t, x_{t-1}) \right) - \sum_{t=1}^T c(x_t^L, x_{t-1}^L)$.

Since the function $f_t(x) + \lambda_1 D_h(x || x_{t-1}) + \lambda_2 D_h(x || v_t)$ is strongly convex, it has a unique minimizer, at which point the gradient vanishes. 
This is the point $x_t$ which Algorithm \ref{alg:Breg} picks in round $t$. We can rearrange the vanishing gradient condition to obtain
\[ \nabla f_t(x_t) = \lambda_1 \left( \nabla h(x_{t-1}) - \nabla h(x_t)\right) + \lambda_2 \left( \nabla h(v_t) - \nabla h(x_t)\right). \]

Therefore by Lemma \ref{GWGL1}, we have
\begin{equation}\label{GWGRegret0}
    \begin{aligned}
    \langle \nabla f_t(x_t), x_t - x_t^L \rangle &= \lambda_1 \langle \nabla h(x_{t-1}) - \nabla h(x_t), x_t - x_t^L \rangle + \lambda_2 \langle \nabla h(v_t) - \nabla h(x_t), x_t - x_t^L \rangle\\
    &= \lambda_1 \left(D_h(x_t^L || x_{t-1}) - D_h(x_t^L || x_t) - D_h(x_t || x_{t-1})\right)\\
    &\quad+ \lambda_2 \left(D_h(x_t^L || v_t) - D_h(x_t^L || x_t) - D_h(x_t || v_t)\right).
    \end{aligned}
\end{equation}

Recall that $h$ is $\alpha-$strongly convex and $\beta-$strongly smooth with respect to the norm $\norm{\cdot}$, hence
\begin{equation}\label{GWGRegretE0}
    \forall x, y, \frac{\alpha}{2}\norm{x - y}^2 \leq D_h(x || y) \leq \frac{\beta}{2}\norm{x - y}^2.
\end{equation}
Therefore
\[ D_h(x_t^L || v_t) - D_h(x_t^L || x_t) - D_h(x_t || v_t) \leq D_h(x_t^L || v_t) \leq \frac{\beta}{2}\norm{x_t^L - v_t}^2 \leq \frac{\beta D^2}{2}. \]
In light of equation \eqref{GWGRegret0}, we obtain
\begin{equation} \label{Regret-inequality0}
\langle \nabla f_t(x_t), x_t - x_t^L \rangle \leq \lambda_1 \left(D_h(x_t^L || x_{t-1}) - D_h(x_t^L || x_t) - D_h(x_t || x_{t-1})\right) + \frac{\beta D^2}{2}\cdot \lambda_2.
\end{equation}

Let $q > 0$ be a parameter which we will pick later. For all $q > 0$, it holds that
\begin{subequations}\label{GWGRegret1}
    \begin{align}
    & f_t(x_t) - f_t(x_t^L) \nonumber\\
    \leq{}& \langle \nabla f_t(x_t), x_t - x_t^L \rangle - \frac{m}{2}\norm{x_t - x_t^L}^2\label{GWGRegret1:S1}\\
    \leq{}& \lambda_1 \left(D_h(x_t^L || x_{t-1}) - D_h(x_t^L || x_t) -  D_h(x_t || x_{t-1})\right) - \frac{m}{2}\norm{x_t - x_t^L}^2 + \frac{\beta D^2}{2}\cdot \lambda_2\label{GWGRegret1:S2}\\
    ={}& (\lambda_1 + q) \left(D_h(x_t^L || x_{t-1}) - D_h(x_t^L || x_t)\right) - \lambda_1 D_h(x_t || x_{t-1})\nonumber\\ 
    &- \left(q D_h(x_t^L || x_{t-1}) - q D_h(x_t^L || x_t) + \frac{m}{2}\norm{x_t - x_t^L}^2\right)\nonumber\\
    &+ \frac{\beta D^2}{2}\cdot \lambda_2.\nonumber
    \end{align}
\end{subequations}
where we apply strong convexity in line \eqref{GWGRegret1:S1}, and equation \eqref{Regret-inequality0} in line \eqref{GWGRegret1:S2}.
Using Lemma \ref{GWGL1}, we obtain
\begin{subequations}\label{GWGRegretE2}
    \begin{align}
    &q D_h(x_t^L || x_{t-1}) - q D_h(x_t^L || x_t) + \frac{m}{2}\norm{x_t - x_t^L}^2\nonumber\\
    ={}& q D_h(x_t || x_{t-1}) + q\langle \nabla h(x_{t-1}) - \nabla h(x_t), x_t - x_t^L\rangle + \frac{m}{2}\norm{x_t - x_t^L}^2\nonumber\\
    \geq{}& q D_h(x_t || x_{t-1}) - q\norm{\nabla h(x_{t-1}) - \nabla h(x_t)}_* \norm{x_t - x_t^L} + \frac{m}{2}\norm{x_t - x_t^L}^2\label{GWGRegretE2:S2}\\
    \geq{}& q D_h(x_t || x_{t-1}) - \left(\frac{q^2}{2m}\norm{\nabla h(x_{t-1}) - \nabla h(x_t)}_*^2 + \frac{m}{2}\norm{x_t - x_t^L}^2\right) + \frac{m}{2}\norm{x_t - x_t^L}^2\label{GWGRegretE2:S3}\\
    ={}& q D_h(x_t || x_{t-1}) - \frac{q^2}{2m}\norm{\nabla h(x_{t-1}) - \nabla h(x_t)}_*^2\nonumber\\
    \geq{}& q D_h(x_t || x_{t-1}) - \frac{\beta q^2}{m}D_h(x_t || x_{t-1})\label{GWGRegretE2:S5}\\
    ={}& \left(q - \frac{\beta q^2}{m}\right)D_h(x_t || x_{t-1}),\nonumber
    \end{align}
\end{subequations}
where we apply the Cauchy-Schwartz inequality in line \eqref{GWGRegretE2:S2}, the AM-GM inequality in line \eqref{GWGRegretE2:S3}, and Lemma \ref{DualNormT1} in line \eqref{GWGRegretE2:S5}. 

In order to maximize the coefficient $\left(q - \frac{\beta q^2}{m}\right)$, we set $q = \frac{m}{2\beta}$. By substituting inequality \eqref{GWGRegretE2} into inequality \eqref{GWGRegret1}, we obtain
\begin{equation}\label{GWGRegretE3}
\begin{aligned}
    &f_t(x_t) - f_t(x_t^L) \\
    \leq{}& \left(\lambda_1 + \frac{m}{2\beta}\right) \Big(D_h(x_t^L || x_{t-1}) - D_h(x_t^L || x_t)\Big) - \left(\lambda_1 + \frac{m}{4\beta}\right) D_h(x_t || x_{t-1}) + \frac{\beta D^2}{2}\cdot \lambda_2.
\end{aligned}
\end{equation}
Using the condition $\lambda_1 + \frac{m}{4\beta} \geq 1$, we observe that
\begin{equation}\label{GWGRegretE4}
    f_t(x_t) - f_t(x_t^L) + D_h(x_t || x_{t-1}) \left(\lambda_1 + \frac{m}{2\beta}\right) \Big(D_h(x_t^L || x_{t-1}) - D_h(x_t^L || x_t)\Big) + \frac{\beta D^2}{2}\cdot \lambda_2.
\end{equation}
Notice that
\begin{equation}\label{GWGRegretE5}
    \sum_{t=1}^T \norm{x_t^L - x_{t+1}^L} \leq \sqrt{T\left( \sum_{t=1}^T \norm{x_t^L - x_{t+1}^L}^2 \right)} \leq \sqrt{T\left( \sum_{t=1}^T \frac{2D_h(x_{t+1}^L || x_t^L)}{\alpha} \right)} \leq \sqrt{\frac{2TL}{\alpha}}.
\end{equation}
where we use the generalized mean inequality in the first step and $\alpha$-strong convexity of $h$ in the second step (cf. equation \eqref{GWGRegretE0}). By Lemma \ref{GWGL2}, we can give the following upper bound:
\begin{subequations}\label{GWGRegretE6}
    \begin{align}
    &\sum_{t=1}^T D_h(x_t^L || x_{t-1}) - D_h(x_t^L || x_t)\nonumber\\
    ={}& \sum_{t=1}^T \left( D_h(0 || x_{t-1}) - D_h(0 || x_t) + \langle \nabla h(x_t) - \nabla h(x_{t-1}), x_t^L\rangle \right)\nonumber\\
    ={}& D_h(0 || x_0) - D_h(0 || x_T) + \sum_{t=1}^{T-1}\langle \nabla h(x_t), x_t^L - x_{t+1}^L\rangle - \langle \nabla h(x_0), x_1^L\rangle + \langle \nabla h(x_T), x_T^L\rangle\nonumber\\
    \leq{}& \sum_{t=1}^T \langle \nabla h(x_t), x_t^L - x_{t+1}^L \rangle \label{GWGRegretE6:S3}\\
    \leq{}& \sum_{t=1}^T \norm{\nabla h(x_t)}_* \norm{x_t^L - x_{t+1}^L} \label{GWGRegretE6:S4}\\
    \leq{}& G\sum_{t=1}^T \norm{x_t^L - x_{t+1}^L}\nonumber\\
    \leq{}& G\sqrt{\frac{2TL}{\alpha}},\label{GWGRegretE6:S6}
    \end{align}
\end{subequations}
where we use the facts $x_0 = x_0^L = x_{T+1}^L = 0, \nabla h(0) = 0$ in line \eqref{GWGRegretE6:S3}, the Cauchy-Schwartz inequality in line \eqref{GWGRegretE6:S4}, and inequality \eqref{GWGRegretE5} in line \eqref{GWGRegretE6:S6}.

Therefore we obtain
\begin{subequations}\label{GWGRegretE7}
    \begin{align}
    &cost(OBD) - cost(OPT(L))\nonumber\\
    ={}& \sum_{t=1}^T\left(f_t(x_t) + D_h(x_t || x_{t-1})\right) - \left(f_t(x_t^L) + D_h(x_t^L || x_{t-1}^L)\right)\label{GWGRegretE7:S1}\\
    \leq{}& \left(\sum_{t=1}^Tf_t(x_t) - f_t(x_t^L) + D_h(x_t || x_{t-1})\right) - L\nonumber\\
    \leq{}& \left(\lambda_1 + \frac{m}{2\beta}\right)G\sqrt{\frac{2TL}{\alpha}} + T\cdot \frac{\beta D^2}{2}\cdot \lambda_2 - L,\label{GWGRegretE7:S3}
    \end{align}
\end{subequations}
where we use the definition of $OPT(L)$ in line \eqref{GWGRegretE7:S1}; inequalities \eqref{GWGRegretE4} and \eqref{GWGRegretE6} in line \eqref{GWGRegretE7:S3}.

Since by assumption we have $G < \infty$, $\lambda_2 = \eta(T, L, D, G) \leq \frac{K G}{D^2}\cdot \sqrt{\frac{L}{T}}$ for some constant $K$, by inequality \eqref{GWGRegretE7}, we obtain
\[ cost(OBD) - cost(OPT(L)) = O(G\sqrt{TL}), \]
which completes the proof.

%\end{proof}
\end{appendices}
\end{document}